\DeclareMathOperator{\KL}{KL}
\def\Var{\mathsf{Var}}
\def\ep{\varepsilon}
\def\mR{\mathbb{R}}
\def\R{\mathbb{R}}
\def\sV{\mathcal{V}}
\def\sH{\mathcal{H}}
\def\sP{\mathcal{P}}
\def\E{\mathbb{E}}
\def\sN{\mathcal{N}}
\def\sF{\mathcal{F}}
\def\sH{\mathcal{H}}
\def\W{\mathcal{W}}
\def\Law{\mathsf{Law}}
\def\sE{\mathcal{E}}
\def\FD{\mathsf{FD}}
\def\P{\mathbb{P}}
\def\vI{\mathbf{I}}
\def\de{\mathsf{DE}}
\def\se{\mathsf{SE}}
\def\le{\mathsf{LE}}
\def\LSI{\mathsf{LSI}}
\newtheorem{assumption}{Assumption}
\newtheorem{theorem}{Theorem}
\newtheorem{lemma}{Lemma}
\newtheorem{remark}{Remark}
\newcommand{\sR}{\mathcal{R}}
\newcommand{\iidsim}{\stackrel{\text{i.i.d.}}{\sim}}
\newcommand{\innerrho}[3]{\left<#2,#3\right>_{L_2(#1)}}
\newcommand{\normrhosq}[2]{L_2^2(#1;#2)}
\def\@makefnmark{\hbox{$^{\@thefnmark}$}}
\renewcommand\thefootnote{\ifcase\value{footnote}\or*\or\dagger\or\ddagger\or\S\or\P\or\|\fi}
\begin{document}

\begin{center}
{\Large Beyond Propagation of Chaos: A Stochastic \\ 
\vspace{0.3cm}
Algorithm for Mean Field Optimization}

\vspace{0.5cm}

Chandan Tankala$^*$ \hspace{1cm} Dheeraj M. Nagaraj$^\dagger$ \hspace{1cm} Anant Raj$^\ddagger$

\vspace{0.25cm}
March 14, 2025
\end{center}

\begin{abstract}%
Gradient flow in the 2-Wasserstein space is widely used to optimize functionals over probability distributions and is typically implemented using an interacting particle system with $n$ particles. Analyzing these algorithms requires showing (a) that the finite-particle system converges and/or (b) that the resultant empirical distribution of the particles closely approximates the optimal distribution (i.e., propagation of chaos). However, establishing efficient sufficient conditions can be challenging, as the finite particle system may produce heavily dependent random variables. 

In this work, we study the virtual particle stochastic approximation, originally introduced for Stein Variational Gradient Descent \cite{dasnagaraj2023}. This method can be viewed as a form of stochastic gradient descent in the Wasserstein space and can be implemented efficiently. In popular settings, we demonstrate that our algorithm's output converges to the optimal distribution under conditions similar to those for the infinite particle limit, and it produces i.i.d. samples without the need to explicitly establish propagation of chaos bounds.
\end{abstract}

\begin{keywords}%
  Mean Field Langevin Dynamics, Mean Field Optimization
\end{keywords}

\stepcounter{footnote}
\stepcounter{footnote}
\stepcounter{footnote}

\footnotetext[1]{University of Oregon, Eugene, chandant@uoregon.edu}
\footnotetext[2]{Google DeepMind, dheerajnagaraj@google.com}
\footnotetext[3]{Indian Institute of Science, anantraj@iisc.ac.in}

\newpage
\section{Introduction}
\label{sec:intro}

Optimizing a functional $\sE()$ over the space of all probability distributions over $\R^d$ with finite second moments ($\sP_{2}(\R^d)$) has gained immense interest in the recent years with applications in machine learning and Bayesian inference. A notable example is the training and analysis of neural networks in the infinite-width regime. While analyzing neural network training is challenging due to inherent non-linearity, the infinite-width limit—known as the mean-field limit—facilitates a more tractable analysis. In this regime, the optimization problem reduces to optimizing over the distribution of neuron weights to achieve accurate label prediction \cite{chizat2018global,mei2018mean,nitanda2022convex,suzuki2023uniform,nitanda2023improved}. As other important examples, the task of sampling can be re-cast as the optimization of the Kullback-Leibler (KL) divergence to the target distribution \cite{vempalawibisono,durmus2019analysis}, and variational inference involves constrained optimization over the space of distributions to fit the given data \cite{yao2022mean,lacker2023independent,liu2016stein,lambert2022variational,yan2024learning}. A prototypical example of such a functional is given by $\sE(\mu)= \int V ~d\mu + \int \log \mu ~d\mu$. More broadly, this work examines the following optimization objective. Given an energy functional  $\sF: \sP_2(\R^d) \to \R$, and regularization strength $\sigma > 0$, we consider functionals of the form $\sE: \sP_2(\R^d) \to \R$ defined as:
\begin{align}
    \sE(\mu) = \sF(\mu) + \frac{\sigma^2}{2}\sH(\mu) \,, \label{eq:mean_field_obj}
\end{align}
where $\sH(\mu)$ is the negative entropy defined as follows:
\begin{equation}
    \sH(\mu) = \begin{cases} \int \mu(x) \log \mu(x) dx \quad \text{if } \mu \ll \mathsf{Leb} \text{ and }d\mu(x) = \mu(x)dx \nonumber \\
    \infty \quad \text{otherwise} \,.
    \end{cases}
\end{equation}
A common approach to optimization over $\sP_2(\R^d)$
 is gradient flow with respect to the Wasserstein metric. The well-known Langevin dynamics was shown to be the gradient flow of the Kullback-Leibler (KL) divergence to the target distribution in the seminal work of \cite{jordan1998variational}. This framework can be extended to a broader class of functionals, including interaction energy and entropy \cite{mccann1997convexity,ambrosio2008metric}. While Langevin dynamics can be implemented algorithmically through time discretization of It\^o stochastic differential equations (SDE), the more general case—leading to McKean-Vlasov type SDEs \cite{carmona2015forward}, where the drift function depends on the distribution of the variable—is not straightforward. The popular computational approximation in this context is the particle approximation where $n$ instances (or particles) of the SDE are implemented computationally, with the distribution of the variables replaced by their empirical distribution. Theoretically, one way to show that this approximation optimizes the objective functional is to show that (a) the particle approximation converges rapidly to its $nd$ dimensional stationary distribution and (b) a sample from the $n$ particle stationary distribution gives a representative sample from the optimal distribution (called propagation of chaos). Propagation of chaos type of results show that whenever we pick any $k$ out of the $n$ particles, then the resulting $k$ particle distribution is close to the $k$ fold product distribution of the target with an error of $O(\mathsf{poly}(k/n))$ in metrics such as KL divergence.

To show the convergence of $n$ particle approximation, prior works such as \cite{chen2022uniform,chewi2024uniform,wang2024uniform} consider the finite-particle stationary distribution and establish Logarithmic Sobolev inequalities (LSI) for the $nd$ dimensional system with a constant independent of $n$. This can be used to establish computational complexity of standard sampling algorithms such as LMC, ULMC, and MALA, allowing us to obtain guarantees for a large class of sampling algorithms instead of specialized analysis for each. However, this can be technically involved and yield pessimistic estimates for the LSI which are worse than those established in the mean-field case (compare to \cite[Theorem 1]{suzuki2024mean}). Additionally, the complexity of sampling can be polynomial in the dimension $nd$, requiring more iterations with more particles (see \cite{vempalawibisono}).
Obtaining guarantees for the LSI constant $C_{\LSI, n}$ for the $n$-particle stationary distribution can be very hard and $C_{\LSI, n}$ could be much worse than the LSI constant $C_{\LSI}$ for the mean field stationary distribution $\pi$. For instance, in \cite{wang2024uniform}[Theorem 1], $C_{\LSI, n} = O(d C_\LSI^3)$ (along with additional assumptions), and in \cite{chewi2024uniform}[Theorem 2], $C_{\LSI, n} = O(e^d)$, while \cite{chewi2024uniform}[Equation 2.2], $C_{\LSI}$ does not depend on $d$.



\paragraph{Our Contribution}

We consider stochastic approximations of the mean field SDEs, which can be implemented exactly, when an unbiased estimator for the Wasserstein gradients is available (Algorithm~\ref{alg:main_algo}). The virtual particle stochastic approximation method, first proposed by \cite{dasnagaraj2023} for Stein Variational Gradient Descent, is the theoretical foundation of our analysis. We expand its applicability to sampling from the stationary distributions of McKean-Vlasov SDEs. This extension presents novel analytical challenges, particularly in quantifying the approximation of the non-linear stationary probability measure induced by the McKean-Vlasov SDE and the Brownian diffusion process, which presents almost everywhere non differentiable paths. The algorithm , which outputs $n$ particles for $T$ iterations/time steps, has the following properties:

\begin{enumerate}
    \item \textbf{Computational complexity} $O(nT + T^2)$, unlike the $O(n^2 T)$ complexity of standard particle methods. The \textbf{output particles are i.i.d.} from a distribution close to the minimizer of $\sE$, and does not require us to establish propagation of chaos separately. 
    \item Our general result in Theorem~\ref{thm:main_theorem} shows that the number of time steps $T$ required to sample from an $\epsilon$ optimal distribution is polynomial in the problem parameters, depends on the \textbf{isoperimetry constant of only the mean field dynamics} and is independent of $n$. As noted above, the isoperimetry bounds for the $n$ particle system, $C_{\mathsf{LSI},n}$, can be much worse than that of the mean field optimal distribution $C_{\mathsf{LSI}}$
    \item We illustrate our result in two important scenarios: pairwise interaction energy in the weak interaction regime (Equation~\eqref{eq:int_energy}) and the mean field neural network with square loss (Equation~\eqref{eq:mean_field}), we establish these results under standard assumptions.
\end{enumerate}

We note that prior works can deal with a much larger class of functionals \cite{chen2022uniform,wang2024uniform,nitanda2022convex}, without access to unbiased estimators for the gradients.

\subsection{Prior Work}

\paragraph{Stochastic Approximation and Sampling Algorithms:} 

Stochatic Gradient Langevin Dynamics \cite{welling2011bayesian} was introduced as a stochastic, computationally viable variant of Langevin Monte Carlo (LMC) and has been extensively studied in the literature \cite{raginsky2017non,kinoshita2022improved,dasnagarajraj2023}. This has been extended to other sampling algorithms and interacting particle systems \cite{huang2024faster,jin2020random}. Recently, virtual particle stochastic approximation \cite{dasnagaraj2023} was introduced in the context of Stein Variational Gradient Descent (SVGD) \cite{liu2016stein}, where the algorithm directly produced an unbiased estimate of the flow in the space of probability distributions, giving the first provably fast finite particle variant of SVGD. Note that such bounds have been obtained for the traditional SVGD algorithm since then \cite{balasubramanian2024improved}. Stochastic approximations have also been utilized to obtain speedup of sampling algorithm with randomized mid-point based time discretization \cite{kandasamy2024poisson,yu2023langevin,shen2019randomized}.

\paragraph{Propagation of Chaos:} 
The propagation of chaos problem for McKean-Vlasov SDEs was originally studied by \cite{sznitman1991topics}, which established convergence rates in the Wasserstein metric via coupling arguments. These bounds were first made uniform in time by \cite{malrieu2001logarithmic, malrieu2003convergence} in the quadratic Wasserstein and relative entropy metrics. In the case of pairwise interaction energy (Equation~\eqref{eq:int_energy}), these works obtain a bound on the error of order $O(k/n)$ in the squared quadratic Wasserstein distance and assume strong convexity for the \emph{external potential} and convexity for the \emph{interaction potential}. A uniform in time propagation of chaos was recently shown by \cite{chen2022uniform} by assuming convexity of the mean-field functional, as opposed to imposing convexity conditions on the interaction potential. The error in the squared quadratic Wasserstein error bound was improved to $O((k/n)^2)$ in \cite{lacker2023sharp} by assuming a uniform-in-$n$ log-Sobolev inequality for the stationary distribution of the $n$-particle system and using the recursive BBGKY proof technique. \cite{kook2024sampling} build on this proof technique and also obtain an error bound of order $O((k/n)^2)$ in the squared quadratic Wasserstein idstance and KL-divergence under a slightly different assumption on a ratio involving log-Sobolev inequality, smoothness and diffusion constants, which is referred to as the ``weak interaction'' condition. Recently, \cite{bou2023nonlinear} presented a non-linear Hamiltonian Monte Carlo algorithm and prove its rate of convergence in $L^1$-Wasserstein distance, without using the propagation of chaos arguments. 

\paragraph{Mean Field Optimization:} Mean-field analysis of neural networks emerged as a theoretical framework for understanding the optimization dynamics of wide neural networks. Early foundational work by \cite{nitanda2017stochastic, chizat2018global, mei2018mean} established that gradient flow on infinite width two-layer neural networks converges to the global minimum under appropriate conditions, demonstrating that we can successfully study neural networks in the infinite-dimensional space of parameter distributions by exploiting convexity. The connection with the mean-field Langevin dynamics arises with the addition of Gaussian noise to the gradient, corresponding to the entropy-regularized term in the objective function. \cite{nitanda2022convex, chizat2022mean} were among the first to establish exponential convergence rates under certain LSIs, which are verifiable in regularized risk minimization problems using two-layer neural networks. Subsequently, \cite{suzuki2023uniform,suzuki2024mean} study uniform in time propagation of chaos result in the context of mean-field neural networks where the main ingredient is the proximal Gibbs distribution, which also satisifies a LSI for convex losses with smooth and bounded activation functions. 

\paragraph{Wasserstein Gradient Flows:} These describe the evolution of probability measures over \( \mathcal{P}_2(\mathbb{R}^d) \) equipped with the Wasserstein-2 metric. Given an energy functional \( F: \mathcal{P}_2(\mathbb{R}^d) \to \mathbb{R} \), the gradient flow $\mu_t : \R^{+} \to \sP_2(\R^d)$ is formally the solution to the evolution equation 
\small
\[
\frac{d}{dt} \mu_t = -\text{grad}_W F(\mu_t),
\]
where \( \text{grad}_W F(\mu) \) is the Wasserstein gradient of \( F \).
\normalsize
More rigorously, the gradient flow $\mu_t \in \sP_2(\R^d)$ is a curve in $\sP_2(\R^d)$ which satisfies the following \emph{continuity equation} in the sense of distributions:
\[
    \frac{\partial \mu_t(x)}{\partial t} + \nabla_x \cdot (\mu_t(x) v_t(x)) = 0 \, ,
\]
\sloppy
where the `velocity field' $v_t(x): \mathbb{R}^+\times \mathbb{R}^d  \to \mathbb{R}^d = - \nabla_\W \sF(\mu_t)(x)$ and $\nabla_\W \sF(x,\mu_t)$ is often also called the Wasserstein gradient. Under certain regularity conditions, which hold in all the cases considered in this work, $\nabla_\W \sF(\mu_t)(x) = \nabla_x \frac{\delta F}{\delta \mu}(x,\mu_t) $. Here \( \frac{\delta F}{\delta \mu} \) denotes the first variation (Eulerian derivative) of \( F \). 
Important cases include the heat equation (when \( F \) is the entropy functional) and the Fokker-Planck equation (when \( F \) includes an external potential term). These flows provide a geometric perspective on evolution equations in probability spaces with applications in statistical physics and PDEs. For a rigorous treatment, see \cite{ambrosio2008metric,villani2021OptTransport}. 


\subsection{Notation}
For any measure, $\rho$ over $\R^d$ and functions $f,g :\R^d \to \R^d$. Let $\innerrho{\rho}{f}{g} := \int \rho(dx)\langle f(x),g(x)\rangle$ and $\normrhosq{\rho}{f}^2 := \int \rho(dx)\|f(x)\|^2$ whenever $f,g$ are square integrable with respect to $\rho$. For a vector field $f:\mR^d \to \mR^d$, its divergence is given by $\nabla \cdot f = \sum_{i = 1}^d \frac{\partial f}{\partial x_i}$, and for a function $f:\mR\to \mR$, the Laplacian is defined as $\Delta f:= \sum_{i=1}^d \frac{\partial^2 f}{\partial x_i^2}$.
Let $\sP_2(\mathbb{R}^d), \sP_{2,\mathsf{ac}}(\mR^d)$ denote the space of probability measures on $\mathbb{R}^d$ with finite second moment, and those that are absolutely continuous with respect to the Lebesgue measure. For $\mu \in \sP_2(\R^d)$, we let $\Var(\mu)$ denote the trace of its covariance. The Wasserstein distance $\W_2(\mu,\nu)$ between two probability measures $\mu, \nu \in \sP_2(\mR^d)$ is defined as:
\small
\[
    \W_2^2(\mu,\nu) := \inf_{\gamma \in \Gamma(\mu,\nu)} \int_{\mathbb{R}^d \times \mathbb{R}^d} \|X-Y\|^2 d\gamma(x,y) \,,
\]
\normalsize
where $\Gamma(\mu,\nu)$ is the set of all joint distributions over $\R^d\times\R^d$ such that the marginal distribution of $X$ is $\mu$ and of $Y$ is $\nu$. The Fisher Divergence of a probability measure $\mu \in \sP_2(\mR^d)$ with respect to $\nu\in \sP_2(\mR^d)$ is defined as: $\FD(\mu || \nu) := \int_{\mR^d} \mu(x) \big\| \nabla \log \frac{\mu(x)}{\nu(x)} \big\|^2dx $. The first variation of a functional $\sF$ at $\mu\in \sP_2(\mR^d)$ is denoted by $\delta_\mu \sF(\mu)(x)$ or just $\delta \sF(x,\mu)$, where $x\in \mR^d$, and is defined as the quantity which satisfies the equality:
\small
\[
    \frac{d\sF(\mu + \ep(\mu' - \mu))}{d\ep} \bigg|_{\ep = 0} = \int \delta_\mu \sF(\mu)(x)(\mu' - \mu)(x)dx \,.
\]
\normalsize
If the probability measures $\mu,\nu \in \sP_2(\mR^d)$ have densities $p,q$ respectively, then the total-variation distance between them is defined as: $
    \| \mu - \nu \|_{\mathsf{TV}} := \frac{1}{2} \int_{\mR^d} |p(x) - q(x)|dx$.

\section{Preliminaries and Problem Setup}

In this work, we consider the energy functional given in equation~\eqref{eq:mean_field_obj} as $\sE(\mu) = \sF(\mu) + \frac{\sigma^2}{2}\sH(\mu) \,,$ where $\sH(\mu)$ is the negative entropy.
Under mild conditions on $\sF$ \cite{ambrosio2008metric}, given $\mu_0 \in \sP_2(\R^d)$, the gradient flow of this functional with respect to the Wasserstein metric is the curve $\mu : [0,T] \to \sP_2(\R^d)$  given by the following non-linear Fokker-Planck Equation:
\small
\begin{equation}
    \frac{\partial \mu_t}{\partial t} - \nabla \cdot (\nabla_{\W}\sF(x,\mu_t)\mu_t) = \frac{\sigma^2}{2}\Delta \mu_t \,.
\end{equation}
\normalsize
\sloppy
Here $\nabla_{\W}\sF(\cdot,\cdot) : \R^d \times \sP_2(\R^d) \to \R^d $ is the Wasserstein gradient. In our applications, we can show that $\nabla_{\W}\sF = \nabla \delta \sF$, where $\delta \sF$ is the first variation of $\sF$.  This is considered non-linear since the drift depends on $\mu_t$. The SDE corresponding to this is called the McKean-Vlasov equation given by:
\small
\begin{equation}\label{eq:mckean_vlasov}\tag{MKV}
dX_t = -\nabla_\W\sF(X_t,\mu_t) +\sigma dB_t \,;\quad \mu_t := \Law(X_t) \,;\quad X_0 \sim \mu_0 \,.
\end{equation}
\normalsize

\eqref{eq:mckean_vlasov} can be seen as a sampling algorithm whenever it converges to a stationary distribution which is same as the global minimizer of $\sE(\mu)$. The drift in~\eqref{eq:mckean_vlasov} being dependent on $\mu_t$ makes it hard to approximate the dynamics algorithmically. Therefore particle approximation is used: let $X_0^{(1)},\dots,X_0^{(n)} \iidsim \mu_0$ be i.i.d. $\hat{\mu}_k $ be the empirical distribution of $(X_k^{(i)})_{i \in [n]}$. Let $\eta$ be the timestep size and $(Z_k^{(i)})_{k,i} \iidsim \sN(0,\vI)$. The particle approximation is then given by:
\small
\begin{equation}\label{eq:particle_mkv}\tag{pMKV}
    X_{k+1}^{(i)} = X_k^{(i)} - \eta\nabla_\W \sF(X_k^{(i)},\hat{\mu}_k) + \sigma \sqrt{\eta} Z_k^{(i)} \,;\quad \forall \ i\in [n] \,.
\end{equation}
\normalsize
\paragraph{Optimization, Dynamics and Stochastic Approximation:}
We motivate our setting via an analogy with gradient descent (GD) over $\R^d$, when the objective function is $F(x) = \E_{\theta \sim P}f(x,\theta)$ and only samples $\theta_1,\dots, \theta_N \iidsim P$ are available. One approach is to consider the minimization of empirical risk $\hat{F}(x) := \frac{1}{N}\sum_{i=1}^{N} f(x,\theta_i)$ via GD: $x_{t+1} = x_t - \alpha \nabla\hat{F}(x_t)$. However this can be a) computationally expensive and b) need not converge to $\arg\min_x F(x)$. This can lead to sub-optimal convergence of GD \cite{amir2021sgd}. GD's stochastic approximation, stochastic gradient descent (SGD), updates $x_{t+1} = x_t - \alpha \nabla f(x_t;z_t)$ for $t  = 1,\dots, N$. Since $\E[f(x_t;z_t)|x_t] = \nabla F(x_t)$, SGD with one pass over the data minimizes $F$ (not $\hat{F}$) \cite{polyak1992acceleration,chen2020finite,robbins1951stochastic,kushner2003stochastic,godichon2021non} with near optimal convergence in problems of interest, making it popular in machine learning.

The stochastic approximation algorithm we introduce below directly approximates the population level dynamics in~\eqref{eq:mckean_vlasov} directly whenever we have an estimator $\hat{G}(x,y,\xi) :\R^d \times \R^d \times \R^m \to \R^d $ such that $\E_{Y\sim \mu,\xi \sim \nu}\hat{G}(x,Y,\xi) = -\nabla_{\W}\sF(x,\mu)$ for every $\mu \in \sP_2(\R^d)$ and for a given $\nu \in \sP_2(\R^d)$. We refer to Section~\ref{subsec:examples} for a rich class of functionals $\sE$ with such an estimator.

\subsection{The Virtual Particle Stochastic Approximation}
Given a McKean-Vlasov type SDE of the form $dX_t = G(X_t,\mu_t)dt + \sigma dB_t$, where $\mu_t = \Law(X_t)$, $G: \R^d \times \sP_2(\R^d) \to \R^d$ is the drift and $B_t$ is the standard Brownian motion. The Euler-Maruyama discretization of this is given by:
$X_{k+1} = X_k + \eta G(X_k,\mu_k) + \sigma\sqrt{\eta}Z_k$.
Here $(Z_k)_k$ are i.i.d. $\sN(0,\vI)$ random variables, $\eta$ is the timestep size, and $\mu_k = \Law(X_k)$. However, this is not tractable since we do not know the distribution $\mu_k$. Thus, we introduce the tractable approximation, called the virtual particle stochastic approximation (Algorithm~\ref{alg:main_algo}), a generalization of VP-SVGD \cite{dasnagaraj2023}. Suppose there exists an estimator $\hat{G}: \R^d \times \R^d \times \R^m \to \R^d$ such that whenever $Y,\xi \sim \mu\times \nu$  then $\E[\hat{G}(x,Y,\xi)] = G(x,\mu)$ for every $x\in \R^d $, $\mu \in \sP_2(\R^d)$ and $\nu$ is a fixed, known distribution. Let the initial distribution be $\mu_0$. Fix number of timesteps $T$, number of desired samples $n$ and timestep size $\eta$.

\vspace{0.5cm}

\SetKwComment{Comment}{/* }{ */}
\begin{algorithm}[H]
\DontPrintSemicolon
\caption{Virtual Particle Stochastic Approximation}
\label{alg:main_algo}
\textbf{Data:} Time steps $T$, number of samples $n$, timestep size $\eta$. Initial Distribution $\mu_0$, distribution $\nu$, estimator $\hat{G}$ \;
\textbf{Result:} $X_T^{(1)},\dots,X_T^{(n)}$ \;

$X_0^{(1)},\dots,X_0^{(n)},Y_0^{(0)},\dots,Y_0^{(T)} \stackrel{\text{i.i.d.}}{\sim} \mu_0$ \;
$\xi_1,\dots,\xi_T \stackrel{\text{i.i.d.}}{\sim} \nu$ \;
$k\gets 0$ \;

\While{$k \leq T$}{
    \For{$i = 1,..,n$}{
        $X_{k+1}^{(i)} = X_k^{(i)}+ \eta \hat{G}(X^{(i)}_k,Y_k^{(k)},\xi_k) + \sigma \sqrt{\eta}Z_k^{(i)}$ \quad $Z_k^{(i)}\stackrel{\text{i.i.d.}}{\sim} \mathcal{N}(0,\mathbf{I})$ \;
    }
    \For{$j = k+1,...T,$}{
        $Y_{k+1}^{(j)} = Y_k^{(j)} + \eta \hat{G}(Y^{(j)}_k,Y^{(k)}_k,\xi_k) + \sigma \sqrt{\eta}W_k^{(j)}$ \quad $W_k^{(j)} \stackrel{\text{i.i.d.}}{\sim} \mathcal{N}(0,\mathbf{I})$ \; 
    }
}
\end{algorithm}

At timestep $k$, the particle $Y_k^{(k)}$ is used to estimate $\mu_k$ for all the ``real'' particles $X^{(i)}_k$, and the remaining ``virtual'' $Y_k^{(j)}$ are then discarded. Let $\sR_k$ be the sigma algebra of $Y_0^{(0)},\dots,Y_k^{(k)},\xi_0,\dots,\xi_k$. Then the following can be easily demonstrated:

\begin{lemma}
Conditioned on $\sR_{k-1}$, $X_k^{(i)}$,$Y_k^{(j)}$ for $i = 1,\dots, k$ and $j = k,\dots,T$ are i.i.d. 
\end{lemma}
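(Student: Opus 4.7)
The natural approach is induction on the timestep $k$, since the iteration at step $k$ is obtained by applying the same one-step update to each of the ``active'' particles $X^{(i)}$ and $Y^{(j)}$ ($j \geq k$), using only information already encoded in $\sR_{k-1}$.

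The base case $k=0$ is immediate: $\sR_{-1}$ is trivial (or we treat $\sR_0 = \sigma(Y_0^{(0)},\xi_0)$ and verify the claim at $k=1$ directly). By construction, $X_0^{(1)},\dots,X_0^{(n)},Y_0^{(0)},\dots,Y_0^{(T)}$ are i.i.d.\ draws from $\mu_0$, independent of the noise variables $Z_\cdot^{(\cdot)}, W_\cdot^{(\cdot)}$ and the ``direction samples'' $\xi_\cdot$.

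For the inductive step, assume that conditioned on $\sR_{k-2}$, the collection $\{X_{k-1}^{(i)}\}_{i=1}^n \cup \{Y_{k-1}^{(j)}\}_{j=k-1}^T$ is i.i.d. Since $\sR_{k-1} = \sigma(\sR_{k-2}, Y_{k-1}^{(k-1)}, \xi_{k-1})$ and $Y_{k-1}^{(k-1)}$ is, under the inductive hypothesis, one of the conditionally i.i.d.\ variables (and $\xi_{k-1}$ is drawn independently of everything), further conditioning on $Y_{k-1}^{(k-1)}$ and $\xi_{k-1}$ leaves the remaining collection $\{X_{k-1}^{(i)}\}_{i=1}^n \cup \{Y_{k-1}^{(j)}\}_{j=k}^T$ still conditionally i.i.d.\ (here I use that independence is preserved under conditioning on an independent variable). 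Next, observe that given $\sR_{k-1}$, the map $z \mapsto \hat{G}(z, Y_{k-1}^{(k-1)}, \xi_{k-1})$ is a fixed deterministic function, call it $g$. Hence both the $X$-update and the $Y$-update (for $j\geq k$) have the identical form $u \mapsto u + \eta g(u) + \sigma\sqrt{\eta}\,\zeta$ applied to each active particle with an independent Gaussian $\zeta$. Since $(Z_{k-1}^{(i)})_i$ and $(W_{k-1}^{(j)})_{j\geq k}$ are i.i.d.\ $\mathcal{N}(0,\vI)$ and independent of $\sR_{k-1}$ and of the particles at step $k-1$, the pairs (particle, noise) are conditionally i.i.d.; applying the same deterministic update $(u,\zeta)\mapsto u+\eta g(u)+\sigma\sqrt{\eta}\zeta$ to each pair preserves conditional i.i.d.-ness of the outputs. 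This yields the claim for step $k$.

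The only subtlety I anticipate is the bookkeeping around $Y_{k-1}^{(k-1)}$ becoming ``$\sR_{k-1}$-measurable'' while at the same time having been exchangeable with the other $Y_{k-1}^{(j)}$'s at the previous step; this is handled cleanly by the standard fact that if $(A_1,\dots,A_N)$ are i.i.d.\ given a $\sigma$-algebra $\sR$, then $(A_2,\dots,A_N)$ remain i.i.d.\ given $\sigma(\sR, A_1)$, with the same conditional distribution. Everything else is a direct verification from the algorithm's update rule and the independence of the fresh Gaussian innovations.
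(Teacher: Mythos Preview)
Your induction argument is correct and is exactly the natural way to establish this lemma; the paper itself omits the proof entirely, merely remarking that it ``can be easily demonstrated.'' Your handling of the one genuine subtlety---that further conditioning on the exchangeable coordinate $Y_{k-1}^{(k-1)}$ (and the independent $\xi_{k-1}$) preserves the conditional i.i.d.\ structure of the remaining particles---is the right observation, and the rest is straightforward bookkeeping from the identical form of the $X$- and $Y$-updates and the independence of the fresh Gaussian innovations.
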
 
Let $\mu_{k}|\sR_{k-1}$ be the law of $X_k^{(1)}$ conditioned on $\sR_{k-1}$. It is a random probability measure which is measurable with respect to $\sR_{k-1}$.

\paragraph{Witness Path:}In algorithm~\ref{alg:main_algo}, we call the diagonal trajectory $Y_{k}^{(k)}$ for $0\leq k \leq T$ as the `witness path'. Notice that given a `witness path, we can obtain a sample from $\mu_T|\sR_{T-1}$ in $T$ steps, without fixing $n$ beforehand. Therefore, storing the witness path yields an approximate sampling algorithm from the global minima $\pi$ of $\sE$ whenever the Markov process Equation~\eqref{eq:mckean_vlasov} converges to $\pi$. 

\paragraph{Computational Complexity:} Algorithm~\ref{alg:main_algo} produces $n$ i.i.d. samples from $\mu_T|\sR_{T-1}$, which is shown to converge to $\pi$ in Theorem~\ref{thm:main_theorem}. The computational complexity is $O(nT + T^2)$, avoiding the $O(n^2T)$ complexity incurred in the straight forward particle approximation \eqref{eq:particle_mkv}. Additionally Theorems~\ref{thm:main_theorem},~\ref{thm:main_pairwise} and~\ref{thm:main_mean_field} show that we can pick $T$ independent of the $n$ to ensure $\epsilon$ error.

\subsection{Examples}
\label{subsec:examples}
\paragraph{Example 1: Pairwise Interaction Energy}
Let $V,W: \mR^d \to \mR$ and $\mu\in \sP_{2, \mathsf{ac}}(\mR^d)$. In this case, $V$ is commonly referred to as the external potential and $W$ as the interaction potential. Let $W$ be even (i.e, $W(x) = W(-x)$). The functional $\sE$ in this case is defined as:
\small
\begin{equation}\label{eq:int_energy}
    \sE(\mu) := \int V(x)d\mu(x) + \tfrac{1}{2}\int \int W(x-y)d\mu(x)d\mu(y) + \tfrac{\sigma^2}{2}\sH(\mu) \,.
\end{equation}
\normalsize
Here the Wasserstein gradient flow gives the following McKean-Vlasov dynamics
$dX_t = -\nabla V(X_t)dt - (\nabla W \ast \mu_t)(X_t) + \sqrt{\sigma} dB_t$ where $\mu_t = \Law(X_t)$.
From \cite{ambrosiosavarenotes}[Proposition 4.13], for the potential energy functional $\sV(\mu) := \int V(x) \mu(dx)$, the Wasserstein gradient is given by $\nabla_\W \sV(\mu) = \nabla V$. Next, from \cite{ambrosiosavarenotes}[Theorem 4.19], for the interaction energy functional $\mathscr{W}(\mu) := \frac{1}{2}\int\int W(x-y)\mu(dx)\mu(dy)$, the Wasserstein gradient is given by
$\nabla_{\W}\mathscr{W}(\mu) = (\nabla W) \ast \mu$ if $(\nabla W) \ast \mu \in L^2(\mu;\mR^d)$. Finally, from \cite{ambrosiosavarenotes}[Theorem 4.16], for the entropy functional
$\sH(\mu) := \int \log \mu d\mu$, the Wasserstein gradient is given by $\nabla_{\W} \sH(\mu) = \nabla \log \mu$ if $\nabla \log \mu \in L^2(\mu;\,\mR^d)$. Thus,
\small
\begin{equation}
\label{Eq:IEWassGradient}
\nabla_{\W} \sE(\mu) = \nabla V + (\nabla W)\ast \mu + \tfrac{\sigma^2}{2}\nabla \log \mu \,.
\end{equation}
\normalsize
The unique minimizer of the functional $\sE$ satisfies a fixed point equation and is given in \cite{kook2024sampling}[Equation 1.1] as:
\small
\begin{equation} \label{Eq:IEstationarymeasure}
    \pi(x) \propto \exp\left( -\tfrac{2}{\sigma^2} V(x) - \tfrac{2}{\sigma^2} W \ast \pi(x) \right) \,.
\end{equation}
\normalsize

\paragraph{Example 2: Mean Field Neural Network}

Let $\mu\in \sP_{2,\mathsf{ac}}(\mR^d)$. Consider the activation function $h(x,z): \R^d\times\R^k \to \R$. We consider the two layer mean-field neural network to be $f(\mu;z) := \int h(x,z)d\mu(x)$. Given data $(Z,W)\sim P$, we consider the square loss:

\begin{equation}\label{eq:mean_field}
\sE(\mu) = \E_{(Z,W)}(f(\mu;Z)-W)^2 + \frac{\lambda}{2} \int \|x\|^2 d\mu(x) + \frac{\sigma^2}{2}\sH(\mu) \,.
\end{equation}
If we have samples $(z_1,w_1),\dots,(z_n,w_n)$, then we take $P$ to be the empirical distribution. From \cite{nitanda2022convex}[Section 2.3], the first variation of the functional $\sE$ defined above is:
\small
\[
    \delta \sE(x;\mu) = \E_{(Z,W)} \left[2 (f(\mu;Z)-W) h(x,Z)] + \frac{\lambda}{2} \|x\|^2 \right] + \tfrac{\sigma^2}{2}(\log \mu + 1) \,,
\]
\normalsize
and since $\nabla_\W \sE = \nabla \delta \sE$, we have:
\small
\begin{equation}
\label{Eq:MFNWassGradient}
    \nabla_\W \sE(\mu) = 2\E_{(Z,W)} \left[ (f(\mu;Z)-W) \nabla_x h(x,Z)] + \lambda x \right] + \tfrac{\sigma^2}{2}\nabla \log \mu \,.
\end{equation}
\normalsize
The unique minimizer of the functional $\sE$ is given in \cite{nitanda2022convex}[Equation 15] as the solution of the fixed point equation:
\small
\begin{equation}\label{eq:mf_optimal}
        \pi (x) \propto \exp\left( -\tfrac{2}{\sigma^2} \delta \sF(x,\pi) \right) \,.
\end{equation}
\normalsize

\section{Results}
In this section, we first establish a convergence theory for Algorithm \ref{alg:main_algo}. We begin with a key descent lemma (Lemma \ref{lem:descent_lemma}) that bounds the evolution of the energy functional by decomposing the error into discretization, stochastic, and linearization terms. Using this, we prove our main result (Theorem \ref{thm:main_theorem}) showing that Algorithm \ref{alg:main_algo} produces i.i.d. samples converging to the minimizer of the energy functional, with rates independent of the number of particles $n$. We then demonstrate how this framework applies to two important cases: pairwise interaction energy (Section \ref{subsec:pairwise}) and mean-field neural networks (Section \ref{subsec:mean_field}). In both cases, we verify the assumptions of Theorem \ref{thm:main_theorem} and establish explicit convergence rates while avoiding the need for separate propagation of chaos bounds.

\subsection{General Convergence:}
For some functional $\bar{\sF}:\sP_2(\R^d) \to \R$, let $\pi$ be the unique minimizer of the functional $\bar{\sF} + \frac{\sigma^2}{2}\sH$. Define $\bar{\sE}(\mu) := \bar{\sF}(\mu) + \frac{\sigma^2}{2}\sH(\mu) - \bar{\sF}(\pi) - \frac{\sigma^2}{2}\sH(\pi)$ (not necessarily $\sE$). The functional $\bar{\sE}$ is introduced, rather than simply analyzing $\sE$ since $\bar{\sE}$ can have better contraction properties. This is indeed the case with pairwise interaction energy where the KL functional to the minimizer $\pi$ has a contraction whenever $\pi$ satisifes an LSI. 

We consider Algorithm~\ref{alg:main_algo}, with $\hat{G}$ such that whenever $(Y,\xi) \sim \mu\times \nu$, we have:
$\E[\hat{G}(x,Y,\xi)] = -\nabla_{\W}\sF(x,\mu)$ for every $x \in \R^d$, $\mu \in \sP_2(\R^d)$. We do not assume $\sF \neq \bar{\sF}$, which is important for the case of the interaction energy. We track the evolution of $\E\bar{\sE}(\mu_k|\sR_{k-1})$ along the discrete time trajectory $\mu_k|\sR_{k-1}$ via continuous interpolations. We then specialize to the case of pairwise interaction energy (Equation~\eqref{eq:int_energy}) in Section~\ref{subsec:pairwise} and mean field neural networks (Equation~\eqref{eq:mean_field}) in Section~\ref{subsec:mean_field}, allowing us to obtain convergence bounds of Algorithm~\ref{alg:main_algo} for these cases. 

To simplify notation, consider $X_0,Y$ i.i.d. from a distribution $\rho_0 \in \sP_{2,\mathsf{ac}}(\R^d)$ and $\xi \sim \nu$ independent of $X_0,Y$. For $t \in [0,\eta]$, we consider the random variable $X_t := X_0 + tu(X_0,Y,\xi) + \sigma B_t$, where $u: \R^d\times\R^d \times \R^m \to \R^d$ is a velocity field and $B_t$ is the standard $\R^d$ Brownian motion independent of everything else. Let $\rho_t(\cdot|Y,\xi) := \Law(X_t |Y,\xi)$. Assume that $\E_{Y,\xi}[u(x,Y,\xi)] = -\nabla_\W \sF(x,\rho_0) $ for every $x \in \R^d$ and that $u(x,Y,\xi)$ has a finite second moment when $x \sim \rho_t(\cdot |Y,\xi)$ almost surely $Y,\xi$. Here $\rho_0,X_0,Y,\xi$ corresponds to $\mu_{k}|\sR_{k-1}, X^{(1)}_k,Y_k^{(k)},\xi_k$ respectively. The velocity field $u(x,Y,\xi)$ corresponds to $\hat{G}(x,Y,\xi)$. We use the notation $u$ and $\hat{G}$ interchangeably. Under this correspondence, it is clear that $X_{\eta}|Y,\xi$ has the same distribution as $\mu_{k+1}|\sR_{k}$. Following the proof of \cite[Lemma 3]{vempalawibisono}, we conclude that $\rho_t$ satisfies the Fokker-Planck equation:

\begin{align}
    \frac{\partial \rho_t(x|Y,\xi)}{\partial t} &= - \nabla_x . (\rho_t(x|Y,\xi)\E[u(X_0,Y,\xi)|X_t = x,Y,\xi]) + \frac{\sigma^2}{2}\Delta_x \rho_t(x|Y,\xi) \nonumber \\
    &= - \nabla_x . (\rho_t(x|Y,\xi)v_t(x,Y,\xi)) \,, 
\end{align}
where $v_t(x,Y,\xi) = \E[u(X_0,Y,\xi)|X_t = x,Y,\xi]-\frac{\sigma^2}{2}\nabla \log \rho_t(x|Y,\xi)$, $\forall x \in \R^d, t \in [0,\eta]$. Taking $\nabla_\W \bar{\sE}(x,\rho_t(\cdot |Y,\xi)) = \nabla_\W \bar{\sF}(x,\rho_t(\cdot|Y,\xi)) +\frac{\sigma^2}{2}\nabla \log \rho_t(x|Y,\xi)$, we have the following evolution of $\bar{\sE}$ along the trajectory $\rho_t(\cdot|y,\xi)$. 

\begin{lemma}\label{lem:error_decomp}
 $\rho_0\times \nu$ almost surely $(y,\xi)$, suppose that the energy functional $\bar{\sE}(\rho_t(\cdot|y,\xi))$ satisfies:
\begin{align*}
    \nabla_{\W}\bar{\sE}(,\rho_t(\cdot|y,\xi)) \in L^2(\rho_t(\cdot|y,\xi));\quad\frac{d\bar{\sE}(\rho_t(\cdot|y,\xi))}{dt} = \innerrho{\rho_t(\cdot | y,\xi)}{\nabla_\W \bar{\sE}(\cdot,\rho_t(\cdot|y,\xi))}{v_t(\cdot,y,\xi)} 
\end{align*}
Then, 
\small
\begin{align}\label{eq:time_evolution}
    \frac{d\bar{\sE}(\rho_t(\cdot |Y,\xi))}{dt} &= - \int d\rho_t(x|Y,\xi)\|\nabla_\W \bar{\sE}(x,\rho_t(\cdot |Y,\xi))\|^2 + \de_1(t) + \de_2(t) + \se(t) + \le(t)
\end{align}
\normalsize

\begin{enumerate}
\item \small $\de_1(t) := \innerrho{\rho_t(\cdot |Y,\xi)}{\nabla_\W \bar{\sE}(\cdot,\rho_t(\cdot |Y,\xi))}{\E[u(X_0,Y,\xi)|X_t = \cdot,Y,\xi] - u(\cdot,Y,\xi)}$\normalsize
\item \small $\de_2(t) := \innerrho{\rho_t(\cdot |Y,\xi)}{\nabla_\W \bar{\sE}(\cdot,\rho_t(\cdot |Y,\xi))}{ \nabla_\W \sF(\cdot,\rho_t(\cdot |Y,\xi)) -\nabla_\W \sF(\cdot,\rho_0)} $\normalsize
\item \small $\se(t) := \innerrho{\rho_t(\cdot|Y,\xi)}{\nabla_\W \bar{\sE}(\cdot,\rho_t(\cdot |Y,\xi))}{u(\cdot,Y,\xi) + \nabla_\W\sF(\cdot,\rho_0)} $ \normalsize
\item \small $\le(t) := \innerrho{\rho_t(\cdot |Y,\xi)}{\nabla_\W \bar{\sE}(\cdot,\rho_t(\cdot |Y,\xi))}{\nabla_\W \bar{\sF}(\cdot,\rho_t(\cdot |Y,\xi)) - \nabla_\W \sF(\cdot,\rho_t(\cdot |Y,\xi)} $ \normalsize
\end{enumerate}
\end{lemma}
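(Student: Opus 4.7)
The statement is essentially a bookkeeping identity: the hypothesis already supplies the chain rule
\[
\frac{d\bar{\sE}(\rho_t(|y,\xi))}{dt} \;=\; \innerrho{\rho_t(|y,\xi)}{\nabla_\W \bar{\sE}(\cdot,\rho_t(|y,\xi))}{v_t(\cdot,y,\xi)},
\]
so the entire task is to decompose the velocity $v_t$ on the right-hand side in a telescoping way so that the leading contribution is $-\nabla_\W\bar{\sE}(\cdot,\rho_t)$ and the four residuals match $\de_1,\de_2,\se,\le$ under the inner product.

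\paragraph{Main step.}
Recall $v_t(\cdot,Y,\xi)=\mathbb{E}[u(X_0,Y,\xi)\mid X_t=\cdot,Y,\xi]-\tfrac{\sigma^2}{2}\nabla\log\rho_t(\cdot|Y,\xi)$. Using the identity $\nabla_\W\bar{\sE}(\cdot,\rho_t)=\nabla_\W\bar{\sF}(\cdot,\rho_t)+\tfrac{\sigma^2}{2}\nabla\log\rho_t$, I would rewrite
\[
-\tfrac{\sigma^2}{2}\nabla\log\rho_t \;=\; -\nabla_\W\bar{\sE}(\cdot,\rho_t)+\nabla_\W\bar{\sF}(\cdot,\rho_t),
\]
and then perform the following chain of $\pm$ insertions on the conditional-expectation term:
\[
\mathbb{E}[u\mid X_t,Y,\xi] \;=\; u(\cdot,Y,\xi) + \bigl(\mathbb{E}[u\mid X_t,Y,\xi]-u(\cdot,Y,\xi)\bigr),
\]
\[
u(\cdot,Y,\xi) \;=\; -\nabla_\W\sF(\cdot,\rho_0) + \bigl(u(\cdot,Y,\xi)+\nabla_\W\sF(\cdot,\rho_0)\bigr),
\]
\[
-\nabla_\W\sF(\cdot,\rho_0) \;=\; -\nabla_\W\sF(\cdot,\rho_t) + \bigl(\nabla_\W\sF(\cdot,\rho_t)-\nabla_\W\sF(\cdot,\rho_0)\bigr),
\]
\[
-\nabla_\W\sF(\cdot,\rho_t) \;=\; -\nabla_\W\bar{\sF}(\cdot,\rho_t) + \bigl(\nabla_\W\bar{\sF}(\cdot,\rho_t)-\nabla_\W\sF(\cdot,\rho_t)\bigr).
\]
Adding these up, the three ``headline'' terms $-\nabla_\W\bar{\sF}(\cdot,\rho_t)$, $\nabla_\W\bar{\sF}(\cdot,\rho_t)$ (coming from the $-\tfrac{\sigma^2}{2}\nabla\log\rho_t$ rewrite), and $-\tfrac{\sigma^2}{2}\nabla\log\rho_t$ (from $\nabla_\W\bar{\sE}=\nabla_\W\bar{\sF}+\tfrac{\sigma^2}{2}\nabla\log\rho_t$) collapse to $-\nabla_\W\bar{\sE}(\cdot,\rho_t)$, while the four bracketed remainders are precisely the second factors of $\de_1,\se,\de_2,\le$ respectively (after matching signs).

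\paragraph{Conclusion and comments.}
Taking the $L_2(\rho_t)$ inner product with $\nabla_\W\bar{\sE}(\cdot,\rho_t)$ then yields \eqref{eq:time_evolution}, since the inner product of $\nabla_\W\bar{\sE}(\cdot,\rho_t)$ with $-\nabla_\W\bar{\sE}(\cdot,\rho_t)$ is precisely $-\int\|\nabla_\W\bar{\sE}(x,\rho_t)\|^2d\rho_t$. All four residuals are finite under the stated $L^2$ integrability of $\nabla_\W\bar{\sE}$ together with the assumed second-moment bound on $u(\cdot,Y,\xi)$ (Cauchy--Schwarz applied term by term). I do not see a serious obstacle here: the identity is purely algebraic once the chain rule is granted, and the only delicate point is that the telescoping must be set up so that $\sF$ versus $\bar{\sF}$ and $\rho_0$ versus $\rho_t$ are swapped in the specific order that produces exactly the four residuals in the lemma statement rather than an equivalent but syntactically different decomposition. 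Care has to be taken with signs (each of $\de_1,\de_2,\se,\le$ is defined with particular conventions for which side of the difference appears first), which is the main source of potential error; I would verify these by matching the telescoped expression to the statement term by term at the end.
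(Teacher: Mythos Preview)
Your proposal is correct and matches the paper's treatment: the paper in fact does not give a separate proof of this lemma, treating the identity as immediate once the chain-rule hypothesis is granted, and your telescoping of $v_t$ into $-\nabla_\W\bar{\sE}(\cdot,\rho_t)$ plus the four residuals is precisely the intended argument. The only imprecision is in your description of the ``three headline terms'' collapsing---what actually happens is that the rewrite $-\tfrac{\sigma^2}{2}\nabla\log\rho_t=-\nabla_\W\bar{\sE}+\nabla_\W\bar{\sF}$ contributes $-\nabla_\W\bar{\sE}+\nabla_\W\bar{\sF}$, and the $+\nabla_\W\bar{\sF}$ cancels the $-\nabla_\W\bar{\sF}$ from your last telescoping step, leaving $-\nabla_\W\bar{\sE}$; there is no separate surviving $-\tfrac{\sigma^2}{2}\nabla\log\rho_t$ term.
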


\begin{remark}
    Here, $\de_1(t),\de_2(t),\se(t),\le(t)$ are all functions of $(Y,\xi)$. The \textbf{discretization error} $\de_1$ arises since $X_0$ is used instead of $X_t$ in $u(X_0,Y,\xi)$ and $\de_2$ arises since $- \E[u(x,Y,\xi)] = \nabla_{\W}\sF(x,\rho_0) \neq \nabla_{\W}\sF(x,\rho_t(\cdot |Y,\xi))$. $\se$ is the \textbf{stochastic error}, since we are using an estimator $u(\cdot,Y,\xi)$ of $\nabla_{\W}\sF(\cdot,\rho_0)$.  $\le$ is the \textbf{linearization error} since we consider the evolution of $\bar{\sE}$ where as the gradient descent is for $\sE \neq \bar{\sE}$. This is important for linearizing the non-linear Fokker-Planck equation, in the case of pairwise interaction energy in Section~\ref{subsec:pairwise}. 
\end{remark}
The Lemma below, proved in Section~\ref{sec:errorbounds_proof}, bounds each of the error terms in expectation. 
\begin{lemma} 
\label{lem:errorbounds}
In the setting of Lemma~\ref{lem:error_decomp},
let $\pi$ be the unique global minimizer of $\bar{\sE}$. Let $(Y^*,\xi) \sim \pi\times \nu$. Define $(\sigma^*)^2 := \E_{x\sim \pi^*}\mathsf{Var}(u(x,Y^*,\xi))$, $(G_\pi)^2 := \E_{x\sim \pi}\|\nabla_{\W}\sF(x,\pi)\|^2$ and $J_t := \sqrt{\E \| \nabla_\W \bar{\sE}(X_t,\rho_t(\cdot |Y,\xi)) \|^2}  $
\begin{enumerate}
    \item Suppose the function $x \to u(x,y,\xi)$ and $y \to u(x,y,\xi)$ are $L_u$-Lipschitz. Then, 
    \small
    \begin{equation}\label{Eq:Lemma2BoundDE_1}
    \begin{split}
    &\E[\de_1(t)] \leq L_u  J_t\cdot \sqrt{2t^2 \left( 2L_u^2 \W_2^2(\rho_0, \pi) + (\sigma^\ast)^2 + (G_\pi)^2 \right) + \sigma^2 td} \,.
    \end{split}
    \end{equation}
    \normalsize
    \item Suppose $\|\nabla_\W \sF(x,\mu)-\nabla_\W\sF(x,\nu)\| \leq L_\sF \W_2(\mu,\nu)$
    Then, 
    \small
    \begin{equation} \label{Eq:Lemma2BoundDE_2}
    \begin{split}
    &\E[\de_2(t)] \leq L_\sF  J_t\cdot \sqrt{ 2t^2 \left(2L_u^2 \W_2^2(\rho_0, \pi) + (\sigma^\ast)^2 + (G_\pi)^2\right) + \sigma^2 td } \,.
    \end{split}
    \end{equation}
    \normalsize
    \item Assume that $x \to u(x,y,\xi)$, $y\to u(x,y,\xi)$ and $x \to \nabla_{\W}\sF(x,\mu)$ are continuously differentiable and $L_u$ Lipschitz. Define $\Theta(x,y,\xi) := u(x, y, \xi) + \nabla_\W \sF(x,\rho_0)$. Assume $\|\nabla_\W \bar{\sF}(x_1,\mu) - \nabla_\W \bar{\sF}(x_2,\nu)\| \leq L_{\bar{u}}\|x_1-x_2\| + L_{\bar{\mathcal{F}}}\W_2(\mu,\nu)$
    \small
    \begin{equation} \label{Eq:Lemma2BoundSE}
    \begin{split}
    &\E[\se(t)] \leq  \sigma\sqrt{2t} L_u d \sqrt{\E_{x\sim \rho_0} [\Var(u(x,Y,\xi)]} \\&\quad +  2\sqrt{\E [\W_2^2(\rho_t, \rho_t(\cdot | Y,\xi))]} \left[(L_{\bar{u}} + L_{\bar{\sF}}) \sqrt{\E \big\| \Theta(X_t,Y,\xi) \big\|^2} + L_u \sqrt{\E\big\| \nabla_\W \bar{\sF}(X_t, \rho_t) \big\|^2}\right]
    \end{split}
    \end{equation}
    \normalsize
    \item Suppose $\|\nabla_\W \bar{\sF}(x,\mu)-\nabla_\W \sF(x,\mu)\| \leq L_l\W_2(\pi,\mu)$. Then,
    \small
    \begin{equation} \label{Eq:Lemma2BoundLE}
    \begin{split}
    \E[\le(t)] &\leq L_l J_t \cdot \sqrt{\E \left[\W_2^2(\pi, \rho_t(\cdot | Y,\xi))\right]} \,.
    \end{split}
    \end{equation}
    \normalsize
\end{enumerate}    
\end{lemma}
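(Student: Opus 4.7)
The plan is to bound each of the four error terms by Cauchy--Schwarz applied to the inner product defining it, pulling out $J_t$ as a common factor, and then controlling the remaining $L^2$-factor using the relevant regularity hypothesis. The estimates on $\de_1$, $\de_2$, and $\le$ follow this template essentially one-shot; the bound on $\se$ is substantially harder and requires a coupling argument against an independent copy of $(Y,\xi)$ together with Gaussian integration by parts.

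For $\de_1$, Cauchy--Schwarz gives $\E[\de_1(t)] \leq J_t \sqrt{\E\|\E[u(X_0,Y,\xi)|X_t,Y,\xi] - u(X_t,Y,\xi)\|^2}$, and conditional Jensen followed by the Lipschitz hypothesis on $u$ reduces the second factor to $L_u\sqrt{\E\|X_t-X_0\|^2}$. Since $X_t - X_0 = tu(X_0,Y,\xi) + \sigma B_t$ with $B_t$ independent of everything else, $\E\|X_t-X_0\|^2 = t^2\E\|u(X_0,Y,\xi)\|^2 + \sigma^2 td$. To estimate $\E\|u(X_0,Y,\xi)\|^2$, I would couple $(X_0,Y)$ optimally to independent draws $(X_0^\ast, Y^\ast)\iidsim \pi$: the joint Lipschitz property bounds $\E\|u(X_0,Y,\xi) - u(X_0^\ast,Y^\ast,\xi)\|^2$ by a term of order $L_u^2\W_2^2(\rho_0,\pi)$, and the unbiasedness $\E_{Y^\ast,\xi}[u(x,Y^\ast,\xi)] = -\nabla_\W\sF(x,\pi)$ decomposes $\E\|u(X_0^\ast,Y^\ast,\xi)\|^2 = (\sigma^\ast)^2 + (G_\pi)^2$, yielding \eqref{Eq:Lemma2BoundDE_1} after combining via $(a+b)^2 \leq 2a^2 + 2b^2$. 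The bound \eqref{Eq:Lemma2BoundDE_2} follows the same pattern: the $L_\sF$-Lipschitz hypothesis bounds the second Cauchy--Schwarz factor by $L_\sF\sqrt{\E\W_2^2(\rho_t(\cdot|Y,\xi),\rho_0)}$, and the natural displacement coupling $(X_0, X_t)\mid(Y,\xi)$ recycles the same moment estimate. The linearization bound \eqref{Eq:Lemma2BoundLE} is then immediate from Cauchy--Schwarz and the hypothesis $\|\nabla_\W\bar\sF(x,\mu) - \nabla_\W\sF(x,\mu)\| \leq L_l\W_2(\pi,\mu)$.

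The stochastic error $\se$ is the principal obstacle. The starting point is the key identity $\E_{Y,\xi}[\Theta(x,Y,\xi)] = 0$ for every fixed $x$, which would force $\E[\se(t)]=0$ were $X_t$ independent of $(Y,\xi)$. My plan is to introduce a decoupled trajectory $\tilde X_t := X_0 + tu(X_0,\tilde Y,\tilde\xi) + \sigma B_t$ driven by an independent copy $(\tilde Y,\tilde\xi)$ of $(Y,\xi)$, so that $\tilde X_t \sim \rho_t$ unconditionally and is independent of $(Y,\xi)$; this yields $\E\langle\nabla_\W\bar\sE(\tilde X_t,\rho_t),\Theta(\tilde X_t,Y,\xi)\rangle = 0$. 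Subtracting this zero quantity and telescoping through the three successive replacements $\nabla_\W\bar\sE(X_t,\rho_t(\cdot|Y,\xi)) \to \nabla_\W\bar\sE(X_t,\rho_t)$, then $X_t \to \tilde X_t$, then $\Theta(X_t,Y,\xi) \to \Theta(\tilde X_t,Y,\xi)$ produces, after Cauchy--Schwarz together with the joint Lipschitz hypothesis on $\nabla_\W\bar\sF$, exactly the coupling-type contribution weighted by $\sqrt{\E\W_2^2(\rho_t,\rho_t(\cdot|Y,\xi))}$ that appears in \eqref{Eq:Lemma2BoundSE}. The residual piece isolating the Brownian increment $\sigma B_t$ is then handled via Gaussian integration by parts: Stein's identity trades an inner product of the form $\langle \sigma B_t, \nabla f(X_t)\rangle$ for a divergence of order $\sigma^2 t$, the trace of the Jacobian of $u$ producing the dimension factor $d$ while the remaining fluctuation is measured by $\sqrt{\E_{x\sim\rho_0}\Var(u(x,Y,\xi))}$, assembling to the $\sigma\sqrt{2t}L_u d \sqrt{\E\Var u}$ term. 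The delicate step, and where I expect most technical care to be required, is cleanly treating the $\tfrac{\sigma^2}{2}\nabla\log\rho_t(\cdot|Y,\xi)$ entropy component of $\nabla_\W\bar\sE$ inside this decomposition, since the stated Lipschitz hypotheses do not cover it directly; absorbing this piece into the norm defining $J_t$, rather than trying to bound it via the coupling, appears to be the natural route.
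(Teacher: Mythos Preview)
Your treatment of $\de_1$, $\de_2$, and $\le$ is correct and matches the paper's argument step for step.

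The gap is in $\se$. Your telescoping---replacing $\rho_t(\cdot|Y,\xi)$ by $\rho_t$ in the measure argument, then $X_t$ by $\tilde X_t$---requires comparing $\nabla\log\rho_t(x|Y,\xi)$ to $\nabla\log\rho_t(x)$ and $\nabla\log\rho_t(X_t)$ to $\nabla\log\rho_t(\tilde X_t)$. Neither comparison is controlled by any Lipschitz hypothesis, and your proposed fix (``absorbing into $J_t$'') does not work: if you simply Cauchy--Schwarz the entropy part as $\|\tfrac{\sigma^2}{2}\nabla\log\rho_t(X_t|Y,\xi)\|\cdot\|\Theta(X_t,Y,\xi)\|$, you get a contribution of order $J_t\sqrt{\E\|\Theta\|^2}$ with no small factor in $t$, destroying the $\sqrt{t}$ scaling in \eqref{Eq:Lemma2BoundSE}. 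Your Stein's-identity remark is also mis-attached: your synchronous coupling has $X_t - \tilde X_t = t(u(X_0,Y,\xi)-u(X_0,\tilde Y,\tilde\xi))$, with no Brownian residual at all.

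The paper resolves this by splitting off the entropy contribution \emph{before} any coupling. Writing $\nabla_\W\bar\sE = \nabla_\W\bar\sF + \tfrac{\sigma^2}{2}\nabla\log\rho_t(\cdot|Y,\xi)$, integration by parts in $x$ converts the entropy term into
\[
-\tfrac{\sigma^2}{2}\,\E\!\int (\nabla_x\!\cdot\Theta)(x,Y,\xi)\,\rho_t(x|Y,\xi)\,dx,
\]
with $|\nabla_x\!\cdot\Theta|\le 2dL_u$ uniformly (this is where the continuous differentiability hypothesis enters and where the factor $d$ arises). Since $\E_{Y,\xi}[\nabla_x\!\cdot\Theta]=0$, one may replace $\rho_t(\cdot|Y,\xi)$ by $\rho_t$ at the cost of $4dL_u\,\E[\mathsf{TV}(\rho_t(\cdot|Y,\xi),\rho_t)]$; Pinsker then reduces this to $\sqrt{\KL}$, and joint convexity of KL plus further conditioning on $X_0$ reduces everything to a Gaussian KL between $\mathcal N(X_0+tu(X_0,Y,\xi),\sigma^2tI)$ and its independent-copy analogue, yielding the factor $\tfrac{\sqrt t}{\sigma}\sqrt{\E_{x\sim\rho_0}\Var u(x,Y,\xi)}$. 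The $\nabla_\W\bar\sF$ part is then handled separately by an optimal $\W_2$-coupling of $\rho_t(\cdot|Y,\xi)$ and $\rho_t$ (not a synchronous coupling), using only the Lipschitz hypotheses on $\nabla_\W\bar\sF$ and $\Theta$.
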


The Assumption~\ref{Assumption:Lipschitz} concerns the regularity of the functional $\bar{\sE}$ and its stochastic gradients, whereas Assumption~\ref{Assumption:functional} gives growth conditions and functional inequalities $\bar{\sE}$. Assumption~\ref{Assumption:BoundVariance} bounds the fluctuations of the stochastic gradient. As we show in specific examples of pairwise interacting systems (Section~\ref{subsec:pairwise}) and mean field neural networks (Section~\ref{subsec:mean_field}), these are implied by standard assumptions in the literature, and allow us to establish state-of-the-art convergence bounds.

\begin{assumption}[Lipschitz continuity]
\label{Assumption:Lipschitz}
For some $L_u,L_{\bar{u}},L_{\bar{\sF}}, L_{\sF} > 0$, the function $ x \to u(x,y,\xi)$ and $y \to u(x,y,\xi)$ are $L_u$-Lipschitz. For every $x,y\in \mR^d$, $\mu,\nu \in \sP_2(\R^d)$:
\begin{align*}    
   &(i)~~ \|\nabla_\W \bar{\sF}(x,\mu) - \nabla_\W \bar{\sF}(y,\nu)\| \leq L_{\bar{\sF}} \W_2(\mu,\nu) + L_{\bar{u}}\|x-y\| \\
    &(ii)~~\|\nabla_\W \sF(x,\mu) - \nabla_\W \sF(x,\nu)\| \leq L_{\sF} \W_2(\mu,\nu)  + L_u\|x-y\| \\
    & (iii) ~~\|\nabla_\W \bar{\sF}(x,\mu)-\nabla_\W \sF(x,\mu)\| \leq L_l\W_2(\pi,\mu)
\end{align*}
\end{assumption}

\begin{assumption}
\label{Assumption:functional}
Let $\pi$ be the minimizer of the functional $\bar{\sF} + \frac{\sigma^2}{2} \sH$. For some $C_{\bar{\sE}},C_{\mathsf{LSI}},C_{\KL} > 0$, the functional $\bar{\sE}$ satisfies the:
\small
\begin{align}
        &(i)~~ \|\nabla_\W \bar{\sE}(x,\mu)\|^2_{L^2(\mu)} \geq C_{\bar{\sE}}\bar{\sE}(\mu) \quad \text{for all } \mu\in \sP_{2,\mathsf{ac}}(\mR^d)  \tag{Polyak-{\L}ojasiewicz inequality} \label{eq:PL} \\
        &(ii)~ ~\KL(\mu \,||\, \pi) \leq \frac{C_{\mathsf{LSI}}}{2}\FD(\mu \,||\, \pi) \quad \text{for all } \mu\in \sP_{2,\mathsf{ac}}(\mR^d) \tag{Log-Sobolev inequality}\label{eq:LSI} \\
        &(iii)~ ~\KL(\mu || \pi) \leq C_{\KL}\bar{\sE}(\mu) \quad \text{for all }\mu\in \sP_{2,\mathsf{ac}}(\mR^d)  \tag{KL-Growth} \label{eq:kl_growth}
\end{align}
\normalsize
 
\end{assumption}

\begin{remark}
    We believe we can consider the more natural condition of $\W_2^2$-Growth- $\W_2^{2}(\mu,\pi) \leq C_{\W}\bar{\sE}(\mu)$-instead of KL-Growth by a straightforward modification of our proofs. 
\end{remark}
\begin{assumption}
\label{Assumption:BoundVariance}
If $Y,\xi\sim \rho_0\times \nu$, then from some $C^{\Var}, C_{\nu}^{\Var} > 0$, and for all $x\in \mR^d$: 
\[
    \E \| u(x,Y,\xi) + \nabla_\W \sF(x;\rho_0) \|^2 \leq C^{\Var} \Var(\rho_0) + C_{\nu}^{\Var} \,.
\]    
\end{assumption}

We now demonstrate the following descent lemma, proved in Section~\ref{subsec:descent_lemma_proof}.
\begin{lemma} [Descent lemma]\label{lem:descent_lemma}
Suppose that Assumptions \ref{Assumption:Lipschitz}, \ref{Assumption:functional} and ~\ref{Assumption:BoundVariance} are satisfied.
\begin{equation*}
\begin{split}
(\sigma^*)^2 := \E_{x\sim \pi}\mathsf{Var}(u(x,Y^*,\xi)) \,,\,  (G_\pi)^2 := \E_{x\sim \pi}\|\nabla_{\W}\sF(x,\pi)\|^2 \,,\, \\(G_{\mathsf{mod}})^2 := \E_{x\sim \pi}\|\nabla_{\W}\bar{\sF}(x,\pi)\|^2 \,.
\end{split}
\end{equation*}

Assume that the following inequalities are satisfied for some small enough $c_0 > 0$:
\begin{enumerate}
    \item $C_{\bar{\sE}} \geq 8L_l^2 C_{\KL}C_{\mathsf{LSI}} $ 
     \item $\eta < c_0 \min\left(\frac{1}{L_u},\frac{1}{C_{\bar{\sE}}},\frac{C_{\bar{\sE}}}{C_{\mathsf{LSI}}C_{\KL}L_u^2(L_{\bar{u}}+L_{\bar{\sF}})},\frac{1}{L_u(L_{\bar{u}}+L_{\bar{\sF}})}\sqrt{\frac{C_{\bar{\sE}}}{C_{\mathsf{LSI}}C_{\KL}}},\frac{C_{\bar{\sE}}}{C^{\Var}C_{\KL}C_{\mathsf{LSI}}(L_{\bar{u}}+L_{\bar{\sF}})}\right)$
\end{enumerate}

Then, for some universal constant $C > 0$:
\begin{align*}
    \E\bar{\sE}(\rho_{\eta}) &\leq e^{-\frac{\eta C_{\bar{\sE}}}{8}} \E \bar{\sE}(\rho_0) + C\left[\gamma_3 \eta^3 + \gamma_{2}\eta^2 + \gamma_{1}\eta^{\frac{3}{2}}\right] \,,
\end{align*}
where $\gamma_3 := (L_u^2+L^2_{\sF})((\sigma^*)^2 + (G_{\pi})^2) + \frac{L_u^2 G_{\mathsf{mod}}^2C^{\Var}C_{\mathsf{LSI}}C_{\KL}}{C_{\bar{\sE}}}$, $\gamma_2 := (L_u^2+L_{\sF}^2)\sigma^2 d + L_u G_{\mathsf{mod}}\sqrt{C^{\Var}\Var(\pi)+C^{\Var}_{\nu}} + (L_u+L_{\bar{u}}+L_{\bar{\sF}})(C^{\Var}\Var(\pi)+C_{\nu}^{\Var}) + \frac{\sigma^2 L_u^2 d^2 C^{\Var}C_{\mathsf{LSI}}C_{\KL}}{C_{\bar{\sE}}}$ and $\gamma_1:= \sigma d L_u \sqrt{C^{\Var}\Var(\pi)+C^{\Var}_{\nu}} \,.$

\end{lemma}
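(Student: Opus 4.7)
Starting from Lemma~\ref{lem:error_decomp}, I would take expectations over $(Y,\xi) \sim \rho_0 \times \nu$ in the evolution equation for $\bar{\sE}(\rho_t(\cdot|Y,\xi))$ to obtain a differential inequality of the form $\frac{d}{dt}\E\bar{\sE}(\rho_t) \leq -\tfrac{C_{\bar{\sE}}}{8}\E\bar{\sE}(\rho_t) + \phi(t)$, and then apply Gronwall's lemma on $[0,\eta]$. Writing $J_t^2 := \E\|\nabla_\W \bar{\sE}(X_t, \rho_t(\cdot|Y,\xi))\|^2$ (with the expectation over $X_t \sim \rho_t(\cdot|Y,\xi)$ and $(Y,\xi)$), the Polyak--Łojasiewicz inequality of Assumption~\ref{Assumption:functional}(i) yields $J_t^2 \geq C_{\bar{\sE}}\,\E\bar{\sE}(\rho_t(\cdot|Y,\xi))$, which supplies the exponential decay from the $-\|\nabla_\W \bar{\sE}\|^2_{L^2(\rho_t)}$ term in Lemma~\ref{lem:error_decomp}.

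The core of the argument is to apply Lemma~\ref{lem:errorbounds} to each of the four error terms and use weighted AM--GM, $AB \leq \epsilon A^2 + B^2/(4\epsilon)$, to peel off small fractions of $J_t^2$ from the bounds (which all take the form $J_t \cdot (\text{something})$ or $\sqrt{\E\W_2^2(\cdot,\cdot)}\cdot(\text{something})$). For the linearization error $\le(t)$, Talagrand's inequality (implied by Assumption~\ref{Assumption:functional}(ii)) combined with the KL-growth of Assumption~\ref{Assumption:functional}(iii) gives $\W_2^2(\pi, \rho_t(\cdot|Y,\xi)) \leq 2 C_{\mathsf{LSI}} C_{\KL} \bar{\sE}(\rho_t(\cdot|Y,\xi))$, so after AM--GM the linearization contributes at most $\tfrac{J_t^2}{8} + 2L_l^2 C_{\mathsf{LSI}} C_{\KL} \E\bar{\sE}(\rho_t)$. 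The assumption $C_{\bar{\sE}} \geq 8L_l^2 C_{\KL} C_{\mathsf{LSI}}$ ensures this growth term can be absorbed into $\tfrac{1}{4}C_{\bar{\sE}}\E\bar{\sE}(\rho_t)$ coming from PL, leaving a net decay coefficient of at most $-C_{\bar{\sE}}/8$.

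For $\se(t)$ (Lemma~\ref{lem:errorbounds}(3)), additional care is needed: the factor $\sqrt{\E\|\nabla_\W \bar{\sF}(X_t,\rho_t)\|^2}$ is bounded via the triangle inequality and Assumption~\ref{Assumption:Lipschitz}(i) by $(L_{\bar u} + L_{\bar \sF})\W_2(\rho_t,\pi) + G_{\mathsf{mod}}$, with $\W_2(\rho_t,\pi)$ controlled again by Talagrand. The factor $\sqrt{\E\W_2^2(\rho_t, \rho_t(\cdot|Y,\xi))}$ admits a synchronous coupling bound of order $t\sqrt{\E_{x\sim\rho_0}\Var(u(x,Y,\xi))}$, which Assumption~\ref{Assumption:BoundVariance} bounds by $t\sqrt{C^\Var\Var(\rho_0)+C_\nu^\Var}$; finally, $\Var(\rho_0) \leq 2\Var(\pi) + 2\W_2^2(\rho_0,\pi) \leq 2\Var(\pi) + 4C_{\mathsf{LSI}} C_{\KL}\bar{\sE}(\rho_0)$ replaces $\rho_0$ by $\pi$ at the price of a $\bar{\sE}(\rho_0)$ term whose coefficient is swallowed into the exponential decay factor via the step-size bounds. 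Collecting all contributions produces $\phi(t) = O(\gamma_3 t^2 + \gamma_2 t + \gamma_1 t^{1/2})$, whose $\sqrt{t}$ scaling in $\gamma_1$ originates from $\sqrt{\sigma^2 t d}$ inside Lemma~\ref{lem:errorbounds}(1,2) and integrates to $\tfrac{2}{3}\eta^{3/2}$; the other two integrate to $\tfrac{1}{3}\eta^3$ and $\tfrac{1}{2}\eta^2$ respectively.

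\textbf{Main obstacle.} The technical bulk lies in $\se(t)$ and in matching the five step-size conditions of the lemma to what the AM--GM splits actually demand. Each split introduces a product of constants (such as $L_u^2(L_{\bar u}+L_{\bar \sF})^2 C_{\mathsf{LSI}} C_{\KL}$ or $L_u^2 C^\Var C_{\mathsf{LSI}} C_{\KL}/C_{\bar{\sE}}$) whose smallness must be guaranteed by exactly one of the listed $\eta$-bounds so that the corresponding cross term can be absorbed either into $-\tfrac{C_{\bar{\sE}}}{8}\E\bar{\sE}(\rho_t)$ or into the $e^{-\eta C_{\bar{\sE}}/8}\E\bar{\sE}(\rho_0)$ factor, rather than contaminating the $\gamma_i$ expressions with $\rho_0$-dependent quantities. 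The constant $c_0$ is then chosen universally small so that all absorptions succeed simultaneously.
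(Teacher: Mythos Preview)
Your proposal is correct and follows essentially the same route as the paper: start from Lemma~\ref{lem:error_decomp}, absorb $\le(t)$ via Talagrand plus KL-growth and AM--GM using the hypothesis $C_{\bar\sE}\geq 8L_l^2 C_{\KL}C_{\mathsf{LSI}}$, control $\se(t)$ through exactly the ingredients you list (synchronous-coupling bound on $\W_2(\rho_t,\rho_t(\cdot|Y,\xi))$, Lipschitz triangle inequality for $\nabla_\W\bar\sF$, Assumption~\ref{Assumption:BoundVariance} together with $\Var(\rho_0)\lesssim\Var(\pi)+C_{\mathsf{LSI}}C_{\KL}\bar\sE(\rho_0)$), fold $\de_1,\de_2$ into the remaining fraction of $J_t^2$ by AM--GM, and integrate against the exponential factor with the $\eta$-conditions ensuring all $\bar\sE(\rho_0)$ and $\bar\sE(\rho_t)$ residues are absorbed. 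One small bookkeeping correction: the $t^{1/2}$ contribution producing $\gamma_1$ comes from the first term of Lemma~\ref{lem:errorbounds}(3), namely $\sigma\sqrt{2t}\,L_u d\sqrt{\E_{x\sim\rho_0}\Var(u(x,Y,\xi))}$, not from the $\sqrt{\sigma^2 td}$ inside parts (1) and (2); after AM--GM against $J_t$ those latter terms become linear in $t$ and feed into the $(L_u^2+L_\sF^2)\sigma^2 d$ piece of $\gamma_2$.
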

Unrolling the recursion established in Lemma~\ref{lem:descent_lemma} allows us to prove our main result:
\begin{theorem}\label{thm:main_theorem}
Consider Algorithm~\ref{alg:main_algo}. Let $\pi$ be the unique minimizer of $\bar{\sE}$. Suppose Assumptions~\ref{Assumption:Lipschitz},~\ref{Assumption:functional},~\ref{Assumption:BoundVariance} hold with $G_{\pi} = u$. With $\gamma_1,\gamma_2,\gamma_3$ as defined in Lemma~\ref{lem:descent_lemma}, the following holds:
\begin{enumerate}
    \item Conditioned on $\sR_{T-1}$, $X_T^{(1)},\dots,X_T^{(n)} \iidsim \mu_T|\sR_{T-1}$.
    \item $\E \bar{\sE}(\mu_T|\sR_{T-1}) \leq e^{-\frac{\eta C_{\bar{\sE}}T }{8}}\bar{\sE}(\mu_0) + C\left[\frac{\gamma_3\eta^2}{C_{\bar{\sE}}} + \frac{\gamma_2\eta}{C_{\bar{\sE}}} + \frac{\gamma_1\sqrt{\eta}}{C_{\bar{\sE}}}\right]$.
\end{enumerate}
\end{theorem}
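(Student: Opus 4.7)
The overall plan is to handle the two claims separately: the i.i.d.\ conclusion is essentially immediate from the already-established i.i.d.\ lemma, while the energy bound is obtained by iterating the single-step descent estimate of Lemma~\ref{lem:descent_lemma} and summing the resulting geometric series.

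For the first claim I would apply the i.i.d.\ lemma stated immediately before the ``Witness Path'' paragraph at iteration $T$. Conditioned on $\sR_{T-1}$, the variables $X_{T-1}^{(i)}$ are conditionally i.i.d., and each update shares the common map $x\mapsto x+\eta\hat G(x, Y_{T-1}^{(T-1)}, \xi_{T-1}) + \sigma\sqrt{\eta} Z_{T-1}^{(i)}$, whose randomness in $Y,\xi$ is frozen by $\sR_{T-1}$ and whose Gaussian increments $Z_{T-1}^{(i)}$ are fresh i.i.d.\ noise. Hence $X_T^{(1)},\dots,X_T^{(n)}$ remain conditionally i.i.d.\ with common law $\mu_T|\sR_{T-1}$.

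For the second claim I would invoke the correspondence set up just before Lemma~\ref{lem:error_decomp}: a single step of Algorithm~\ref{alg:main_algo} going from $\sR_{k-1}$ to $\sR_k$ realises exactly the flow $\rho_0\to\rho_\eta$ of that lemma with $\rho_0 = \mu_k|\sR_{k-1}$. Applying Lemma~\ref{lem:descent_lemma} conditionally on $\sR_{k-1}$ and then taking outer expectation yields
\begin{equation*}
    \E\bar{\sE}(\mu_{k+1}|\sR_k) \le e^{-\eta C_{\bar{\sE}}/8}\,\E\bar{\sE}(\mu_k|\sR_{k-1}) + C\bigl[\gamma_3\eta^3+\gamma_2\eta^2+\gamma_1\eta^{3/2}\bigr].
\end{equation*}
Crucially, $\gamma_1,\gamma_2,\gamma_3$ are defined purely in terms of $\pi$, $\nu$ and the Lipschitz/variance constants, so they are deterministic and $k$-independent. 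Iterating from $k=0$ (where $\mu_0$ is deterministic, so $\E\bar{\sE}(\mu_0)=\bar{\sE}(\mu_0)$) up to $k=T-1$ gives
\begin{equation*}
    \E\bar{\sE}(\mu_T|\sR_{T-1}) \le e^{-\eta C_{\bar{\sE}}T/8}\,\bar{\sE}(\mu_0) + C\bigl[\gamma_3\eta^3+\gamma_2\eta^2+\gamma_1\eta^{3/2}\bigr]\sum_{k=0}^{T-1} e^{-\eta C_{\bar{\sE}}k/8}.
\end{equation*}

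The last step is to evaluate the geometric series. The step-size restrictions in Lemma~\ref{lem:descent_lemma} force $\eta C_{\bar{\sE}}/8\le 1$, so the elementary bound $1-e^{-x}\ge x/2$ yields $\sum_{k=0}^{T-1}e^{-\eta C_{\bar{\sE}}k/8}\le 16/(\eta C_{\bar{\sE}})$; dividing $\eta^3,\eta^2,\eta^{3/2}$ by this extra factor of $\eta$ and absorbing numerical constants into $C$ produces the advertised $\frac{\gamma_3\eta^2 + \gamma_2\eta + \gamma_1\sqrt\eta}{C_{\bar{\sE}}}$ term. The only bookkeeping point requiring real attention, though not technically hard, is verifying that the hypotheses of Lemma~\ref{lem:descent_lemma} can be applied with the random measure $\mu_k|\sR_{k-1}$ in the role of $\rho_0$; since Assumptions~\ref{Assumption:Lipschitz}--\ref{Assumption:BoundVariance} are conditions on $\bar{\sE},\sF,\hat G$ and $\pi$ rather than on $\rho_0$, and the PL, LSI and KL-growth inequalities are required for \emph{all} $\mu\in\sP_{2,\mathsf{ac}}$, this verification is immediate and the telescoping is clean. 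I do not expect any substantive obstacle beyond this check.
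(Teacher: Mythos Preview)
Your proposal is correct and matches the paper's approach exactly: the paper simply states ``Unrolling the recursion established in Lemma~\ref{lem:descent_lemma} allows us to prove our main result'' without writing out any details, and your argument fills in precisely that unrolling together with the geometric-series estimate and the direct citation of the i.i.d.\ lemma for part~1.
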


\begin{remark}
Note that in Algorithm \ref{alg:main_algo}, if we use a batch size $B > 1$, then we would modify the gradient estimator as
\small
\[
    \hat{G}_B(x, Y_1, Y_2, \ldots, Y_B, \xi) := \frac{1}{B} \sum_{i = 1}^B \hat{G}(x, Y_i, \xi) \,,
\]
\normalsize
where $\{Y_i\}_{i = 1}^B$ are i.i.d. Therefore
\small
\[
    \Var(\hat{G}_B(x, Y_1, Y_2, \ldots, Y_B, \xi)) = \frac{1}{B}\Var(\hat{G}(x, Y_1, \xi)) \,.
\]
\normalsize
The parameter $B$ affects $\E[\se(t)]$ via $\Var(\hat{G}(x, Y, \xi))$. From Lemma \ref{lem:errorbounds}, we obtain that the first term (which is the leading order term) in \eqref{Eq:Lemma2BoundSE} changes to
\begin{equation} \label{Eq:BatchSizePrelim1}
    \frac{\sigma\sqrt{2t} L_u d}{\sqrt{B}} \sqrt{\E_{x\sim \rho_0} [\Var(u(x,Y_1,\xi)]} \,.
\end{equation}
\end{remark}

\subsection{Pairwise Interaction Energy Functional}
\label{subsec:pairwise}
Let $V,W: \mR^d \to \mR$ and $\mu\in \sP_{2, \mathsf{ac}}(\mR^d)$. Let $W$ be even (i.e, $W(x) = W(-x)$). Recall that the definition of the functional $\sF$, its Wasserstein gradient, and its unique minimizer are in \eqref{eq:int_energy}, \eqref{Eq:IEWassGradient}, \eqref{Eq:IEstationarymeasure} respectively. We call $f: \R^d \to \R$ to be $L$-smooth if  $\|\nabla f(x) - \nabla f(y)\| \leq L\|x-y\|$ for every $x,y \in \R^d$. 
\begin{assumption} [Smoothness]\label{assumption:pairwise_smoothness}
$V$ is $L_V$ smooth and $W$ is $L_W$ smooth. 
\end{assumption}

\begin{assumption} [LSI]\label{assumption:pairwise_lsi}
    $\pi$ satisfies LSI with constant $C_{\mathsf{LSI}}$, i.e., for all $\mu\in \sP_{2,\mathsf{ac}}(\mR^d)$:
    \small
    \[
        \KL(\mu \,||\, \pi) \leq \frac{C_{\mathsf{LSI}}}{2}\FD(\mu \,||\, \pi) \,.
    \]
    \normalsize
\end{assumption}
The assumption $L_W \leq \frac{\sigma^2}{\sqrt{24} C_{\mathsf{LSI}}}$ is called `` weak interaction'' in  \cite{kook2024sampling}. Our assumption below is less restrictive in terms of multiplicative constants.
\begin{assumption}[Weak Interaction]\label{assumption:pairwise_linearize}
$L_W \leq \frac{\sigma^2}{4C_{\mathsf{LSI}}}$.
\end{assumption}

We define the velocity field $\hat{G}(x,y) := -\nabla V(x) - \nabla W(x - y) \,,\, \forall \,\, x\in \mR^d $ and $\bar{\sE}(\mu) = \frac{\sigma^2}{2}\KL(\mu||\pi)$ (which corresponds to picking $\bar{\sF}(\mu) = \int V(x)d\mu(x) + \int W(x-y)d\pi(y)d\mu(x)$), where $\pi$ is the minimizer of Equation~\eqref{eq:int_energy}. The following Lemma establishes the general Assumptions required for Theorem~\ref{thm:main_theorem} using the Assumptions~\ref{assumption:pairwise_smoothness} and~\ref{assumption:pairwise_lsi}. We refer to Section~\ref{subsec:assumption_pairwise_proof} for the proof.
\begin{lemma}\label{lem:assumption_pairwise}
    Under Assumptions~\ref{assumption:pairwise_smoothness} and~\ref{assumption:pairwise_lsi}, the general Assumption~\ref{Assumption:Lipschitz} is satisfied with $L_u = L_{\bar{u}}= L_V+L_W$, $L_{\sF} = L_W$, $L_{\bar{\sF}} = 0$ and $L_l = L_{W}$.
The Assumption~\ref{Assumption:functional} is satisfied with $C_{\mathsf{LSI}} = C_{\mathsf{LSI}}$, $C_{\bar{\sE}} = \frac{\sigma^2}{C_{\mathsf{LSI}}}$ and $C_{\KL} = \frac{2}{\sigma^2}$. The Assumption~\ref{Assumption:BoundVariance} is satisfied with $C^{\Var} = 2L_W^2$ and $C^{\Var}_{\nu} = 0$.
\end{lemma}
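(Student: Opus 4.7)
The plan is to verify each of the three general assumptions by direct calculation, exploiting the fact that $\bar{\sF}(\mu) = \int[V(x) + (W*\pi)(x)]\,d\mu(x)$ is \emph{linear} in $\mu$ (since the second argument of $W$ is integrated against the fixed measure $\pi$, not $\mu$), so $\nabla_\W \bar{\sF}(x,\mu) = \nabla V(x) + (\nabla W*\pi)(x)$ is independent of $\mu$.

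For Assumption~\ref{Assumption:Lipschitz}, the bound on $u(x,y)=-\nabla V(x)-\nabla W(x-y)$ follows immediately from $L_V$- and $L_W$-smoothness: it is $(L_V+L_W)$-Lipschitz in $x$ and $L_W$-Lipschitz in $y$. For (ii), I would write $\nabla_\W \sF(x,\mu)-\nabla_\W \sF(y,\nu) = [\nabla V(x)-\nabla V(y)] + [(\nabla W*\mu)(x)-(\nabla W*\nu)(y)]$ and bound the convolution difference using an optimal $\W_2$-coupling $\gamma$ of $(\mu,\nu)$ together with $L_W$-smoothness of $W$, yielding $L_\sF = L_W$. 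For (i), linearity of $\bar{\sF}$ gives $L_{\bar{\sF}}=0$ and the spatial Lipschitz constant $L_{\bar{u}} = L_V + L_W$. For (iii), the identity $\nabla_\W \bar{\sF}(x,\mu)-\nabla_\W \sF(x,\mu) = (\nabla W*\pi)(x)-(\nabla W*\mu)(x)$ and the same coupling argument produce $L_l = L_W$.

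For Assumption~\ref{Assumption:functional}, the key identity is $\bar{\sE}(\mu) = \tfrac{\sigma^2}{2}\KL(\mu\|\pi)$. To establish it, use the fixed-point equation $\pi(x)\propto\exp(-\tfrac{2}{\sigma^2}[V(x)+(W*\pi)(x)])$ to write $-\tfrac{\sigma^2}{2}\log\pi = V + W*\pi + \text{const}$; plugging this into $\tfrac{\sigma^2}{2}\KL(\mu\|\pi) = \tfrac{\sigma^2}{2}\int\mu\log\mu - \tfrac{\sigma^2}{2}\int\mu\log\pi$ recovers $\bar{\sF}(\mu)+\tfrac{\sigma^2}{2}\sH(\mu)$ up to a constant absorbed by the offset at $\pi$. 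Consequently $\nabla_\W\bar{\sE}(x,\mu) = \tfrac{\sigma^2}{2}\nabla\log(\mu/\pi)(x)$, so $\|\nabla_\W\bar{\sE}\|^2_{L^2(\mu)} = \tfrac{\sigma^4}{4}\FD(\mu\|\pi) \geq \tfrac{\sigma^2}{C_{\mathsf{LSI}}}\bar{\sE}(\mu)$ by the assumed LSI, giving $C_{\bar{\sE}}=\sigma^2/C_{\mathsf{LSI}}$. The LSI (ii) is assumed directly, and KL-growth (iii) is immediate with $C_{\KL}=2/\sigma^2$.

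For Assumption~\ref{Assumption:BoundVariance}, observe that $u(x,Y)+\nabla_\W\sF(x,\rho_0) = -\nabla W(x-Y) + \E_{Y'\sim\rho_0}\nabla W(x-Y')$ is the centered stochastic gradient of the interaction term. Introducing an independent copy $Y'\sim\rho_0$ and applying Jensen's inequality followed by $L_W$-smoothness of $W$:
\small
\[
\E\bigl\|\E_{Y'}[\nabla W(x-Y')-\nabla W(x-Y)]\bigr\|^2 \leq \E_{Y,Y'}\|\nabla W(x-Y)-\nabla W(x-Y')\|^2 \leq L_W^2\E\|Y-Y'\|^2 = 2L_W^2\Var(\rho_0),
\]
\normalsize
yielding $C^{\Var}=2L_W^2$ and $C^{\Var}_\nu = 0$. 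No step presents a serious obstacle; the main conceptual point is recognizing that the linearization $\bar{\sF}$ makes KL-divergence to $\pi$ the natural Lyapunov functional, after which all estimates reduce to $L_W$-smoothness via standard coupling arguments.
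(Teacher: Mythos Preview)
Your proposal is correct and follows essentially the same route as the paper's proof: both verify each assumption by direct computation, exploiting that $\nabla_\W\bar{\sF}(x,\mu)=\nabla V(x)+(\nabla W*\pi)(x)$ is independent of $\mu$, bounding the convolution differences via an optimal Wasserstein coupling and $L_W$-smoothness, and reducing Assumption~\ref{Assumption:functional} to the LSI through the identity $\bar{\sE}(\mu)=\tfrac{\sigma^2}{2}\KL(\mu\|\pi)$. The only cosmetic differences are that the paper couples in $\W_1$ and then bounds by $\W_2$ (you go straight to $\W_2$), and you spell out the derivation of $\bar{\sE}=\tfrac{\sigma^2}{2}\KL(\cdot\|\pi)$ from the fixed-point equation, which the paper simply asserts.
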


Theorem~\ref{thm:main_pairwise}, proved in Section~\ref{subsec:main_pairwise_proof}, instantiates Theorem~\ref{thm:main_theorem} to the Pairwise Interaction Energy.


\begin{theorem}\label{thm:main_pairwise}
Consider the Pairwise Interaction Energy in Equation~\eqref{eq:int_energy}   under Assumptions~\ref{assumption:pairwise_smoothness},~\ref{assumption:pairwise_lsi} and~\ref{assumption:pairwise_linearize}. There exist universal constants $c_0,C > 0$ such that Algorithm~\ref{alg:main_algo} with $\hat{G}$ as above with
$\eta < c_0 \min\left(\tfrac{C_{\mathsf{LSI}}} {\sigma^2},\tfrac{\sigma^4}{C_{\mathsf{LSI}}^2(L_V+L_W)^3}\right)$ satisfies:

\small
\begin{align} 
\E\left[ \KL(\mu_T|\sR_{T-1}||\pi) \right] &\leq  e^{\big(-\tfrac{T \sigma^2 \eta}{8C_{\mathsf{LSI}}}\big)} \KL(\mu_0||\pi)  + C\frac{\sqrt{\eta} d^{3/2} (C_{\mathsf{LSI}})^{1/2} \sigma(L_V + L_W)}{4} \label{Eq:MainTheoremPairwise}\\
&+ C \eta (L_V + L_W)^2 d^2 C_{\mathsf{LSI}} \nonumber \,. \end{align}
\end{theorem}
\normalsize

\begin{remark}
Given $\epsilon/3 \in (0,\KL(\mu_0||\pi)\wedge 1)$, as per Theorem~\ref{thm:main_pairwise}, we can achieve $\E\left[\KL(\mu_T|\sR_{T-1}) - \KL(\pi)\right] \leq \epsilon$ by picking:
\begin{enumerate}
\item 
$\eta = \frac{8C_{\mathsf{LSI}}}{\sigma^2 T}\log \left(\tfrac{3\KL(\mu_0 || \pi)}{\epsilon} \right)$ 
\item $T \gtrsim \max\left(\frac{C_{\mathsf{LSI}}^2 d^3(L_V+L_W)^2}{\epsilon^2},\frac{C_{\mathsf{LSI}}^2 d^2 (L_V+L_W)^2}{\sigma^2\epsilon},\frac{C_{\mathsf{LSI}}^3(L_V+L_W)^3}{\sigma^6}\right)\log \left(\tfrac{3\KL(\mu_0||\pi)}{\epsilon} \right)$
\end{enumerate}

\end{remark}

\paragraph{Comparison with prior work}
We refer to the general discussion on establishing isoperimetry for $n$ particle systems in Section \ref{sec:intro}. In the specific case of pairwise interaction, prior work \cite{kook2024sampling} considers the standard particle algorithm for this problem and shows that the $n$-particle stationary distribution with $n = \frac{\sqrt{d}}{\epsilon} $, which ensures that the law of the first particle $Y_1$ is such that $\mathsf{KL}(\Law(Y_1)||\pi) \leq \epsilon^2$. To obtain sampling guarantees for the $n$ particle system, they further assume the potentials $V, W$ are decomposed as $V = V_0 + V_1, W = W_0 + W_1$, where $V_0, W_0$ are uniformly convex and $\mathsf{osc}(V_1) := \sup V_1 - \inf V_1 < \infty, \mathsf{osc}(W_1) := \sup W_1 - \inf W_1 < \infty$. Note that $V_1, W_1$ can be interpreted as perturbations of $V_0, W_0$, which are uniformly convex, and the assumption $\mathsf{osc}(V_1), \mathsf{osc}(W_1) < \infty$ means the perturbations $V_1, W_1$ are bounded. The isoperimetric constant $C_{\mathsf{LSI},n}$ is then bounded with an exponential dependence on $\mathsf{osc}(V_1) + \mathsf{osc}(W_1)$. Our work does not require these additional assumptions. However, the standard particle based method as above can use any standard sampling algorithm to sample from the $n$-particle stationary distribution (not restricted to LMC style Euler-Maruyama discretization as in this work).

\subsection{Mean Field Neural Network}
\label{subsec:mean_field}

In~\eqref{eq:mean_field}, we assume $P$ to be empirical distribution of $m$ data samples $(z_1,w_1),\dots,(z_m,w_m)$. Then, the functional in \eqref{eq:mean_field} simplifies to:
\begin{equation} \label{Eq:MFFunctional2}
    \sF(\mu) = \frac{1}{m}\sum_{i=1}^{m}\left(\int h(z_i,x)d\mu(x) - w_i \right)^2 + \frac{\lambda}{2} \int \|x\|^2d\mu(x) \,,
\end{equation}
 The Wasserstein gradient of this functional follows from \eqref{Eq:MFNWassGradient} to be: 
\[
\nabla_{\W}\sF(x;\mu) = \frac{2}{m}\sum_{i=1}^{m}\left(\int h(z_i,y)d\mu(y) - w_i \right)\nabla_x h(z_i,x) + \lambda x \,.
\]
The unique minimizer of the functional $\sE$ is given in \eqref{eq:mf_optimal}. We set $\bar{\sE}(\mu) = \sE(\mu) - \sE(\pi)$. We consider the proximal Gibbs distribution corresponding to $\mu$, which is given by:
\[
    \pi_{\mu}(x) \propto \exp\left(-\frac{2\delta \sF(x,\mu)}{\sigma^2}\right) \,,
\]
where $\delta \sF$ is the first-variation of the functional $\sF$. We let $\nu = \mathsf{Unif}([m])$. We denote the random variable $\xi$, used in the definition of the Wasserstein gradient estimator $\hat{G}$, by $I$ and choose $\hat{G}$ to be:
\[
    \hat{G}(x,y,i) := -(h(z_i,y) - w_i)\nabla_x h(z_i,x) - \lambda x \,,\, \forall \,\, x,y\in \mR^d \,, i \in [m] \,.
\]

\begin{assumption} [Boundedness]
\label{Assumption:MFNBoundedness}
    For every $x,z\in \mR^d$:
    $$\|h(z,x)\| \leq B ;\,\quad\, \|z\|\leq R; \quad  |w| \leq R; \quad \|\nabla_x h(z,x)\| \leq M\|z\| \,.$$
\end{assumption}

\begin{assumption} [Lipschitz continuity]
\label{Assumption:MFNLipschitz}
    The function $x \to \nabla h(z,x)$ is $L \|z\|$-Lipschitz.
\end{assumption}

Excluding special cases, the boundedness assumption on $h,\nabla_x h$ and Lipschitz assumption on $\nabla_x h$ are necessary to satisfy the general assumptions in prior works \cite{wang2024uniform,nitanda2023improved} for the square loss. For instance, we refer to \cite[Assumption 1]{nitanda2023improved}. 

\begin{assumption} [LSI]
\label{Assumption:MFNLSI}
    For any $q \in \sP_2(\mR^d)$, the proximal Gibbs distribution $\pi_q$ satisfies the LSI with constant $C_{\mathsf{LSI}}$. $\pi$ also satisfies LSI with the same constant.
\end{assumption}

The following lemma is proved in Section~\ref{subsec:assumption_mean_field_proof}
\begin{lemma}\label{lem:assumption_mean_field}
    Under Assumptions~\ref{Assumption:MFNBoundedness},~\ref{Assumption:MFNLipschitz} and~\ref{Assumption:MFNLSI}, the general Assumption~\ref{Assumption:Lipschitz} is satisfied with $L_u = L_{\bar{u}}= (B + R)LR + \lambda + M^2R^2$, $L_{\sF} = L_{\bar{\sF}} = M^2R^2 $, and $L_l = 0$.
The general Assumption~\ref{Assumption:functional} is satisfied with $C_{\mathsf{LSI}} = C_{\mathsf{LSI}}$, $C_{\bar{\sE}} = \tfrac{\sigma^2}{C_{\mathsf{LSI}}}$ and $C_{\KL} = \frac{2}{\sigma^2}$. The Assumption~\ref{Assumption:BoundVariance} is satisfied with $C^{\Var} = 0$ and $C^{\Var}_\nu = 4M^2R^2(B+R)^2$.
\end{lemma}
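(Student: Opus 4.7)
\bigskip

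\noindent\textbf{Proof plan for Lemma~\ref{lem:assumption_mean_field}.}

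The plan is to verify each of the three general assumptions separately, mostly by direct calculation using the explicit form $\hat{G}(x,y,i) = -(h(z_i,y)-w_i)\nabla_x h(z_i,x) - \lambda x$ and the corresponding Wasserstein gradient. I will first establish Assumption~\ref{Assumption:Lipschitz}. Lipschitzness of $\hat{G}$ in $x$ follows from writing $\hat{G}(x_1,y,i)-\hat{G}(x_2,y,i) = -(h(z_i,y)-w_i)(\nabla_x h(z_i,x_1)-\nabla_x h(z_i,x_2)) - \lambda(x_1-x_2)$ and bounding the first factor by $B+R$ (from Assumption~\ref{Assumption:MFNBoundedness}) and the second by $L\|z_i\|\,\|x_1-x_2\| \le LR\|x_1-x_2\|$ (from Assumption~\ref{Assumption:MFNLipschitz}), giving $(B+R)LR+\lambda$. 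Lipschitzness in $y$ follows from the fact that $\|\nabla_x h(z,\cdot)\| \le M\|z\|$ implies $h(z,\cdot)$ is $M\|z\|$-Lipschitz, producing the $M^2 R^2$ term; taking the sum yields the stated $L_u$. The Lipschitz bounds (i) and (ii) on $\nabla_\W\sF$ and $\nabla_\W\bar{\sF}$ follow analogously, using that $|f(\mu;z_i)-f(\nu;z_i)| \le M\|z_i\|\W_2(\mu,\nu)$ via Kantorovich--Rubinstein. Finally $L_l=0$ since $\bar{\sE}(\mu)=\sE(\mu)-\sE(\pi)$ makes $\bar{\sF}$ equal to $\sF$ up to an additive constant, so $\nabla_\W\bar{\sF}=\nabla_\W\sF$.

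For Assumption~\ref{Assumption:functional}, I would introduce the proximal Gibbs distribution $\pi_\mu(x)\propto \exp(-\tfrac{2}{\sigma^2}\delta\sF(x,\mu))$. A direct computation shows $\nabla_\W\bar{\sE}(x,\mu) = \nabla\delta\sF(x,\mu)+\tfrac{\sigma^2}{2}\nabla\log\mu(x) = \tfrac{\sigma^2}{2}\nabla\log(\mu/\pi_\mu)(x)$, hence $\|\nabla_\W\bar{\sE}\|_{L^2(\mu)}^2 = \tfrac{\sigma^4}{4}\FD(\mu\|\pi_\mu)$. The LSI for $\pi_\mu$ (Assumption~\ref{Assumption:MFNLSI}) then yields $\FD(\mu\|\pi_\mu)\ge \tfrac{2}{C_{\mathsf{LSI}}}\KL(\mu\|\pi_\mu)$, and one standard identity gives $\KL(\mu\|\pi_\mu) \ge \tfrac{2}{\sigma^2}(\sE(\mu)-\sE(\pi)) = \tfrac{2}{\sigma^2}\bar{\sE}(\mu)$ (this goes back to \cite{nitanda2022convex}). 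Combining gives the PL inequality with $C_{\bar{\sE}}=\sigma^2/C_{\mathsf{LSI}}$. The LSI for $\pi$ is assumed directly, and the KL-growth bound with $C_{\KL}=2/\sigma^2$ comes from convexity of $\sF$ in $\mu$ (the square loss is a composition of a convex function with a linear functional): expanding $\sE(\mu)-\sE(\pi)$ and using $\pi\propto\exp(-\tfrac{2}{\sigma^2}\delta\sF(\cdot,\pi))$ yields $\bar{\sE}(\mu)\ge \tfrac{\sigma^2}{2}\KL(\mu\|\pi)$.

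For Assumption~\ref{Assumption:BoundVariance}, the key observation is that because $\hat{G}$ is unbiased for $-\nabla_\W\sF(x,\rho_0)$, the random vector $u(x,Y,I)+\nabla_\W\sF(x,\rho_0)$ is centered. Its variance equals $\mathrm{Var}\bigl(-(h(z_I,Y)-w_I)\nabla_x h(z_I,x)\bigr)$, since the $\lambda x$ term is deterministic. Bounding by the second moment and using $|h(z_I,Y)-w_I|\le B+R$ together with $\|\nabla_x h(z_I,x)\|\le M\|z_I\|\le MR$ gives an $x$-independent and $\rho_0$-independent bound of order $M^2 R^2(B+R)^2$, yielding $C^{\Var}=0$ and $C_{\nu}^{\Var}=4M^2R^2(B+R)^2$ after absorbing the relevant constants from the two-source decomposition.

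The main technical step I expect to be nontrivial is the PL inequality: it is the only place where one must go through the proximal Gibbs distribution rather than the target $\pi$ directly, and the identity $\nabla_\W\bar{\sE}(x,\mu)=\tfrac{\sigma^2}{2}\nabla\log(\mu/\pi_\mu)$ together with the sandwich between $\KL(\mu\|\pi_\mu)$ and $\bar{\sE}(\mu)$ has to be assembled carefully. Everything else is routine manipulation of the explicit expressions for $\hat{G}$ and $\nabla_\W\sF$ using only boundedness and Lipschitzness of $h$ and $\nabla_x h$.
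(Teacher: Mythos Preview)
Your proposal is correct and takes essentially the same approach as the paper's proof: direct verification of the Lipschitz constants from the explicit form of $\hat G$, the proximal-Gibbs route $\nabla_\W\bar\sE(x,\mu)=\tfrac{\sigma^2}{2}\nabla\log(\mu/\pi_\mu)$ combined with the sandwich inequality from \cite{nitanda2022convex} for the PL and KL-growth bounds, and a pointwise bound on $u+\nabla_\W\sF$ for the variance assumption. The only cosmetic difference is that the paper obtains the constant $4M^2R^2(B+R)^2$ by the crude pointwise bound $\|u(x,y,i)+\nabla_\W\sF(x,\rho_0)\|\le 2(B+R)MR$ (triangle inequality on the two terms after the $\lambda x$ cancels), rather than via a variance decomposition; your route would in fact give a slightly tighter constant.
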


\begin{theorem}\label{thm:main_mean_field}
Consider the case of the Mean Field Neural Network with square loss in Equation~\eqref{eq:mean_field} under Assumptions~\ref{Assumption:MFNLipschitz},~\ref{Assumption:MFNBoundedness} and~\ref{Assumption:MFNLSI}. We consider Algorithm~\ref{alg:main_algo} with $\hat{G}$ as defined above and $\eta < c_0 \min\left(\frac{C_{\mathsf{LSI}}}{\sigma^2},\frac{\sigma^4}{C^2_{\mathsf{LSI}}L_u^3}\right)$ for some $c_0 > 0$ small enough and $L_u = (B + R)LR + \lambda + M^2R^2$. Then for some universal constant $C > 0$: \small
\begin{align*}
 &\E\sE(\mu_T|\sR_{T-1})-\sE(\pi)\nonumber \\ &\leq e^{-\frac{T\eta \sigma^2}{8C_{\mathsf{LSI}}}}(\sE(\mu_0) - \sE(\pi)) + C\frac{C_{\mathsf{LSI}}}{\sigma^2}\left[ \eta (\sigma^2 L^2_u d + L_uM^2R^2(B+R)^2)  + \sqrt{\eta}\sigma d L_u M R (B+R) \right] \,.  
\end{align*} \normalsize
\end{theorem}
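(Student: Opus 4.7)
The strategy is to reduce Theorem~\ref{thm:main_mean_field} to a direct application of Theorem~\ref{thm:main_theorem}, using Lemma~\ref{lem:assumption_mean_field} as the bridge from the problem-specific hypotheses (Assumptions~\ref{Assumption:MFNBoundedness},~\ref{Assumption:MFNLipschitz},~\ref{Assumption:MFNLSI}) to the general framework. I choose $\bar{\sF} = \sF$, so that $\bar{\sE}(\mu) = \sE(\mu) - \sE(\pi)$ with $\pi$ the unique minimizer in \eqref{eq:mf_optimal}; this is what kills the linearization error, since Lemma~\ref{lem:assumption_mean_field} gives $L_l = 0$ in this case. The output of Lemma~\ref{lem:assumption_mean_field} supplies the constants $L_u = L_{\bar{u}} = (B+R)LR + \lambda + M^2R^2$, $L_\sF = L_{\bar{\sF}} = M^2R^2$, $C_{\bar{\sE}} = \sigma^2/C_{\mathsf{LSI}}$, $C_{\KL} = 2/\sigma^2$, $C^{\Var} = 0$ and $C^{\Var}_\nu = 4M^2R^2(B+R)^2$.

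Next, I verify the hypotheses of the descent lemma (Lemma~\ref{lem:descent_lemma}). Since $L_l = 0$, the condition $C_{\bar{\sE}} \geq 8 L_l^2 C_{\KL} C_{\mathsf{LSI}}$ is trivial. Substituting the above constants into the five-way minimum that bounds $\eta$ in Lemma~\ref{lem:descent_lemma}: the $C^{\Var}/\cdots$ entry is $+\infty$ because $C^{\Var}=0$; using $L_{\bar{u}}+L_{\bar{\sF}} \lesssim L_u$, the third entry reduces to a constant multiple of $\sigma^4/(C_{\mathsf{LSI}}^2 L_u^3)$, and the second entry is $C_{\mathsf{LSI}}/\sigma^2$. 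The fourth and the $1/L_u$ bounds are then implied (after adjusting $c_0$) by these two, whence the stated step-size condition $\eta < c_0 \min(C_{\mathsf{LSI}}/\sigma^2, \sigma^4/(C_{\mathsf{LSI}}^2 L_u^3))$ suffices.

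With assumptions in force, I then compute the constants $\gamma_1,\gamma_2,\gamma_3$ of Lemma~\ref{lem:descent_lemma}. The vanishing of $C^{\Var}$ eliminates every sub-term containing $\Var(\pi)$, so $\gamma_3 = (L_u^2 + L_\sF^2)((\sigma^*)^2 + G_\pi^2)$ and $\gamma_2 = (L_u^2 + L_\sF^2)\sigma^2 d + L_u G_{\mathsf{mod}}\sqrt{C^{\Var}_\nu} + (L_u + L_{\bar{u}} + L_{\bar{\sF}})\,C^{\Var}_\nu$, while $\gamma_1 = 2\sigma d L_u MR(B+R)$. Using Assumption~\ref{Assumption:MFNBoundedness}, the Wasserstein-gradient identity for $\nabla_\W \sF$ displayed before \eqref{Eq:MFFunctional2} is uniformly bounded by $\lesssim MR(B+R) + \lambda\,\mathbb{E}_\pi\|x\|$, which controls $G_\pi$, $G_{\mathsf{mod}}$, and $(\sigma^*)^2$ in terms of problem parameters. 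An AM-GM bound on the cross term $L_u G_{\mathsf{mod}}\sqrt{C^{\Var}_\nu}$ splits it between $L_u^2 \sigma^2 d$ and $L_u M^2 R^2(B+R)^2$, giving $\gamma_2 \lesssim L_u^2 \sigma^2 d + L_u M^2 R^2(B+R)^2$ and $L_\sF^2 \leq L_u^2$.

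Finally, I plug into Theorem~\ref{thm:main_theorem}. The factor $1/C_{\bar{\sE}}$ equals $C_{\mathsf{LSI}}/\sigma^2$ and the exponential rate $e^{-\eta C_{\bar{\sE}} T/8}$ becomes $e^{-\eta\sigma^2 T/(8C_{\mathsf{LSI}})}$, matching the stated decay. The $\gamma_1\sqrt{\eta}/C_{\bar{\sE}}$ term yields the $\sqrt{\eta}\,\sigma d L_u MR(B+R)$ summand, and the $\gamma_2\eta/C_{\bar{\sE}}$ term yields $\eta(\sigma^2 L_u^2 d + L_u M^2 R^2(B+R)^2)$, both multiplied by $C_{\mathsf{LSI}}/\sigma^2$. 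The $\gamma_3\eta^2/C_{\bar{\sE}}$ term is a higher-order contribution: $\gamma_3\eta \lesssim L_u^2 \cdot M^2R^2(B+R)^2 \cdot \eta$ which, under $\eta \leq c_0 C_{\mathsf{LSI}}/\sigma^2$, is absorbed into the $\gamma_2 \eta/C_{\bar{\sE}}$ summand. The main obstacle will be the bookkeeping of these absorptions, particularly ensuring $G_\pi$, $\sigma^*$, and $G_{\mathsf{mod}}$ are bounded by constants expressible in $B,R,M,L,\lambda$, and verifying that the minimal step-size condition in the theorem statement dominates the full five-way minimum of Lemma~\ref{lem:descent_lemma}.
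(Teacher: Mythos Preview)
Your overall strategy---invoke Lemma~\ref{lem:assumption_mean_field} to feed constants into Theorem~\ref{thm:main_theorem}, then simplify $\gamma_1,\gamma_2,\gamma_3$---is exactly the paper's route. However, there is a genuine gap in your control of $G_\pi$ and $G_{\mathsf{mod}}$. The pointwise bound you propose on $\nabla_\W\sF(x,\pi)$ is not uniform: the $\lambda x$ term in the gradient gives $\|\nabla_\W\sF(x,\pi)\|\lesssim MR(B+R)+\lambda\|x\|$, not $\lambda\,\E_\pi\|x\|$. Squaring and integrating brings in $\lambda^2\,\E_\pi\|x\|^2$, which you then have to control; using $\Var(\pi)\le dC_{\mathsf{LSI}}$ and bounding the mean inserts an extra $C_{\mathsf{LSI}}$ into $G_\pi^2$, and your subsequent AM-GM split of $L_u G_{\mathsf{mod}}\sqrt{C^{\Var}_\nu}$ between $L_u^2\sigma^2 d$ and $L_uM^2R^2(B+R)^2$ no longer goes through---you pick up a residual $\eta\,L_u\lambda^2 d\,C_{\mathsf{LSI}}$ that does not match the theorem's displayed bound. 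Your claimed $\gamma_3\eta\lesssim L_u^2 M^2R^2(B+R)^2\eta$ likewise drops the $G_\pi^2$ contribution.

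The paper's fix is to bound $G_\pi$ via Lemma~\ref{lem:energy_gradient} instead: since $\pi\propto\exp\!\bigl(-\tfrac{2}{\sigma^2}\delta\sF(\cdot,\pi)\bigr)$ with $\tfrac{2}{\sigma^2}\delta\sF(\cdot,\pi)$ being $\tfrac{2L_u}{\sigma^2}$-smooth, one gets directly $G_\pi^2=G_{\mathsf{mod}}^2\le \tfrac{d\sigma^2 L_u}{2}$, with no moment of $\pi$ required. This is what makes your AM-GM split legitimate and yields $\gamma_3\lesssim \sigma^2 L_u^3 d + L_u^2 M^2R^2(B+R)^2$, so that $\gamma_3\eta = L_u\eta\cdot\gamma_2$ is absorbed via $\eta L_u\le c_0$. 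You also need Lemma~\ref{lem:lsi_lb}, i.e.\ $C_{\mathsf{LSI}}\ge \sigma^2/(2L_u)$, both to reduce the five-way step-size minimum to the two-term minimum in the statement (your claim that the $1/L_u$ and fourth entries are implied relies on it) and to carry out the absorption of the $\gamma_3\eta^2$ term; this ingredient is missing from your outline.
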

\begin{remark}
Given $\epsilon/3 \in (0,\sE(\mu_0)-\sE(\pi))$, as per Theorem~\ref{thm:main_mean_field}, we can achieve $\E\sE(\mu_T|\sR_{T-1}) - \sE(\pi) \leq \epsilon$ by picking
\begin{enumerate}
\item 
$\eta = \frac{8C_{\mathsf{LSI}}}{\sigma^2  T}\log\left(\tfrac{3(\sE(\mu_0)-\sE(\pi))}{\epsilon}\right)$ 
\item $T \gtrsim \max\left(\frac{C_{\mathsf{LSI}}^3d^2L_u^2M^2R^2(B+R)^2}{\sigma^4\epsilon^2},\frac{C_{\mathsf{LSI}}^2(\sigma^2L_u^2d +L_uM^2R^2(B+R)^2)}{\sigma^4\epsilon},\frac{L_u^3 C_{\mathsf{LSI}}^3}{\sigma^6}\right)\log\left(\tfrac{3(\sE(\mu_0)-\sE(\pi))}{\epsilon}\right)$
\end{enumerate}

\end{remark}

\section{Conclusion}
We introduce a novel convergence analysis of virtual particle stochastic approximation, achieving state-of-the-art convergence when unbiased estimators for the Wasserstein gradient are accessible. Future research directions include extending this analysis to scenarios involving biased-but-consistent estimators, leveraging techniques to convert them into unbiased estimators with heavy tails \cite{blanchet2015unbiased}.  Furthermore, investigating gradient flows within the Wasserstein-Fischer-Rao Geometry \cite{chizat2018unbalanced,chizat2018interpolating}, which has demonstrated significant potential in statistics and inference, presents a promising avenue. Finally, exploring the algorithm's behavior under weaker conditions, such as the Poincaré inequality, is another interesting area for future study.

\section{Acknowledgements}
Anant Raj was supported by a grant from Ittiam Systems Private Limited through the Ittiam Equitable AI Lab.
\bibliography{references}
\bibliographystyle{alpha}

\newpage
\appendix


\section{Useful Technical Results}
Recall that the random variable $X_t$ is defined as $X_t := X_0 + tu(X_0,Y,\xi) + \sigma B_t$, where $u: \R^d\times\R^d \times \R^m \to \R^d$ is the velocity field. Also, note that $\rho_t := \Law(X_t)$ and $\rho_t(\cdot |Y,\xi) := \Law(X_t | Y,\xi)$. Let $\Var(\mu)$ denote the trace of the covariance matrix of the probability measure $\mu\in \sP_2(\mR^d)$ and $\Var(u(x,Y,\xi))$ the variance of the velocity field $u$ with respect to the random variables $Y,\xi$, for every fixed $x\in \mR^d$. Next, $\W_p(\mu,\nu)$ denotes the $p$-Wasserstein distance between the probability measures $\mu, \nu \in \sP_2(\mR^d)$ and $\nabla_{\W} \sF$ the Wasserstein gradient of the functional $\sF$. Recall that, for a functional $\bar{\sF}$, we define $\bar{\sE}(\mu) := \bar{\sF}(\mu) + \frac{\sigma^2}{2}\sH(\mu) - \bar{\sF}(\pi) - \frac{\sigma^2}{2}\sH(\pi)$, where $\pi$ is the minimizer of the functional $\bar{\sF} + \frac{\sigma^2}{2}\sH$. The functional $\bar{\sE}$ need not be the same as the functional $\sE$. Let $C_{\bar{\sE}}, C_{\mathsf{LSI}}, C_{\mathsf{KL}}$ be the constants corresponding to \eqref{eq:PL}, \eqref{eq:LSI}, \eqref{eq:kl_growth} in Assumption \ref{Assumption:functional}, respectively and $L_{\bar{u}}, L_{\bar{\sF}}$ the constants defined in Assumption~\ref{Assumption:Lipschitz}. Furthermore, $\| \cdot \|$ denotes the $L^2$ norm of a function.

The following lemma bounds the expected Wasserstein distance between $\rho_t,\rho_{t}(|Y,\xi)$.
\begin{lemma}\label{lem:cond_wass_bound}
    $$\E \W_2^2(\rho_t,\rho_t(\cdot |Y,\xi)) \leq  2t^2\E_{x\sim \rho_0}\mathsf{Var}(u(x,Y,\xi))$$

$$\E \W_2^2(\rho_0,\rho_t(\cdot |Y,\xi)) \leq \sigma^2 td + t^2\E \|u(X_0,Y,\xi)\|^2$$
\end{lemma}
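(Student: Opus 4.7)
The plan is to prove both bounds by writing down explicit couplings between the relevant measures and applying the definition of $\W_2$.

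For the second inequality I would use the synchronous coupling: conditional on $(Y,\xi)$, the pair $(X_0, X_t)$ with $X_t = X_0 + t u(X_0,Y,\xi) + \sigma B_t$ is a valid coupling of $\rho_0$ and $\rho_t(\cdot\mid Y,\xi)$, so
\[
\W_2^2(\rho_0, \rho_t(\cdot\mid Y,\xi)) \le \E\bigl[\|X_t - X_0\|^2 \bigm| Y,\xi\bigr] = t^2\,\E\bigl[\|u(X_0,Y,\xi)\|^2 \bigm| Y,\xi\bigr] + \sigma^2 t d,
\]
where the cross term $2t\sigma\,\E[\langle u(X_0,Y,\xi), B_t\rangle\mid Y,\xi]$ vanishes because $B_t$ is independent of $(X_0,Y,\xi)$ and has mean zero, and $\E\|B_t\|^2 = td$. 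Taking expectation over $(Y,\xi)$ yields the claimed bound.

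For the first inequality the idea is to decouple the dependence on $(Y,\xi)$. I would introduce an independent copy $(Y',\xi') \sim \rho_0 \times \nu$, chosen independent of $(X_0,Y,\xi,B_t)$, and set $\tilde X_t := X_0 + t u(X_0, Y', \xi') + \sigma B_t$. Conditional on $(Y,\xi)$, the random variable $X_t$ has law $\rho_t(\cdot\mid Y,\xi)$ by definition, while $\tilde X_t$ has law $\rho_t$: this is because $(X_0, Y', \xi', B_t)$ is independent of $(Y,\xi)$, so the conditional law of $\tilde X_t$ given $(Y,\xi)$ coincides with its unconditional law, which is $\rho_t$ since $(Y',\xi')$ is equidistributed with $(Y,\xi)$. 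Hence $(X_t, \tilde X_t)$ is a coupling of $\rho_t(\cdot\mid Y,\xi)$ and $\rho_t$, and the common Brownian term cancels:
\[
\W_2^2(\rho_t, \rho_t(\cdot\mid Y,\xi)) \le \E\bigl[\|X_t - \tilde X_t\|^2 \bigm| Y,\xi\bigr] = t^2 \E\bigl[\|u(X_0,Y,\xi) - u(X_0,Y',\xi')\|^2 \bigm| Y,\xi\bigr].
\]
After taking expectation over $(Y,\xi)$ and then conditioning on $X_0$, the i.i.d.\ property of $(Y,\xi)$ and $(Y',\xi')$ gives $\E\|u(X_0,Y,\xi) - u(X_0,Y',\xi')\|^2 = 2\,\E_{X_0 \sim \rho_0}\Var\bigl(u(X_0,Y,\xi)\bigr)$, which is exactly the stated bound.

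The step I expect to require the most care is the construction of the coupling in the first bound, in particular verifying that $\tilde X_t$ really has the unconditional law $\rho_t$ conditionally on $(Y,\xi)$ — once this is set up correctly, everything else reduces to an elementary expansion of the square and the identity $\E\|A-A'\|^2 = 2\Var(A)$ for i.i.d.\ copies.
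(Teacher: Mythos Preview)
Your proof is correct and largely mirrors the paper's. For the second inequality you and the paper use the identical synchronous coupling. For the first inequality, the paper proceeds slightly differently: it first invokes convexity of $\W_2^2$ in its arguments together with $\rho_t = \E[\rho_t(\cdot\mid Y',\xi')]$ to obtain $\E\W_2^2(\rho_t,\rho_t(\cdot\mid Y,\xi)) \le \E\W_2^2(\rho_t(\cdot\mid Y',\xi'),\rho_t(\cdot\mid Y,\xi))$, and only then couples the two conditional laws via $X_t' = X_0 + tu(X_0,Y',\xi') + \sigma B_t$ and $X_t = X_0 + tu(X_0,Y,\xi) + \sigma B_t$. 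You bypass the convexity step by observing directly that, since $(X_0,Y',\xi',B_t)$ is independent of $(Y,\xi)$, your $\tilde X_t$ already has law $\rho_t$ conditionally on $(Y,\xi)$, giving a coupling of $\rho_t$ and $\rho_t(\cdot\mid Y,\xi)$ outright. Both routes land on the same expression $t^2\E\|u(X_0,Y,\xi)-u(X_0,Y',\xi')\|^2$, and your version is arguably a touch more elementary since it does not appeal to convexity of the Wasserstein distance.
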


\begin{proof}
Notice that $\rho_t(\cdot) = \E \rho_t(\cdot|Y,\xi)$. Since Wasserstein distance is convex in each of its co-ordinates and $x \to x^2$ is increasing and convex over $\R^{+}$, we have: 
$$\E\W_2^{2}(\rho_t,\rho_t(\cdot |Y,\xi)) \leq \E\W_2^{2}(\rho_t(\cdot |Y',\xi'),\rho_t(\cdot |Y,\xi))\,,$$
where $(Y',\xi')$ is an independent copy of $(Y,\xi)$. We now couple $\rho_t(\cdot |Y',\xi')$ and $\rho_t(\cdot |Y,\xi)$ for a given $Y,\xi$ as follows: 
$$X'_t = X_0 + tu(X_0,Y',\xi') + \sigma B_t \,;\quad\quad X_t = X_0 + tu(X_0,Y,\xi) + \sigma B_t \,.$$
Therefore, 
\begin{align}
    \E\W_2^{2}(\rho_t,\rho_t(\cdot |Y,\xi)) \leq t^2\E \|u(X_0,Y',\xi')-u(X_0,Y,\xi)\|^2 = 2t^2\E_{x\sim \rho_0}\mathsf{Var}(u(x,Y,\xi))
\end{align}
Finally, we couple $\rho_0$ and $\rho_t(\cdot |Y,\xi)$ by $X_0 \sim \rho_0$ and $X_t = X_0 + tu(X_0,Y,\xi) + \sigma B_t$. This implies,
$\E \W_2^{2}(\rho_0,\rho_t(\cdot |Y,\xi)) \leq \sigma^2 td + t^2\E \|u(X_0,Y,\xi)\|^2$.
\end{proof}

\begin{lemma}[Lemma 11 in \cite{vempalawibisono}]\label{lem:energy_gradient}
    Let $\nu$ be a probability measure over $\R^d$ with density $\nu(x) \propto e^{-F(x)}$ where $F$ is $L$-smooth. Then, 
    $$\E_{x\sim \nu}\|\nabla F(x)\|^2 \leq dL$$
\end{lemma}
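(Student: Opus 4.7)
The plan is to reduce $\E_{x \sim \nu}\|\nabla F(x)\|^2$ to $\E_{x \sim \nu}[\Delta F(x)]$ via integration by parts, and then exploit the $L$-smoothness of $F$ to bound the Laplacian pointwise by $dL$. The central identity is $\nabla e^{-F(x)} = -e^{-F(x)}\nabla F(x)$, which lets one rewrite the integrand as
$\|\nabla F(x)\|^2 e^{-F(x)} = -\langle \nabla F(x), \nabla e^{-F(x)}\rangle$.
Integration by parts then shifts the derivative onto $\nabla F$ and produces $(\Delta F(x))\, e^{-F(x)}$ on the right-hand side. Dividing through by the normalizing constant $Z = \int e^{-F(x)}\,dx$ yields $\E_\nu\|\nabla F\|^2 = \E_\nu[\Delta F]$.

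Once this identity is in hand, the bound is immediate. The $L$-smoothness hypothesis means that $\nabla^2 F(x) \preceq L\, \vI$ pointwise (in fact $-L\,\vI \preceq \nabla^2 F(x) \preceq L\,\vI$), so $\Delta F(x) = \mathrm{tr}(\nabla^2 F(x)) \leq dL$ for every $x \in \R^d$. Taking expectations gives $\E_\nu[\Delta F] \leq dL$, which combined with the identity above delivers the claim.

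The main obstacle I anticipate is rigorously justifying the integration by parts, since this requires the boundary contribution $\int_{\partial B_R} \langle \nabla F, \mathbf{n}\rangle\, e^{-F}\, dS$ on spheres of radius $R$ to vanish as $R \to \infty$. For $\nu = e^{-F}/Z$ to be a probability measure with $\nabla F$ being $L$-Lipschitz, $F$ grows at most quadratically while $e^{-F}$ remains integrable, which forces enough decay of $\|\nabla F(x)\|\, e^{-F(x)}$ at infinity that a standard truncation argument over balls $B_R$ with $R \to \infty$ kills the boundary term. Modulo this technical justification, the argument is a two-line computation, and no further ingredients are required.
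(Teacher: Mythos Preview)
The paper does not supply its own proof of this lemma; it is simply quoted as Lemma~11 of \cite{vempalawibisono}. Your integration-by-parts argument reducing $\E_\nu\|\nabla F\|^2$ to $\E_\nu[\Delta F]$ and then bounding the trace of the Hessian by $dL$ is correct and is precisely the proof given in the cited reference, so there is nothing further to compare.
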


\begin{lemma}[Otto-Villani Theorem \cite{ottovillani}]
\label{lem:OttoVillaniTheorem}
    Let $\pi$ satisfy LSI with constant $C_{\mathsf{LSI}}$. Then $\pi$ satisfies Talangrand's inequality $T_p$ for any $p\in [1,2]$, i.e., for all $\mu \in \sP_{2,ac}(\mR^d)$:
    \[
        \W^2_p(\mu, \pi) \leq 2C_{\mathsf{LSI}} \KL(\mu,\pi) \,.
    \]
\end{lemma}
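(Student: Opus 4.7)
\textbf{Proof proposal for Lemma \ref{lem:OttoVillaniTheorem} (Otto--Villani).}

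The plan is to follow the ``heat flow / gradient flow of entropy'' approach, which avoids any curvature hypotheses and derives Talagrand's $T_2$ directly from LSI; the $T_p$ statement for $p\in[1,2]$ then follows from the monotonicity $\W_p\le \W_2$ (Jensen's inequality applied to optimal couplings). The core object will be the Wasserstein gradient flow $(\mu_t)_{t\ge 0}$ of $\KL(\cdot\,\|\,\pi)$ started at $\mu_0=\mu$, i.e.\ the solution of the Fokker--Planck equation $\partial_t \mu_t=\nabla\!\cdot\!\bigl(\mu_t \nabla\log\tfrac{\mu_t}{\pi}\bigr)$. Under mild regularity (which one can enforce by first regularizing $\mu$ with a small Gaussian convolution and passing to the limit), this flow exists, has velocity field $v_t=-\nabla\log(\mu_t/\pi)\in L^2(\mu_t)$, and converges to $\pi$ as $t\to\infty$.

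First I would record the two key identities along the flow. By the standard computation for the relative entropy dissipation,
\[
    \tfrac{d}{dt}\KL(\mu_t\,\|\,\pi) \;=\; -\FD(\mu_t\,\|\,\pi)\;=\;-\|v_t\|_{L^2(\mu_t)}^2.
\]
By the continuity-equation characterization of $\W_2$ (Benamou--Brenier), the curve $t\mapsto \mu_t$ has metric derivative $|\dot\mu_t|_{\W_2}=\|v_t\|_{L^2(\mu_t)}=\sqrt{\FD(\mu_t\,\|\,\pi)}$, so for every $T>0$,
\[
    \W_2(\mu_0,\mu_T) \;\le\; \int_0^T \|v_t\|_{L^2(\mu_t)}\,dt \;=\; \int_0^T \sqrt{\FD(\mu_t\,\|\,\pi)}\,dt.
\]

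Next I would combine these with the LSI hypothesis $\KL(\mu_t\,\|\,\pi)\le \tfrac{C_{\mathsf{LSI}}}{2}\FD(\mu_t\,\|\,\pi)$. Writing $\phi(t):=\KL(\mu_t\,\|\,\pi)$, so that $\phi'(t)=-\FD(\mu_t\,\|\,\pi)\le 0$, the LSI gives $\sqrt{\FD(\mu_t\,\|\,\pi)}\le \sqrt{\tfrac{C_{\mathsf{LSI}}}{2}}\cdot\tfrac{-\phi'(t)}{\sqrt{\phi(t)}}$ on the set where $\phi(t)>0$ (use $\sqrt{a}=a/\sqrt{a}$ together with LSI in the denominator). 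Plugging this into the displayed inequality and using $\tfrac{d}{dt}(2\sqrt{\phi(t)})=\phi'(t)/\sqrt{\phi(t)}$,
\[
    \W_2(\mu_0,\mu_T) \;\le\; \sqrt{\tfrac{C_{\mathsf{LSI}}}{2}}\int_0^T \frac{-\phi'(t)}{\sqrt{\phi(t)}}\,dt \;=\; \sqrt{2C_{\mathsf{LSI}}}\bigl(\sqrt{\phi(0)}-\sqrt{\phi(T)}\bigr).
\]
Since exponential decay of $\phi$ under LSI yields $\phi(T)\to 0$, and since $\mu_T\to\pi$ in $\W_2$ by the same decay plus Talagrand's bound applied in reverse after a standard approximation (or alternatively by lower semicontinuity of $\W_2$ combined with $\mu_T\rightharpoonup\pi$ with uniformly bounded second moments, which follows from $\int\|x\|^2 d\mu_t$ being controlled via $\KL$ and the Gaussian concentration implied by LSI), passing $T\to\infty$ gives $\W_2(\mu,\pi)\le \sqrt{2C_{\mathsf{LSI}}\,\KL(\mu\,\|\,\pi)}$. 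Squaring yields the $p=2$ case, and $\W_p\le\W_2$ for $p\in[1,2]$ completes the proof.

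The main obstacle is the regularity/justification of the formal computations: one needs that $\mu_t$ is absolutely continuous with finite Fisher information for $t>0$ and that the Benamou--Brenier length estimate applies. I would handle this by an approximation argument -- either regularize $\mu$ (e.g.\ by a short-time heat convolution so that $\FD(\mu\,\|\,\pi)<\infty$) and take limits using lower semicontinuity of $\W_2$ and $\KL$, or alternatively assume WLOG that $\KL(\mu\,\|\,\pi)<\infty$ (otherwise the target inequality is trivial) and invoke the well-posedness and regularization theory of the Fokker--Planck equation associated with the Langevin SDE having invariant measure $\pi$. The $\W_p\le \W_2$ reduction for $p\in[1,2]$ is routine and can be stated in a single line at the end.
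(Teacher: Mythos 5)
The paper does not prove this lemma at all---it is imported verbatim as a classical result with a citation to Otto--Villani---so there is no in-paper argument to compare against. Your proposal is the standard gradient-flow proof of $\mathrm{LSI}\Rightarrow T_2$ and it is correct: the entropy-dissipation identity $\frac{d}{dt}\KL(\mu_t\|\pi)=-\FD(\mu_t\|\pi)$, the Benamou--Brenier metric-speed bound $|\dot\mu_t|_{\W_2}=\sqrt{\FD(\mu_t\|\pi)}$, and the LSI in the form $\sqrt{\FD}\le\sqrt{C_{\mathsf{LSI}}/2}\,(-\phi')/\sqrt{\phi}$ combine exactly as you write to give $\W_2(\mu,\pi)\le\sqrt{2C_{\mathsf{LSI}}}\,\sqrt{\KL(\mu\|\pi)}$, and your constants match the paper's normalization $\KL\le\frac{C_{\mathsf{LSI}}}{2}\FD$. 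The passage to the limit $T\to\infty$ is handled most cleanly by the variant you mention second: $\KL(\mu_T\|\pi)\to0$ gives $\mu_T\rightharpoonup\pi$ (Pinsker), and lower semicontinuity of $\W_2(\mu_0,\cdot)$ under weak convergence already yields $\W_2(\mu_0,\pi)\le\liminf_T\W_2(\mu_0,\mu_T)$ without needing $\W_2(\mu_T,\pi)\to0$ or uniform second-moment control; the "Talagrand applied in reverse" phrasing should be dropped since it is circular as stated. The final reduction $\W_p\le\W_2$ for $p\in[1,2]$ is indeed one line. The remaining regularity caveats (well-posedness of the Fokker--Planck flow, finiteness of Fisher information for $t>0$, reduction to $\KL(\mu\|\pi)<\infty$) are genuine but standard, and your proposed regularize-and-pass-to-the-limit strategy is the accepted way to close them; at the level of a proof sketch the argument is sound.
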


\begin{lemma}\label{lem:control_from_pi}
For any $\mu,\pi \in \sP_2(\R^d)$, we have:

\begin{enumerate}
    \item     $$\mathsf{Var}(\mu) \leq 2 \W_2^{2}(\mu,\pi) + 2\mathsf{Var}(\pi)$$
    \item  Suppose Assumption~\ref{Assumption:Lipschitz} holds. Let $X \sim \mu$ and $X^{*} \sim \pi$. Then:
$$\E\|\nabla_{\W}\bar{\sF}(X,\mu)\|^2 \leq 3(L^2_{\bar{u}} + L^2_{\bar{\sF}})\W_2^{2}(\mu,\pi)+3\E\|\nabla_{\W}\bar{\sF}(X^*,\pi)\|^2 $$
\end{enumerate}
\end{lemma}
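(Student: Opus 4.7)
Both parts will follow from the optimal transport coupling between $\mu$ and $\pi$ combined with elementary inequalities; no deep tools are needed.

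For part (1), the plan is to use the variational characterization of variance: $\mathsf{Var}(\mu) = \inf_{c \in \R^d} \E_{X \sim \mu}\|X-c\|^2$. Let $\bar{c}$ denote the mean of $\pi$, so that $\mathsf{Var}(\pi) = \E_{Y \sim \pi}\|Y-\bar{c}\|^2$. Take $(X,Y)$ to be an optimal coupling of $(\mu,\pi)$, meaning $\E\|X-Y\|^2 = \W_2^2(\mu,\pi)$. Then
\[
\mathsf{Var}(\mu) \leq \E\|X-\bar{c}\|^2 = \E\|(X-Y)+(Y-\bar{c})\|^2 \leq 2\E\|X-Y\|^2 + 2\E\|Y-\bar{c}\|^2,
\]
using $\|a+b\|^2 \leq 2\|a\|^2 + 2\|b\|^2$. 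The right-hand side is exactly $2\W_2^2(\mu,\pi) + 2\mathsf{Var}(\pi)$.

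For part (2), the key observation is that $\E\|\nabla_\W \bar{\sF}(X,\mu)\|^2$ and $\E\|\nabla_\W \bar{\sF}(X^*,\pi)\|^2$ depend only on the marginal laws of $X$ and $X^*$, so we are free to couple them however we like. Choose $(X,X^*)$ to be an optimal coupling of $(\mu,\pi)$. Decompose
\[
\nabla_\W \bar{\sF}(X,\mu) = \bigl[\nabla_\W \bar{\sF}(X,\mu) - \nabla_\W \bar{\sF}(X^*,\mu)\bigr] + \bigl[\nabla_\W \bar{\sF}(X^*,\mu) - \nabla_\W \bar{\sF}(X^*,\pi)\bigr] + \nabla_\W \bar{\sF}(X^*,\pi),
\]
and apply $\|a+b+c\|^2 \leq 3\|a\|^2 + 3\|b\|^2 + 3\|c\|^2$. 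Assumption~\ref{Assumption:Lipschitz}(i) applied with both measure slots equal to $\mu$ bounds the first bracket by $L_{\bar{u}}\|X-X^*\|$, and applied with both point slots equal to $X^*$ bounds the second bracket by $L_{\bar{\sF}}\W_2(\mu,\pi)$. Taking expectations, the first contributes $3L_{\bar{u}}^2 \E\|X-X^*\|^2 = 3L_{\bar{u}}^2 \W_2^2(\mu,\pi)$ (by optimality of the coupling), the second contributes $3L_{\bar{\sF}}^2 \W_2^2(\mu,\pi)$, and the third is $3\E\|\nabla_\W \bar{\sF}(X^*,\pi)\|^2$. Summing gives the claim.

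There is no genuine obstacle here; the only point worth flagging is the freedom to select the coupling in part (2), which makes the Lipschitz assumption yield a $\W_2^2$ bound rather than something weaker. Both arguments are short and self-contained once the optimal coupling is in hand.
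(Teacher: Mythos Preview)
Your proof is correct. Part (2) is essentially identical to the paper's argument: both take an optimal coupling $(X,X^*)$, invoke Assumption~\ref{Assumption:Lipschitz}(i) to split off $L_{\bar{u}}\|X-X^*\|$ and $L_{\bar{\sF}}\W_2(\mu,\pi)$, and then apply $(a+b+c)^2\le 3(a^2+b^2+c^2)$ before taking expectations.

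Part (1) differs in a small but pleasant way. The paper uses the two-sample identity $\Var(\mu)=\tfrac{1}{2}\E\|X-X'\|^2$ for i.i.d.\ $X,X'\sim\mu$, couples $(X,Y)$ and $(X',Y')$ optimally to $\pi$, expands $X-X'=(X-Y)+(Y-Y')+(Y'-X')$, and then does a short computation exploiting that $X-Y$ and $X'-Y'$ are i.i.d. Your route via the variational characterization $\Var(\mu)=\inf_c \E\|X-c\|^2$, centering at the mean of $\pi$, is more direct: it needs only a single coupling and the crude inequality $\|a+b\|^2\le 2\|a\|^2+2\|b\|^2$. Both arguments are elementary, but yours is shorter and avoids the i.i.d.\ bookkeeping.
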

\begin{proof}
Let $X,X'$ be i.i.d. from $\mu$. Let $Y,Y'$ be i.i.d. from $\pi$ such that $Y,X$ are optimally coupled and $Y',X'$ are optimally coupled. Now consider:
\begin{align*}
    \Var(\mu) &= \frac{1}{2}\E \|X-Y + Y-Y' + Y'-X'\|^2 \nonumber \\
    &\leq \E\|X-Y + Y'-X'\|^2  + \E \|Y-Y'\|^2 \nonumber \\
    &= \E\|X-Y + Y'-X'\|^2  + 2\Var(\pi) \nonumber \\
    &= 2\E\|X-Y\|^2 - 2\|\E X-\E Y\|^2 + 2 \Var(\pi) \nonumber \\
    &\leq 2 \W_2^2(\mu,\pi) + 2\Var(\pi) \,,
\end{align*}
where in the penultimate step we have used the fact that $X-Y,X'-Y'$ are i.i.d. Next, let $X^\ast \sim \pi$ be optimally coupled to $X\sim \mu$. By Assumption \ref{Assumption:Lipschitz} and triangle inequality, we have:
\begin{align*}
    \|\nabla_{\W}\bar{\sF}(X,\mu)\| &= \|\nabla_{\W}\bar{\sF}(X,\mu) - \nabla_{\W}\bar{\sF}(X^\ast,\pi) + \nabla_{\W}\bar{\sF}(X^\ast,\pi)\| \\
    &\leq  L_{\bar{\sF}} \W_2(\mu, \pi) + L_{\bar{u}}\|X - X^\ast\| + \|\nabla_{\W}\bar{\sF}(X^\ast,\pi) \| \,.
\end{align*}
By squaring, applying $(a+b+c)^2 \leq 3(a^2 + b^2 + c^2)$, and taking expectation proves the second statement of this lemma.
\end{proof}

\begin{lemma}\label{lem:var_control}
$V(u,\rho_0) := \E_{x\sim \rho_0}[\Var(u(x,Y,\xi))]$. Under Assumptions~\ref{Assumption:functional} and~\ref{Assumption:BoundVariance}, we have:
\begin{align}
    V(u,\rho_0) 
    &\lesssim C^{\Var} C_{\mathsf{LSI}} C_{\KL} \bar{\sE}(\rho_0) + C^{\Var}\Var(\pi) + C^{\Var}_{\nu} \\
    \Var(\rho_0) &\lesssim C_{\mathsf{LSI}}C_{\KL}\bar{\sE}(\rho_0)+ \Var(\pi)
\end{align}
\end{lemma}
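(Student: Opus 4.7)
The plan is to establish the $\Var(\rho_0)$ bound first, then leverage it to bound $V(u,\rho_0)$ via the assumption on the stochastic gradient fluctuations. Everything reduces to chaining known inequalities, so I don't anticipate a serious obstacle.

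First, I would apply Lemma~\ref{lem:control_from_pi} (part 1) to the pair $(\rho_0, \pi)$, yielding $\Var(\rho_0) \leq 2\W_2^2(\rho_0, \pi) + 2\Var(\pi)$. Next, since $\pi$ satisfies LSI with constant $C_{\mathsf{LSI}}$ (by Assumption~\ref{Assumption:functional}(ii)), the Otto--Villani theorem (Lemma~\ref{lem:OttoVillaniTheorem}) gives $\W_2^2(\rho_0, \pi) \leq 2 C_{\mathsf{LSI}} \KL(\rho_0 \,||\, \pi)$. Finally, the KL-growth condition in Assumption~\ref{Assumption:functional}(iii) yields $\KL(\rho_0 \,||\, \pi) \leq C_{\KL} \bar{\sE}(\rho_0)$. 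Chaining these three bounds gives $\Var(\rho_0) \leq 4 C_{\mathsf{LSI}} C_{\KL} \bar{\sE}(\rho_0) + 2 \Var(\pi)$, which is the second claim (up to a universal constant).

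For the bound on $V(u,\rho_0)$, I would start from the definition $\E[u(x, Y, \xi)] = -\nabla_\W \sF(x, \rho_0)$ (which holds for every fixed $x$, since $Y \sim \rho_0$ and $\xi \sim \nu$ are drawn from the correct joint law). This identifies
\[
\Var(u(x, Y, \xi)) = \E\bigl\|u(x, Y, \xi) - \E[u(x, Y, \xi)]\bigr\|^2 = \E\bigl\|u(x, Y, \xi) + \nabla_\W \sF(x, \rho_0)\bigr\|^2,
\]
so Assumption~\ref{Assumption:BoundVariance} directly gives, pointwise in $x$, $\Var(u(x, Y, \xi)) \leq C^{\Var} \Var(\rho_0) + C^{\Var}_{\nu}$. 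Taking $\E_{x \sim \rho_0}$ preserves the bound since the right-hand side is $x$-independent, giving $V(u, \rho_0) \leq C^{\Var} \Var(\rho_0) + C^{\Var}_{\nu}$.

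Substituting the first bound into this yields $V(u, \rho_0) \lesssim C^{\Var} C_{\mathsf{LSI}} C_{\KL} \bar{\sE}(\rho_0) + C^{\Var} \Var(\pi) + C^{\Var}_{\nu}$, which matches the stated inequality. The only subtlety is that Assumption~\ref{Assumption:BoundVariance} is stated for all $x \in \mR^d$ (not in expectation), but this is stronger than needed and makes the pointwise-to-expected passage immediate. No step presents a real obstacle; the main conceptual point is the interpretation of the variance assumption as a pointwise bound on $\Var(u(x, Y, \xi))$ that can be combined with the LSI-driven control on $\Var(\rho_0)$.
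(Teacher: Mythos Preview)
Your proposal is correct and follows essentially the same route as the paper: apply Assumption~\ref{Assumption:BoundVariance} to bound $V(u,\rho_0)$ by $C^{\Var}\Var(\rho_0)+C^{\Var}_\nu$, then control $\Var(\rho_0)$ via Lemma~\ref{lem:control_from_pi}, Otto--Villani (Lemma~\ref{lem:OttoVillaniTheorem}), and the KL-growth condition. The only difference is the order in which you present the two bounds; the paper derives the $V(u,\rho_0)$ chain first and then remarks that the $\Var(\rho_0)$ bound follows by the same reasoning.
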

\begin{proof}
    By Assumption~\ref{Assumption:BoundVariance}, we have:

\begin{align}
    V(u,\rho_0) &\leq C^{\Var}\Var(\rho_0) + C^{\Var}_{\nu} \nonumber \\
    &\lesssim C^{\Var} \W_2^{2}(\rho_0,\pi) + C^{\Var}\Var(\pi) + C^{\Var}_{\nu}\tag{By Lemma~\ref{lem:control_from_pi}} \nonumber \\
        &\lesssim C^{\Var} C_{\mathsf{LSI}} \KL(\rho_0||\pi) + C^{\Var}\Var(\pi) + C^{\Var}_{\nu} \tag{By Lemma~\ref{lem:OttoVillaniTheorem}}\nonumber \\
    &\lesssim C^{\Var} C_{\mathsf{LSI}} C_{\KL} \bar{\sE}(\rho_0) + C^{\Var}\Var(\pi) + C^{\Var}_{\nu} \tag{By Assumption~\ref{Assumption:functional}-\eqref{eq:kl_growth}}
\end{align}
Applying a similar reasoning to the bounds in Lemma~\ref{lem:control_from_pi}, we conclude the bound on $\Var(\rho_0)$
\end{proof}

\begin{lemma}\label{lem:stationarity_bounds}
    Consider a probability measure $\pi$ over $\R^d$, with density $\pi(x) \propto \exp(-F(x))$, which satisfies the LSI with constant $C_{\mathsf{LSI}}$. Then, $\Var(\pi) \leq d C_{\mathsf{LSI}}$.
\end{lemma}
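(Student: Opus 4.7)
The plan is to exploit the standard fact that a log-Sobolev inequality implies a Poincar\'e inequality with the same constant, and then apply it to the coordinate functions.

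First, I would recall the Rothaus-type argument: if $\pi$ satisfies the LSI with constant $C_{\mathsf{LSI}}$, then for every smooth $f$ with $\int f \, d\pi = 0$, applying LSI to the perturbed density $(1 + \varepsilon f)^2 \pi / Z_\varepsilon$ and expanding to second order in $\varepsilon$ yields the Poincar\'e inequality
\[
\mathsf{Var}_\pi(f) \;\leq\; C_{\mathsf{LSI}} \int \|\nabla f\|^2 \, d\pi.
\]
This is a classical implication and I would cite it rather than re-derive it.

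Next, I would apply this Poincar\'e inequality coordinate-wise. Taking $f(x) = x_i$ for each $i = 1,\dots,d$, the gradient satisfies $\|\nabla f\|^2 = 1$, so
\[
\mathsf{Var}_\pi(x_i) \;\leq\; C_{\mathsf{LSI}}.
\]
Summing over $i$ gives
\[
\mathsf{Var}(\pi) \;=\; \sum_{i=1}^d \mathsf{Var}_\pi(x_i) \;\leq\; d\, C_{\mathsf{LSI}},
\]
which is the desired bound.

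There is essentially no obstacle here: the only non-trivial input is the standard LSI-implies-Poincar\'e implication, which is a textbook result (see e.g.\ \cite{bakry2014analysis}). The assumption that $\pi$ has a density of the form $\exp(-F(x))$ is not explicitly used in the argument beyond ensuring that the coordinate functions are square-integrable against $\pi$, which follows automatically since the LSI together with a Gaussian reference implies sub-Gaussian tails and hence finite second moments.
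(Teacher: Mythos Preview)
Your proposal is correct and follows essentially the same approach as the paper: invoke the standard implication that LSI implies the Poincar\'e inequality with the same constant, then apply Poincar\'e to each coordinate function $f(x)=x_i$ and sum over $i$. The paper cites \cite{bakry2013analysis} for the Poincar\'e definition rather than sketching the Rothaus argument, but the substance is identical.
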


\begin{proof}
By \cite{bakry2013analysis}[Definition 4.2.1], the probability measure $\pi$ satisfies the Poincar\'e inequality with constant $C_{\mathsf{PI}} > 0$ if
    \[
        \Var_{\pi}(f) \leq C_{\mathsf{PI}} \E_\pi [\| \nabla f \|^2] \,,
    \]
for all $f:\mR^d\to \mR$ such that $f$ is continuously differentiable and $\nabla f$ is square integrable with respect to $\pi$, . For any $i \in \{1,2,\ldots,d\}$ and $x = (x_1,x_2,\ldots,x_d)$, let $f(x) = x_i$. Then $\| \nabla f(x) \| = 1$, for all $x\in \mR^d$. Thus, by Poincar\'e inequality, $\Var_\pi(f(X)) = \Var(X_i) \leq C_{\mathsf{PI}}$. Now, $\Var(\pi) = \sum_{i=1}^d \Var(X_i) \leq dC_{\mathsf{PI}}$. Finally, since $\pi$ satisfying the LSI implies that it also satisfies the Poincar\'e inequality with the same constant $C_{\mathsf{LSI}}$, we conclude that
    $\Var(\pi) \leq dC_{\mathsf{LSI}}$.
\end{proof}

\begin{lemma}\label{lem:lsi_lb}
    Consider a probability measure $\pi$ over $\R^d$ with density $\pi(x) \propto \exp(-F(x))$ satisfies the Logarithmic Sobolev Inequality with constant $C_{\mathsf{LSI}}$. Assume that $F$ is $L$-smooth (i.e, $\nabla F$ is L-Lipschitz) and $\nabla F$ is square integrable with respect to $\pi$. Then, $$C_{\mathsf{LSI}} \geq \frac{1}{L}$$
\end{lemma}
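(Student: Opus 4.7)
The plan is to derive the bound $C_{\mathsf{LSI}} \geq 1/L$ from a combination of (i) the Poincaré inequality implied by LSI, (ii) an integration-by-parts (Stein) identity for $\pi$, and (iii) the $L$-smoothness of $F$, which together give the classical identity $\E_\pi[\nabla F \nabla F^\top] = \E_\pi[\nabla^2 F] \preceq L\,\vI$.

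First, I would fix an arbitrary direction $\theta \in \R^d \setminus \{0\}$ and apply the Poincaré inequality (which follows from LSI with the same constant, as already used in Lemma \ref{lem:stationarity_bounds}) to the linear test function $x \mapsto \theta^\top x$. This gives $\Var_\pi(\theta^\top X) \leq C_{\mathsf{LSI}} \|\theta\|^2$. Next, using integration by parts against $\pi(x) \propto e^{-F(x)}$ with the centered test function $g(x) = \theta^\top x - \theta^\top \E_\pi X$, I obtain Stein's identity
\begin{equation*}
\theta \;=\; \E_\pi[\nabla g(X)] \;=\; \E_\pi\bigl[ g(X)\, \nabla F(X) \bigr]
\;=\; \E_\pi\bigl[(\theta^\top X - \theta^\top \E_\pi X)\, \nabla F(X)\bigr].
\end{equation*}
Taking the inner product with $\theta$ and applying Cauchy--Schwarz yields
\begin{equation*}
\|\theta\|^4 \;\leq\; \Var_\pi(\theta^\top X)\cdot \E_\pi\bigl[(\theta^\top \nabla F(X))^2\bigr].
\end{equation*}

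The second step is to bound the Fisher-type term. A second integration by parts gives $\E_\pi[\nabla F(X)\nabla F(X)^\top] = \E_\pi[\nabla^2 F(X)]$, and $L$-smoothness of $F$ (i.e.\ $\nabla F$ being $L$-Lipschitz) implies $\nabla^2 F \preceq L\,\vI$ almost everywhere, hence $\E_\pi[(\theta^\top \nabla F)^2] \leq L\|\theta\|^2$. Substituting both bounds into the displayed inequality above and dividing through by $\|\theta\|^4$ yields $1 \leq C_{\mathsf{LSI}} \cdot L$, i.e.\ $C_{\mathsf{LSI}} \geq 1/L$.

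The only delicate points will be justifying the two integrations by parts, which require vanishing of the boundary terms at infinity. The assumption that $\nabla F$ is square integrable against $\pi$ (together with $L$-smoothness of $F$, which forces at most quadratic growth of $F$ and hence sufficient decay of $\pi$) is exactly what is needed; this is the only step I expect to demand some care, and I would note it briefly rather than expand the technicalities. Everything else is a short calculation.
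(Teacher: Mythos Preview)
Your proposal is correct and follows essentially the same approach as the paper: both combine the Poincar\'e inequality (inherited from LSI) applied to linear test functions, Stein's integration-by-parts identity, and the smoothness bound $\E_\pi[\nabla F\nabla F^\top]=\E_\pi[\nabla^2 F]\preceq L\,\vI$ (the paper invokes this last step via \cite[Lemma~11]{vempalawibisono}, whose proof is exactly your second integration by parts). The only cosmetic difference is that the paper works in aggregate form---summing over coordinate directions to get $d\leq \sqrt{\Var(\pi)}\sqrt{\E_\pi\|\nabla F\|^2}\leq \sqrt{dC_{\mathsf{LSI}}}\sqrt{dL}$---whereas you carry out the identical computation along a single direction~$\theta$.
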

\begin{proof}
    Let $x_{\pi} \in \R^d$ be the mean of $\pi$ (a fixed vector). Without loss of generality, we assume $\pi(x) = e^{-F(x)}$. This requires adding a constant to $F(x)$ in the original definition, which does not change $\nabla F$. Using integration by parts, we have:
    \begin{align}
        d &= \int \langle x-x_{\pi},\nabla F(x)\rangle e^{-F(x)} dx \nonumber \\
        &\leq \sqrt{\Var(\pi) }\sqrt{\E_{\pi}\|\nabla F\|^2} \tag{Cauchy-Schwarz Inequality} \nonumber\\
        &\leq \sqrt{d C_{\mathsf{LSI}}}\sqrt{dL} \tag{Lemma~\ref{lem:stationarity_bounds} and Lemma~\ref{lem:energy_gradient}} 
    \end{align}
The claim follows from the above equation.
\end{proof}
\section{Proof of Technical Lemmas}

\subsection{Proof of Lemma~\ref{lem:errorbounds}}
\label{sec:errorbounds_proof}
\begin{proof}
    \begin{enumerate}
        \item First, we bound $\E_{Y,\xi} [\de_1(t)]$. Moving the expectation out of the inner product, applying Cauchy-Schwarz inequality, using the assumption that the function $(x,y) \to u(x,y,\xi)$ is $L_u$-Lipschitz, and the fact that $X_t = X_0 + tu(X_0,Y,\xi) + \sigma B_t$, we get:
\begin{align}
    &\E_{Y,\xi} [\de_1(t)]  = \E \langle \nabla_\W \bar{\sE}(X_t,\rho_t(\cdot |Y,\xi)),u(X_0,Y,\xi) - u(X_t,Y,\xi) \rangle \nonumber \\
    &\leq \E\left[\| \nabla_\W \bar{\sE}(X_t,\rho_t(\cdot |Y,\xi)) \| \cdot \|u(X_0,Y,\xi) - u(X_t,Y,\xi) \|\right] \nonumber \\
    &\leq L_u\E \left[\| \nabla_\W \bar{\sE}(X_t,\rho_t(\cdot |Y,\xi)) \| \cdot \|X_t - X_0 \| \right] \nonumber \\
    &= L_u\E \left[\| \nabla_\W \bar{\sE}(X_t,\rho_t(\cdot |Y,\xi)) \| \cdot \|tu(X_0,Y,\xi) + \sigma B_t \| \right] \nonumber \\
    &\leq L_u \sqrt{\E \| \nabla_\W \bar{\sE}(X_t,\rho_t(\cdot |Y,\xi)) \|^2} \sqrt{\E \|tu(X_0,Y,\xi) + \sigma B_t \|^2} \nonumber \\
    &= L_u \sqrt{\E \| \nabla_\W \bar{\sE}(X_t,\rho_t(\cdot |Y,\xi)) \|^2} \sqrt{t^2\E\|u(X_0,Y,\xi)\|^2 + \sigma^2 td} \label{Eq:GeneralBounsDE1Prelim} \,.
\end{align}

Next, we bound $\E\|u(X_0,Y,\xi)\|^2$. Let $(X^\ast,Y^\ast)\sim \pi\times \pi$ be optimally coupled to $(X_0,Y)\sim \rho_0 \times \rho_0$ in the 2-Wasserstein distance and independent of $\xi$. Thus, by the triangle inequality, the inequality $(a + b)^2 \leq 2a^2 + 2b^2$, by $L_u$-Lipschitz continuity of $(x,y) \to u(x,y,\xi)$, and the fact that $\W_2^2(\mu\times \mu, \nu\times \nu) \leq 2\W_2^2(\mu, \nu)$, for any probability measures $\mu, \nu$, we have:
\begin{align}
    \E \|u(X_0,Y,\xi)\|^2 &\leq 2\E \|u(X_0,Y,\xi) - u(X^\ast,Y^\ast,\xi) \|^2 + 2\E \|u(X^\ast,Y^\ast,\xi)\|^2 \nonumber \\
    &\leq 2L_u^2\E [\|X_0 - X^\ast\|^2 + \|Y - Y^\ast\|^2] + 2\E \|u(X^\ast,Y^\ast,\xi)\|^2 \nonumber \\
    &= 2L_u^2 \W_2^2(\rho_0 \times \rho_0 , \pi\times \pi) + 2\E \|u(X^\ast,Y^\ast,\xi)\|^2 \nonumber \\
    &\leq 4L_u^2 \W_2^2(\rho_0, \pi) + 2\E\|u(X^\ast,Y^\ast,\xi)\|^2 \label{Eq:GeneralBoundDE1Prelim1} \,.
\end{align}
Next, using $\E[u(x,Y^{*},\xi)] = -\nabla_\W \sF(x,\pi)$, for every $x$, we obtain: 
\begin{align}
    \E \|u(X^*,Y^*,\xi)\|^2 &= \E \| u(X^*, Y^*, \xi) + \nabla_\W \sF(X^*,\pi) \|^2 + \E \| \nabla_\W \sF(X^*,\pi) \|^2 \nonumber \\
    &= (\sigma^*)^2 + (G_{\pi})^2 \label{Eq;GeneralBoundDE1Prelim2} \,.
\end{align}
Now by using the bounds \eqref{Eq:GeneralBoundDE1Prelim1} and \eqref{Eq;GeneralBoundDE1Prelim2} in \eqref{Eq:GeneralBounsDE1Prelim} proves \eqref{Eq:Lemma2BoundDE_1}. 
 \item Next, we bound $\E_{Y,\xi}[\de_2(t)]$. Using the assumption $\|\nabla_\W \sF(x,\mu)-\nabla_\W\sF(x,\nu)\| \leq L_\sF \W_2(\mu,\nu)$, applying the Cauchy-Schwarz inequality and Jensen's inequality, we get:
\begin{align}
    \E_{Y,\xi}[\de_2(t)] &\leq L_\sF \E\left[\W_2(\rho_t(\cdot | Y,\xi) , \rho_0) \| \nabla_\W \bar{\sE}(X_t,\rho_t(\cdot |Y,\xi)) \|\right] \nonumber \\
    &\leq L_\sF\sqrt{\E[\W_2^2(\rho_t(\cdot | Y,\xi) , \rho_0)]} \cdot \sqrt{\E \| \nabla_\W \bar{\sE}(X_t,\rho_t(\cdot |Y,\xi)) \|^2}  \label{Eq:GeneralBoundDE2Prelim1} \,.
\end{align}
Next, we bound $\E[\W_2^2(\rho_t(\cdot | Y,\xi) , \rho_0)]$. Since $X_t|Y,\xi \sim \rho_t(\cdot| Y,\xi)$, $X_0 \sim \rho_0$ and $X_t =  X_0 + t u(X_0,Y,\xi) + \sigma B_t$. Therefore, by the definition of the Wasserstein distance, we have:
\begin{align}
    \E [\W_2^2(\rho_t(\cdot | Y,\xi) , \rho_0)] &\leq \E[\E[\|  X_t - X_0 \|^2|Y,\xi]] \nonumber \\
    &= \E \| t u(X_0, Y, \xi) + \sigma B_t \|^2 \nonumber \\
    &= t^2 \E_{(X,Y,\xi) \sim \rho_{0}\times \rho_0\times \nu} \|u(X,Y,\xi)\|^2 + \sigma^2 td \nonumber \\
    &\leq 4t^2L_u^2 \W_2^2(\rho_0, \pi) + 2t^2( (\sigma^*)^2 + (G_{\pi})^2) + \sigma^2 td \label{Eq:GeneralBoundDE2Prelim2}\,,
\end{align}
where the last inequality follows from \eqref{Eq:GeneralBoundDE1Prelim1} and \eqref{Eq;GeneralBoundDE1Prelim2}. By plugging the bound in \eqref{Eq:GeneralBoundDE2Prelim2} into \eqref{Eq:GeneralBoundDE2Prelim1} proves \eqref{Eq:Lemma2BoundDE_2}.

\item Next, we bound $\E_{Y,\xi}[\se(t)]$. Define $\Theta(x,y,\xi) :=  u(x,y,\xi) + \nabla_\W\sF(x,\rho_0)$. Note that since $\E[u(x,Y,\xi)] = -\nabla_\W \sF(x,\pi)$, for every $x$, we have:
\begin{align}
    \E[\nabla_x \cdot \Theta(x,Y,\xi)]  &= \E [\nabla_x \cdot u(x,Y,\xi) +  \nabla_x \cdot \nabla_\W \sF(x,\rho_0)] \nonumber \\
    &= \nabla_x \cdot \E [u(x,Y,\xi) +  \nabla_\W \sF(x,\rho_0)] = 0 \label{Eq:GBSEPrelim1} \,.
\end{align}
Here, we have exchanged the integral and derivative using the dominated convergence theorem along with Lipchitz continuity of $u,\nabla_\W\sF$. 
Using $\nabla_\W \bar{\sE}(x,\rho_t(\cdot |y,\xi)) = \nabla_\W \bar{\sF}(x,\rho_t(\cdot |y,\xi)) +\frac{\sigma^2}{2}\nabla \log \rho_t(x|y,\xi)$ and integrating by parts, we get:
\begin{align}
    \E_{Y,\xi} [\se(t)] &= \E \innerrho{\rho_t(\cdot |Y,\xi)}{\nabla_\W \bar{\sF}(\cdot,\rho_t(\cdot |Y,\xi))}{ \Theta(\cdot,Y,\xi)} \nonumber \\
    &+ \frac{\sigma^2}{2} \E\int \rho_t(x | Y,\xi) \langle \nabla_x \log \rho_t(x | Y,\xi) \,,\, \Theta(x,Y,\xi) \rangle dx \nonumber \\
    &= \E \innerrho{\rho_t(\cdot |Y,\xi)}{\nabla_\W \bar{\sF}(\cdot,\rho_t(\cdot |Y,\xi))}{ \Theta(\cdot,Y,\xi)}
    \nonumber \\
    &+ \frac{\sigma^2}{2} \E\int \langle \nabla_x \rho_t(x | y,\xi) \,,\, \Theta(x,y,\xi) \rangle dx \text{, \quad \quad (since $\nabla \log p_t = \tfrac{\nabla p_t}{p_t}$)} \nonumber \\
    &= \E \innerrho{\rho_t(\cdot |Y,\xi)}{\nabla_\W \bar{\sF}(\cdot,\rho_t(\cdot |Y,\xi))}{ \Theta(\cdot,Y,\xi)}
    \nonumber \\
    &- \frac{\sigma^2}{2} \E\int (\nabla_x \cdot \Theta(x,Y,\xi)) \rho_t(x | Y,\xi) dx \text{, \quad \quad \quad (integration by parts)}\label{Eq:GBSEPrelim2} \,.
\end{align}
Now we bound the term $\E\int (\nabla_x \cdot \Theta(x,Y,\xi)) \rho_t(x | Y,\xi) dx$. From \eqref{Eq:GBSEPrelim1} it follows that
\begin{align*}
    \E\int (\nabla_x \cdot \Theta(x,Y,\xi)) \rho_t(x | y,\xi) dx &= \E\int (\nabla_x \cdot \Theta(x,Y,\xi)) (\rho_t(x | Y,\xi) - \rho_t(x)) dx \,.
\end{align*}
Since the functions $x\to u(x,y,\xi)$ and $x\to \nabla_{\W}\sF(x,\mu)$ are continuously differentiable and $L_u$-Lipschitz continuous, $\Theta(x,y,\xi)$ is 2$L_u$-Lipschitz continuous and continuously differentiable. Thus $|\nabla_x \cdot \Theta(x,y,\xi)| \leq 2dL_u$ uniformly. By noting $\frac{1}{2}\int |\rho_t(x | y,\xi) - \rho_t(x)| dx$ is the total-variation distance between $\rho_t(\cdot| y,\xi)$ and $\rho_t$, and  applying Pinsker's inequality, we get:
\begin{align}
    \bigr|\E\int (\nabla \cdot \Theta(x,Y,\xi)) (\rho_t(x | Y,\xi) - \rho_t(x)) dx\bigr| &\leq 
    \E\int |\nabla \cdot \Theta(x,Y,\xi)| \cdot |\rho_t(x | Y,\xi) - \rho_t(x)| dx \nonumber \\
    &\leq 4dL_u \E[\mathsf{TV}(\rho_t,\rho_t(\cdot |Y,\xi))]\nonumber \\
    &\leq 4dL_u \E\sqrt{\frac{1}{2}\KL(\rho_t(\cdot \mid Y,\xi)  \,||\, \rho_t)} \nonumber \\
    &= \sqrt{8}dL_u \E\sqrt{\KL(\rho_t(\cdot \mid Y,\xi)  \,||\, \rho_t)} \label{Eq:GBSEPrelim3}\,.
\end{align}
Since $\rho_t = \E [\rho_t(\cdot \mid Y,\xi)]$, we consider $Y',\xi'$ to be an i.i.d. copy of $(Y,\xi)$ respectively and to be independent of $X_0$. Since the KL divergence functional is convex jointly in its arguments, we have:
\begin{align}
\KL(\rho_t(\cdot \mid Y,\xi) \,||\, \rho_t) &= \KL(\rho_t(\cdot \mid Y,\xi) \,||\, \E[\rho_t(\cdot \mid Y',\xi')]) \nonumber \\
&\leq \E\left[\KL(\rho_t(\cdot \mid Y,\xi) \,||\, \rho_t(\cdot \mid Y',\xi')) |Y,\xi\right] \label{Eq:GBSEPrelim4}\,.
\end{align}
Further conditioning on $X_0$ and taking an expectation yields $\rho_t(\cdot|Y,\xi) = \E[\rho_t(\cdot|X_0,Y,\xi)|Y,\xi]$. Another application of the joint convexity of the KL divergence functional in the above inequality gives
\begin{equation}\label{Eq:GBSEPrelim5}
    \E \left[\KL(\rho_t(\cdot|Y,\xi) \,||\,\rho_t(\cdot|Y',\xi'))|Y,\xi \right]\leq \E \left[\KL(\rho_t(\cdot\mid X_0,Y,\xi) \,||\, \rho_t(\cdot \mid X_0, Y',\xi'))|Y,\xi \right] \,.
\end{equation}
Since $X_t =  X_0 + t u(X_0,Y,\xi) + \sigma B_t$,
we have $\rho_t(\cdot|X_0,Y,\xi) = \sN(\mu_1, \sigma^2 tI)$, $\rho_t(\cdot| X_0,Y',\xi') = \sN(\mu_2, \sigma^2 tI)$, where $\mu_1 := X_0 + t u(X_0, Y, \xi), \mu_2 := X_0 + t u(X_0, Y', \xi')$. Using the formula for KL-divergence between multivariate normal distributions in \cite{duchi2007derivations} yields:
\begin{align}
    \KL(\rho_t(\cdot\mid X_0,Y,\xi) \,||\, \rho_t(\cdot \mid X_0, Y',\xi')) &= \frac{\| \mu_1 - \mu_2 \|^2}{2\sigma^2t} \nonumber \\
    &= \frac{t}{2\sigma^2}\| u(X_0,Y,\xi) - u(X_0,Y',\xi') \|^2 \label{Eq:GBSEPrelim6}\,.
\end{align}
From the bounds in \eqref{Eq:GBSEPrelim4}, \eqref{Eq:GBSEPrelim5}, \eqref{Eq:GBSEPrelim6}, and by applying Jensen's inequality to the outer expectation, we obtain
\begin{align}
    \E \sqrt{\KL(\rho_t(\cdot \mid Y,\xi)  \,||\, \rho_t)} &\leq \E \sqrt{ \frac{t}{2\sigma^2} \E\bigr[\| u(X_0,Y,\xi) - u(X_0,Y',\xi') \|^2\bigr|Y,\xi\bigr]} \nonumber \\
    &\leq \frac{\sqrt{t}}{\sigma} \sqrt{\frac{1}{2}\E \| u(X_0,Y,\xi) - u(X_0,Y',\xi') \|^2} \nonumber \\
    &\leq \frac{\sqrt{t}}{\sigma} \sqrt{\E_{x\sim \rho_0}[\Var(u(x,Y,\xi))]}\label{Eq:GBSEPrelim7}\,.
\end{align}
Then by plugging the bound in \eqref{Eq:GBSEPrelim7} into \eqref{Eq:GBSEPrelim3}, we get:
\begin{equation} \label{Eq:GBSEPrelim8}
    \biggr|\E\int (\nabla_x \cdot \Theta(x,Y,\xi)) (\rho_t(x | Y,\xi) - \rho_t(x)) dx\biggr| \leq \frac{\sqrt{8}\sqrt{t} L_u d}{\sigma} \sqrt{\E_{x\sim \rho_0} [\Var(u(x,Y,\xi))]} \,.
\end{equation}
Next, we bound the first term in \eqref{Eq:GBSEPrelim2}. First, note that from \eqref{Eq:GBSEPrelim1}, we have:
\[
\E\int \langle \nabla_\W \bar{\sF}(x,\rho_t) \,,\, \Theta(x,Y,\xi) \rangle \rho_t(x) dx = 0 \,.
\]
Using this fact, we obtain:
\begin{align*}
    &\E \langle \nabla_\W \bar{\sF}(X_t,\rho_t(\cdot |Y,\xi)) \,,\, \Theta(X_t,Y,\xi) \rangle_{L_2(\rho_t(\cdot | Y,\xi))} \\
    &= \E \langle \nabla_\W \bar{\sF}(X_t,\rho_t(\cdot |Y,\xi)) \,,\, \Theta(X_t,Y,\xi) \rangle_{L_2(\rho_t(\cdot | Y,\xi))} - \E\int \langle \nabla_\W \bar{\sF}(x,\rho_t) \,,\, \Theta(x,Y,\xi) \rangle \rho_t(x) dx  \\
    &= \E \int \langle \nabla_\W \bar{\sF}(x,\rho_t(\cdot |Y,\xi)) \,,\, \Theta(x,Y,\xi) \rangle \rho_t(x|Y,\xi) dx\nonumber \\
    &\quad - \E\int \langle \nabla_\W \bar{\sF}(x,\rho_t) \,,\, \Theta(x,Y,\xi)\rangle \rho_t(x) dx\,.
\end{align*}
For a given $Y,\xi$, let $Z_1 \sim \rho_t(\cdot | Y,\xi)$ and $Z_2\sim \rho_t$ be optimally coupled in the 2-Wasserstein distance. By the Cauchy-Schwarz inequality we conclude that almost surely $Y,\xi$:
\begin{align}
    & \E_{x\sim \rho_t(\cdot | Y,\xi)} \langle \nabla_\W \bar{\sF}(x,\rho_t(\cdot |y,\xi)) \,,\, \Theta(x,Y,\xi) \rangle 
    - \E_{x\sim \rho_t} \langle \nabla_\W \bar{\sF}(x,\rho_t) \,,\, \Theta(x,Y,\xi) \rangle \nonumber \\
    &\quad = \E\langle \nabla_\W \bar{\sF}(Z_1, \rho_t(\cdot | y, \xi)) \,,\, \Theta(Z_1,Y,\xi) \rangle - \E\langle \nabla_\W \bar{\sF}(Z_2,\rho_t) \,,\, \Theta(Z_2,Y,\xi) \rangle \nonumber \\
    &\quad = \E\langle \nabla_\W \bar{\sF}(Z_1, \rho_t(\cdot | Y, \xi)) - \nabla_\W \bar{\sF}(Z_2, \rho_t) \,,\, \Theta(Z_1,Y,\xi) \rangle \nonumber \\
    &\quad + \E\langle \nabla_\W \bar{\sF}(Z_2, \rho_t) \,,\, \Theta(Z_1,y,\xi) - \Theta(Z_2,y,\xi)\rangle \nonumber \\
    &\quad \leq \E\big\| \nabla_\W \bar{\sF}(Z_1, \rho_t(\cdot | Y, \xi)) - \nabla_\W \bar{\sF}(Z_2, \rho_t) \big\| \cdot \big\|\Theta(Z_1,Y,\xi) \big\| \nonumber \\
    &\quad + \E \big\| \nabla_\W \bar{\sF}(Z_2, \rho_t) \big\| \cdot \big\|\Theta(Z_1,Y,\xi) - \Theta(Z_2,Y,\xi)\big\| \label{Eq:GBSEPrelim9} \,.
\end{align}
Next, note that by the definition of $\Theta(x, y, \xi)$, the assumption that $(x,y) \to u(x,y,\xi)$ and $x\to \nabla_{\W}\sF(x,\mu)$ are $L_u$-Lipschitz, and the assumption $\|\nabla_\W \bar{\sF}(x,\mu) - \nabla_\W \bar{\sF}(y,\nu)\| \leq L_{\bar{u}} \|x - y\| + L_{\bar{\sF}} \W_2(\mu,\nu)$, we obtain:
\[
    \big\|\Theta(X_1,Y,\xi) - \Theta(X_2,Y,\xi)\big\| \leq 2L_u \|X_1 - X_2\| \,,
\]
\[
 \big\| \nabla_\W \bar{\sF}(X_1, \rho_t(\cdot | Y, \xi)) - \nabla_\W \bar{\sF}(X_2, \rho_t) \big\| \leq L_{\bar{u}}\|X_1-X_2\| + L_{\bar{\sF}}\W_2(\rho_t, \rho_t(\cdot | Y,\xi)) \,.
\]
Using the above bounds in \eqref{Eq:GBSEPrelim9}, applying the Cauchy-Schwarz inequality, and the Jensen's inequality yields the following almost surely $Y,\xi$:
\begin{align*}
    & \E_{x\sim \rho_t(\cdot | Y,\xi)} \langle \nabla_\W \bar{\sF}(x,\rho_t(\cdot |Y,\xi)) \,,\, \Theta(x,Y,\xi) \rangle 
    - \E_{x\sim \rho_t} \langle \nabla_\W \bar{\sF}(x,\rho_t) \,,\, \Theta(x,Y,\xi) \rangle \\
    &\quad \leq \sqrt{2}(L_{\bar{u}}+L_{\bar{\sF}}) \W_2(\rho_t, \rho_t(\cdot | Y,\xi)) \sqrt{\E_{x\sim \rho_t(\cdot |Y,\xi)}\big\|\Theta(x,Y,\xi) \big\|^2 } \\&\quad 
    + 2L_u \W_2(\rho_t, \rho_t(\cdot | Y,\xi)) \sqrt{\E_{x\sim \rho_t}\big\| \nabla_\W \bar{\sF}(x, \rho_t) \big\|^2 } \,.
\end{align*}
Hence, by taking expectation with respect to $Y, \xi$ and another application of the Cauchy-Schwarz inequality, we obtain:
\begin{align}
    &\E\langle \nabla_\W \bar{\sF}(X_t,\rho_t(\cdot |Y,\xi)) \,,\, \Theta(X_t,Y,\xi) \rangle_{L_2(\rho_t(\cdot | Y,\xi))} \nonumber \\
    &\quad \leq 2\sqrt{\E [\W_2^2(\rho_t, \rho_t(\cdot | Y,\xi))]} \left[(L_{\bar{u}}+L_{\bar{\sF}}) \sqrt{\E\big\|\Theta(X_t,Y,\xi) \big\|^2} + L_u \sqrt{\E \big\| \nabla_\W \bar{\sF}(X_t, \rho_t) \big\|^2}\right] \,.\label{Eq:GBSEPrelim10}
\end{align}
Finally, by multiplying the bound in \eqref{Eq:GBSEPrelim8} by $\sigma^2/2$, adding the resultant to \eqref{Eq:GBSEPrelim10}, and using \eqref{Eq:GBSEPrelim2}
we prove \eqref{Eq:Lemma2BoundSE}.
\item Finally, we bound $\E_{y,\xi}[\le(t)]$. By applying the Cauchy-Schwartz inequality twice and the assumption $\|\nabla_\W \bar{\sF}(x,\mu)-\nabla_\W \sF(x,\mu)\| \leq L_l\W_2(\pi,\mu)$, we have almost surely $Y,\xi$:
\begin{align}
    \le(t) &= \E_{x\sim\rho_t(\cdot | Y,\xi)} \langle \nabla_\W \bar{\sE}(x,\rho_t(\cdot |Y,\xi)), \nabla_\W \bar{\sF}(x,\rho_t(\cdot |Y,\xi)) - \nabla_\W \sF(x,\rho_t(\cdot |Y,\xi) \rangle \nonumber \\
    &\leq \E_{x\sim \rho_t(\cdot | Y,\xi)} \left[ \| \nabla_\W \bar{\sE}(x,\rho_t(\cdot |Y,\xi)) \| \cdot \|\nabla_\W \bar{\sF}(x,\rho_t(\cdot |Y,\xi)) - \nabla_\W \sF(x,\rho_t(\cdot |Y,\xi) \| \right] \nonumber \\
    &\leq  L_l \E_{x\sim\rho_t(\cdot | Y,\xi)} \left[\W_2(\pi, \rho_t(\cdot | Y,\xi)) \| \nabla_\W \bar{\sE}(x,\rho_t(\cdot |Y,\xi)) \|\right] \nonumber \\
    &\leq L_l \sqrt{ \W_2^2(\pi, \rho_t(\cdot | Y,\xi))} \sqrt{\E_{x\sim\rho_t(\cdot | Y,\xi)} \| \nabla_\W \bar{\sE}(x,\rho_t(\cdot |Y,\xi)) \|^2} \label{Eq:GBLE} \,.
\end{align}
Next, by taking an expectation with respect to $(Y,\xi)\sim \rho_0\times \nu$ and applying the Jensen's inequality, we obtain \eqref{Eq:Lemma2BoundLE}.
    \end{enumerate}
\end{proof}

\subsection{Proof of Lemma~\ref{lem:descent_lemma}}
\label{subsec:descent_lemma_proof}
Before delving into the proof of Lemma~\ref{lem:descent_lemma}, we note the following bound on the stochastic error.

\begin{lemma}[Stochastic error bound]
\label{lem:Stochasticerrorbound}
Define
$V(u,\rho_0) := \E_{x\sim \rho_0}[\Var(u(x,Y,\xi))]$ and $G_{\mathsf{mod}}^2 := \E_{X^{*}\sim \pi}\|\nabla_{\W}\bar{\sF}(X^*,\pi)\|^2$. 
Let Assumptions~\ref{Assumption:Lipschitz},~\ref{Assumption:functional} and~\ref{Assumption:BoundVariance} hold. Assume $L_u t \leq 1$. Then for arbitrary $\beta > 0$, we have:

\begin{align*}
\E[\se(t)] &\lesssim (\sigma L_u d \sqrt{t}+t L_u G_{\mathsf{mod}})\sqrt{(C^{\Var}\Var({\pi}) + C_{\nu}^{\Var})}  + t(L_{\bar{u}} + L_{\bar{\sF}})(C^{\Var}\Var(\pi) + C_{\nu}^{\Var})\nonumber \\
&\quad + \frac{(\sigma L_u d \sqrt{t}+t L_u G_{\mathsf{mod}})^2C^{\Var}C_{\mathsf{LSI}}C_{\KL}}{\beta}  + (t C^{\Var} (L_{\bar{u}}+ L_{\bar{\sF}}) C_{\mathsf{LSI}}C_{\KL}+\beta)\bar{\sE}(\rho_0)
 \nonumber \\ &\quad +  t L^2_u (L_{\bar{u}}+ L_{\bar{\sF}}) C_{\mathsf{LSI}}C_{\KL}\E\bar{\sE}(\rho_t(\cdot |Y,\xi)) \,.
\end{align*}
\end{lemma}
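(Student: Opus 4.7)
The plan is to begin from the general bound~\eqref{Eq:Lemma2BoundSE} in Lemma~\ref{lem:errorbounds}, which already decomposes $\E[\se(t)]$ as a ``Gaussian'' piece $\sigma\sqrt{2t}\,L_u d\sqrt{V(u,\rho_0)}$ plus a ``transport'' piece $2\sqrt{\E\W_2^2(\rho_t,\rho_t(\cdot|Y,\xi))}\bigl[(L_{\bar u}+L_{\bar\sF})\sqrt{\E\|\Theta(X_t,Y,\xi)\|^2}+L_u\sqrt{\E\|\nabla_\W\bar\sF(X_t,\rho_t)\|^2}\bigr]$. I will bound each of the four factors in terms of $\bar\sE(\rho_0)$, $\E\bar\sE(\rho_t(\cdot|Y,\xi))$, $G_{\mathsf{mod}}$ and the deterministic constant $C^{\Var}\Var(\pi)+C^{\Var}_\nu$, and then group the resulting products using suitable Young and AM-GM splits.

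For the first factor, Lemma~\ref{lem:var_control} gives $V(u,\rho_0)\lesssim C^{\Var}C_{\mathsf{LSI}}C_{\KL}\bar\sE(\rho_0)+C^{\Var}\Var(\pi)+C^{\Var}_\nu$; combined with $\sqrt{a+b}\le\sqrt a+\sqrt b$ this splits $\sqrt{V(u,\rho_0)}$ into a piece proportional to $\sqrt{\bar\sE(\rho_0)}$ and a deterministic piece. The same decomposition of $\sqrt{\E\W_2^2(\rho_t,\rho_t(\cdot|Y,\xi))}$ follows from Lemma~\ref{lem:cond_wass_bound}. For $\sqrt{\E\|\Theta(X_t,Y,\xi)\|^2}$, I will use that $\Theta(\cdot,Y,\xi)$ is $2L_u$-Lipschitz in its first argument to reduce to $\sqrt{\E\|\Theta(X_0,Y,\xi)\|^2}\le\sqrt{C^{\Var}\Var(\rho_0)+C^{\Var}_\nu}$ from Assumption~\ref{Assumption:BoundVariance}, applying Lemma~\ref{lem:var_control} once more; the Lipschitz correction is of order $L_u\sqrt{\E\|X_t-X_0\|^2}$ which is subdominant under $L_u t\le 1$. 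For $\sqrt{\E\|\nabla_\W\bar\sF(X_t,\rho_t)\|^2}$, Lemma~\ref{lem:control_from_pi}(2) gives the bound $G_{\mathsf{mod}}+(L_{\bar u}+L_{\bar\sF})\W_2(\rho_t,\pi)$; the Wasserstein term is controlled by Otto--Villani (Lemma~\ref{lem:OttoVillaniTheorem}) combined with joint convexity of $\KL$ applied to $\rho_t=\E[\rho_t(\cdot|Y,\xi)]$ and the KL-Growth bound~\eqref{eq:kl_growth}, producing $\W_2^2(\rho_t,\pi)\le 2C_{\mathsf{LSI}}C_{\KL}\E\bar\sE(\rho_t(\cdot|Y,\xi))$.

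Substituting the four bounds back into~\eqref{Eq:Lemma2BoundSE} and expanding yields a short list of summands. The purely deterministic cross products immediately yield the first two summands of the target, $(\sigma L_u d\sqrt t+tL_uG_{\mathsf{mod}})\sqrt{C^{\Var}\Var(\pi)+C^{\Var}_\nu}$ and $t(L_{\bar u}+L_{\bar\sF})(C^{\Var}\Var(\pi)+C^{\Var}_\nu)$. All terms linear in $\sqrt{\bar\sE(\rho_0)}$ combine into a single expression of the form $(\sigma L_u d\sqrt t+tL_uG_{\mathsf{mod}})\sqrt{C^{\Var}C_{\mathsf{LSI}}C_{\KL}\bar\sE(\rho_0)}$, to which a single application of Young's inequality $ab\le a^2/(2\beta)+\beta b^2/2$ produces exactly the residual $\beta^{-1}(\sigma L_u d\sqrt t+tL_uG_{\mathsf{mod}})^2C^{\Var}C_{\mathsf{LSI}}C_{\KL}$ and the $\beta\bar\sE(\rho_0)$ contribution. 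Mixed cross products of $\sqrt{\bar\sE(\rho_0)}$ with $\sqrt{\E\bar\sE(\rho_t(\cdot|Y,\xi))}$ are split via the weighted AM-GM $\sqrt{xy}\le x/(2\lambda)+\lambda y/2$ with $\lambda=L_u$, which yields the coefficients $tC^{\Var}(L_{\bar u}+L_{\bar\sF})C_{\mathsf{LSI}}C_{\KL}$ on $\bar\sE(\rho_0)$ and $tL_u^2(L_{\bar u}+L_{\bar\sF})C_{\mathsf{LSI}}C_{\KL}$ on $\E\bar\sE(\rho_t(\cdot|Y,\xi))$ appearing in the statement. The main obstacle is consistent bookkeeping: choosing the Young split so that the deterministic residuals bundle into the compact quadratic $(\sigma L_u d\sqrt t+tL_uG_{\mathsf{mod}})^2$ rather than two separately-parameterized residuals, picking $\lambda=L_u$ in the weighted AM-GM so that the final $\E\bar\sE(\rho_t(\cdot|Y,\xi))$ coefficient has the correct $L_u^2$ dependence, and using the $L_ut\le 1$ hypothesis throughout to absorb the Lipschitz-correction terms arising from the replacement $X_t\mapsto X_0$ into the dominant ones.
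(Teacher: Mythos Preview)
Your plan is essentially the paper's: same starting point~\eqref{Eq:Lemma2BoundSE}, same auxiliary lemmas (Lemmas~\ref{lem:cond_wass_bound}, \ref{lem:var_control}, \ref{lem:control_from_pi}, \ref{lem:OttoVillaniTheorem}), and the same AM--GM/Young splits to produce the $\beta$-terms and the $tL_u^2(L_{\bar u}+L_{\bar\sF})C_{\mathsf{LSI}}C_{\KL}$ coefficient. The one place you deviate is the reduction of $\sqrt{\E\|\Theta(X_t,Y,\xi)\|^2}$, and there your ``subdominant'' claim is not justified as stated.

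You propose $\|\Theta(X_t,\cdot)-\Theta(X_0,\cdot)\|\le 2L_u\|X_t-X_0\|$ and then assert the correction $L_u\sqrt{\E\|X_t-X_0\|^2}$ is absorbed under $L_ut\le 1$. But $\E\|X_t-X_0\|^2=t^2\E\|u(X_0,Y,\xi)\|^2+\sigma^2td$, and $\E\|u(X_0,Y,\xi)\|^2=V(u,\rho_0)+\E_{x\sim\rho_0}\|\nabla_\W\sF(x,\rho_0)\|^2$. The Brownian piece $L_u^2\sigma^2td$ and the drift piece $L_u^2t^2\E\|\nabla_\W\sF(X_0,\rho_0)\|^2$ are not controlled by $C^{\Var}\Var(\rho_0)+C^{\Var}_\nu$; after multiplication by the prefactor $(L_{\bar u}+L_{\bar\sF})t\sqrt{V(u,\rho_0)}$ they generate residues (for instance a $G_\pi$ rather than $G_{\mathsf{mod}}$ contribution, or terms needing $(L_{\bar u}+L_{\bar\sF})t\le 1$) that do not fit the target bound under the sole hypothesis $L_ut\le 1$.

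The paper sidesteps this by coupling $X_t|Y,\xi$ not to $X_0$ but to an auxiliary $Z_2\sim\rho_t$ (the unconditional marginal). Since Assumption~\ref{Assumption:BoundVariance} gives $\E_{Y,\xi}\|\Theta(z,Y,\xi)\|^2\le C^{\Var}\Var(\rho_0)+C^{\Var}_\nu$ \emph{uniformly in $z$}, integrating over $Z_2\sim\rho_t$ preserves the bound, and the coupling cost is $\E\W_2^2(\rho_t(\cdot|Y,\xi),\rho_t)\le 2t^2V(u,\rho_0)$ from Lemma~\ref{lem:cond_wass_bound}, which \emph{does} absorb into $C^{\Var}\Var(\rho_0)+C^{\Var}_\nu$ under $L_ut\le 1$ with no Brownian or $G_\pi$ leftover. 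With that one substitution, the rest of your outline goes through and matches the paper.
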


\begin{proof}
 We define the following to reflect the result of Lemma~\ref{lem:errorbounds}:

\begin{align*}
    &\Theta(x,y,\xi) := u(x, y, \xi) + \nabla_\W \sF(x,\rho_0) \\
    &T_4 := 2\sqrt{\E [\W_2^2(\rho_t, \rho_t(\cdot | Y,\xi))]} \left[(L_{\bar{u}} + L_{\bar{\sF}}) \sqrt{\E \big\| \Theta(X_t,Y,\xi) \big\|^2} + L_u \sqrt{\E\big\| \nabla_\W \bar{\sF}(X_t, \rho_t) \big\|^2}\right] \\
    &T_5 := \sigma\sqrt{2t} L_u d \sqrt{\E_{x\sim \rho_0} [\Var(u(x,Y,\xi)]}  \,.
\end{align*}
From Lemma \ref{lem:errorbounds}, we have $\E [\se(t)] \leq T_4 + T_5$. We now bound each term of $T_4(t)$. By Lemma~\ref{lem:cond_wass_bound}, we have:

\begin{equation}
\label{Eq:T4Term1Bound}
    \sqrt{\E \W_2^2(\rho_t, \rho_t(\cdot | Y,\xi))} \leq \sqrt{2t^2 V(u,\rho_0)} \,.
\end{equation}

For the second term, given $Y,\xi$, let $Z_1 \sim \rho_t(\cdot | Y,\xi)$ and $Z_2\sim \rho_t$ be optimally coupled in the 2-Wasserstein distance. By Assumption \ref{Assumption:Lipschitz}, we obtain:
\begin{align*}
    \big\| \Theta(Z_1,Y,\xi) \big\| &\leq \big\| \Theta(Z_1,Y,\xi) - \Theta(Z_2,y,\xi) \big\| + \big\| \Theta(Z_2,Y,\xi) \big\| \\
    &\leq 2L_u\| Z_1 - Z_2 \| + \big\| \Theta(Z_2,Y,\xi) \big\| \,.
\end{align*}
Next, by squaring, applying the inequality $(a + b)^2 \leq 2a^2 + 2b^2$, taking expectation, applying Lemma~\ref{lem:cond_wass_bound}, we get almost surely $Y,\xi$:
\begin{align*}
    \E[\big\| \Theta(Z_1,Y,\xi) \big\|^2|Y,\xi] &\leq 8L_u^2 \W_2^2(\rho_0, \rho_t(\cdot | Y,\xi)) + 2\E[\big\|\Theta(Z_2,Y,\xi) \big\|^2|Y,\xi] \,.
\end{align*}
Applying Lemma~\ref{lem:cond_wass_bound}, Lemma~\ref{lem:var_control} and Assumption \ref{Assumption:BoundVariance}, after noting that $X_t|Y,\xi \stackrel{d}{=}Z_1$ we obtain:
\begin{align}
     \E\big\| \Theta(X_t,Y,\xi) \big\|^2 &\leq 16L_u^2 t^2 V(u,\rho_0) + 2\E[\E[\big\|\Theta(Z_2,Y,\xi) \big\|^2|Y,\xi]] \nonumber\\
     &\leq  16L_u^2 t^2 V(u,\rho_0) + 2C^{\Var}\Var(\rho_0) + 2C_{\nu}^{\Var} \tag{ By Assumption~\ref{Assumption:BoundVariance}} \nonumber\\
     &\lesssim C^{\Var} C_{\mathsf{LSI}} C_{\KL} \bar{\sE}(\rho_0) + C^{\Var}\Var(\pi) + C^{\Var}_{\nu} \,. \label{Eq:T4Term2Bound}
     \end{align}
Here we have used the assumption that $L_ut \leq 1$. The last step above follows by an application of Lemma~\ref{lem:var_control}. Now, additionally consider Assumptions~\ref{Assumption:functional} and~\ref{Assumption:BoundVariance}. We apply Lemma~\ref{lem:control_from_pi}, to conclude:
\begin{align*}
    \E_{x\sim \rho_t}\big\| \nabla_\W \bar{\sF}(x, \rho_t) \big\|^2 &\leq 3(L_{\bar{\sF}}^2+L_{\bar{u}}^2)\W_2^2(\rho_t, \pi) + 3 \E_{x\sim \pi}\big\| \nabla_\W \bar{\sF}(x, \pi) \big\|^2 \,. \nonumber\\
    &\leq 3(L_{\bar{\sF}}^2+L_{\bar{u}}^2)\E\W_2^2(\rho_t(\cdot |Y,\xi), \pi) + 3 \E_{x\sim \pi}\big\| \nabla_\W \bar{\sF}(x, \pi) \big\|^2 \,. \
\end{align*}
The last step follows from the usual convexity of $\W_2^2$. By applying the Talagrand's $T_2$-inequality implied by Lemma~\ref{lem:OttoVillaniTheorem} and Assumption \ref{Assumption:functional}-\eqref{eq:LSI} along with Assumption~\ref{Assumption:functional}-\eqref{eq:kl_growth}, we get:
\begin{align}\label{Eq:T4Term3Bound}
    \E_{x\sim \rho_t}\big\| \nabla_\W \bar{\sF}(x, \rho_t) \big\|^2 &\lesssim (L_{\bar{\sF}}^2 +L_{\bar{u}}^2)C_{\mathsf{LSI}} C_{\KL} \E\bar{\sE}(\rho_t(\cdot |Y,\xi)) +  \E_{x\sim \pi}\big\| \nabla_\W \bar{\sF}(x, \pi) \big\|^2 \,.
\end{align}
Let $\zeta(\rho_0):=\sqrt{C^{\Var}C_{\mathsf{LSI}} C_{\KL} \bar{\sE}(\rho_0) + C^{\Var}\Var(\pi) + C^{\Var}_{\nu}}$. We now use equations~\eqref{Eq:T4Term3Bound}, 
\eqref{Eq:T4Term2Bound} and ~\eqref{Eq:T4Term1Bound}, along with Lemma~\ref{lem:var_control} to bound $V(u,\rho_0)$ and hence $\E [\se(t)]$ as:
\begin{align*}
\E[\se(t)] &\lesssim  (\sigma L_u d \sqrt{t}+tL_u G_{\mathsf{mod}})\zeta(\rho_0)   + t(L_{\bar{u}} + L_{\bar{F}})\zeta^2(\rho_0) \nonumber \\ &\quad +  t L_u (L_{\bar{u}}+ L_{\bar{\sF}})\zeta(\rho_0)\sqrt{ C_{\mathsf{LSI}}C_{\KL}\E\bar{\sE}(\rho_t(\cdot |Y,\xi))} \,.
\end{align*}
Applying AM-GM inequality on $t L_u (L_{\bar{u}}+ L_{\bar{\sF}})\zeta(\rho_0)\sqrt{ C_{\mathsf{LSI}}C_{\KL}\E\bar{\sE}(\rho_t(\cdot |Y,\xi))}$, we conclude:
\begin{align*}
\E[\se(t)] &\lesssim  (\sigma L_u d \sqrt{t}+tL_u G_{\mathsf{mod}})\zeta(\rho_0)   + t(L_{\bar{u}} + L_{\bar{\sF}})(C^{\Var}\Var(\pi) + C_{\nu}^{\Var}) \nonumber \\ &\quad +  t L^2_u (L_{\bar{u}}+ L_{\bar{\sF}}) C_{\mathsf{LSI}}C_{\KL}\E\bar{\sE}(\rho_t(\cdot |Y,\xi)) + tC^{\Var} (L_{\bar{u}}+ L_{\bar{\sF}}) C_{\mathsf{LSI}}C_{\KL}\bar{\sE}(\rho_0) \,.
\end{align*}
Now, using the fact that $\sqrt{x+y} \leq \sqrt{x} + \sqrt{y}$ for every $x,y \geq 0$ on $\zeta(\rho_0)$, we conclude: 
\begin{align*}
(\sigma L_u d \sqrt{t}+tL_u G_{\mathsf{mod}})\zeta(\rho_0)  &\leq (\sigma L_u d \sqrt{t}+tL_u G_{\mathsf{mod}})\sqrt{(C^{\Var}\Var(\pi) + C_{\nu}^{\Var})} \nonumber \\
&\quad + (\sigma L_u d \sqrt{t}+tL_u G_{\mathsf{mod}})\sqrt{C^{\Var}C_{\mathsf{LSI}}C_{\KL}\bar{\sE}(\rho_0)} \,.
\end{align*}

Plugging this into the bound for $\E \se(t)$ above and applying the AM-GM inequality on $(\sigma L_u d \sqrt{t}+tL_u G_{\mathsf{mod}})\sqrt{C^{\Var}C_{\mathsf{LSI}}C_{\KL}\bar{\sE}(\rho_0)}$, we conclude the result.
\end{proof}

\begin{proof}[Proof of Lemma~\ref{lem:descent_lemma}]
Let $t \in [0,\eta]$. First, we consider the error terms $\de_1,\de_2,\se,\le$ in \eqref{eq:time_evolution} and obtain:
\begin{align}
    \frac{d}{dt}\bar{\sE}(\rho_t(\cdot | Y,\xi)) &= - \int dx\rho_t(x | Y,\xi)\|\nabla_{\W}\bar{\sE}(x,\rho_t(\cdot | Y,\xi))\|^2 + \de_1(t) + \de_2(t) + \se(t) + \le(t)  \nonumber \\
    &= - \frac{1}{2}\int dx\rho_t(x | Y,\xi)\|\nabla_{\W}\bar{\sE}(x,\rho_t(\cdot | Y,\xi))\|^2 + \mathsf{res}(t) \label{Eq:DescentLemmaPrelim1}\,,
\end{align}
where $\mathsf{res}(t):= \de_1(t) + \de_2(t) + \se(t) + \le(t) - \frac{1}{2}\int \rho_t(x | Y,\xi)\|\nabla_{\W}\bar{\sE}(x,\rho_t(\cdot | Y,\xi))\|^2 dx$. We begin by bounding the linearization error $\le(t)$ using the assumptions stated in this lemma. From \eqref{Eq:GBLE}, the observation that $\pi$ satisfies Talagrand's $T_2$-inequality by Lemma \ref{lem:OttoVillaniTheorem}, Assumption \ref{Assumption:functional}-\eqref{eq:kl_growth}, and the inequality $ab \leq \frac{a^2}{4} + b^2$ for all $a,b\in \mR$, we get:
\begin{align*}
    \le(t) &\leq L_l\sqrt{2C_{\mathsf{LSI}}} \sqrt{ \KL(\pi, \rho_t(\cdot | Y,\xi))} \sqrt{\E_{\rho_t(\cdot | Y,\xi)} \| \nabla_\W \bar{\sE}(x,\rho_t(\cdot | Y,\xi)) \|^2} \\
    &\leq  L_l\sqrt{2C_{\KL}C_{\mathsf{LSI}}} \sqrt{ \bar{\sE}(\rho_t(\cdot | Y,\xi))} \sqrt{\E_{\rho_t(\cdot | Y,\xi)} \| \nabla_\W \bar{\sE}(x,\rho_t(\cdot | Y,\xi)) \|^2} \\
    &\leq 2L_l^2 C_{\KL}C_{\mathsf{LSI}} \bar{\sE}(\rho_t(\cdot | Y,\xi)) + \frac{1}{4} \int \rho_t(x | Y,\xi)\|\nabla_{\W}\bar{\sE}(x,\rho_t(\cdot | Y,\xi))\|^2 dx \\
    &\leq 2L_l^2 C_{\KL}C_{\mathsf{LSI}} \bar{\sE}(\rho_t(\cdot | Y,\xi)) + \frac{1}{4} \int \rho_t(x | Y,\xi)\|\nabla_{\W}\bar{\sE}(x,\rho_t(\cdot | Y,\xi))\|^2 dx \,.
\end{align*}
By plugging the above inequality into \eqref{Eq:DescentLemmaPrelim1} and applying Assumption \ref{Assumption:functional}-\eqref{eq:PL} with $\mu = \rho_t(\cdot | Y,\xi)$, we get:
\begin{align}
    \frac{d}{dt}\bar{\sE}(\rho_t(\cdot | Y,\xi)) &\leq - \frac{1}{2}\int dx\rho_t(x | Y,\xi)\|\nabla_{\W}\bar{\sE}(x,\rho_t(\cdot | Y,\xi))\|^2 + 2L_l^2 C_{\KL}C_{\mathsf{LSI}} \bar{\sE}(\rho_t(\cdot | Y,\xi)) + \mathsf{res}_1(t) \nonumber \\ 
    &\leq -\frac{C_{\bar{\sE}}}{2}\bar{\sE}(\rho_t(\cdot | Y,\xi)) + 2L_l^2 C_{\KL}C_{\mathsf{LSI}} \bar{\sE}(\rho_t(\cdot | Y,\xi)) + \mathsf{res}_1(t) \nonumber \\
    &\leq -C'\bar{\sE}(\rho_t(\cdot | Y,\xi)) + \mathsf{res}_1(t) 
    \label{Eq:DescentLemmaPrelim2}\,,
\end{align}
where
\small
\[
\mathsf{res}_1(t) := \de_1(t) + \de_2(t) + \se(t) - \frac{1}{4}\int \rho_t(x | Y,\xi)\|\nabla_{\W}\bar{\sE}(x,\rho_t(\cdot | Y,\xi))\|^2 dx \,,\, C' := \frac{C_{\bar{\sE}}}{2} - 2L_l^2 C_{\KL}C_{\mathsf{LSI}} \,.
\]
\normalsize
From Equation~\eqref{Eq:DescentLemmaPrelim2} we get:
\begin{align}
\label{Eq:DescentLemmaPrelim31}
    \frac{d}{dt}\bar{\sE}(\rho_t(\cdot | Y,\xi)) &\leq -C'\bar{\sE}(\rho_t(\cdot | Y,\xi)) + \se(t) + \mathsf{res}_2(t) \,, 
\end{align}
where 
\[
\mathsf{res}_2(t) := \de_1(t) + \de_2(t) - \frac{1}{4}\int \rho_t(x | Y,\xi)\|\nabla_{\W}\bar{\sE}(x,\rho_t(\cdot | Y,\xi))\|^2 dx \,.
\]
The next step is to bound $\E [\se(t)]$. To simplify notation, we define:
\begin{align*}
A_1^{\se}(t,\beta) &:=  (\sigma L_u d \sqrt{t}+t L_u G_{\mathsf{mod}})\sqrt{(C^{\Var}\Var(\pi) + C_{\nu}^{\Var})}  + t(L_{\bar{u}} + L_{\bar{\sF}})(C^{\Var}\Var(\pi) + C_{\nu}^{\Var})\nonumber \\
&\quad + \frac{(\sigma L_u d \sqrt{t}+t L_u G_{\mathsf{mod}})^2C^{\Var}C_{\mathsf{LSI}}C_{\KL}}{\beta}  + (t C^{\Var} (L_{\bar{u}}+ L_{\bar{\sF}}) C_{\mathsf{LSI}}C_{\KL}+\beta)\bar{\sE}(\rho_0) \nonumber\\
A_2^{\se}(t) &:=  t L^2_u (L_{\bar{u}}+ L_{\bar{\sF}}) C_{\mathsf{LSI}}C_{\KL} \,.
\end{align*}
By Lemma~\ref{lem:Stochasticerrorbound}, we conclude that for arbitrary $\beta >0$, we have:
\begin{align}
    \E [\se(t)] &\lesssim A_1^{\se}(t,\beta) + A_2^{\se}(t) \E\bar{\sE}(\rho_t(\cdot |Y,\xi))\label{Eq:DescentLemmaPrelim4} \,,
\end{align}
 Using this notation in Equation~\eqref{Eq:DescentLemmaPrelim31}, we obtain:
\begin{align}
     \frac{d}{dt} \bar{\sE}(\rho_t(\cdot | Y,\xi)) &\leq (-C' + A_2^{\se}(t)) \bar{\sE}(\rho_t(\cdot | Y,\xi)) + \se(t)-A_2^{\se}(t)\bar{\sE}(\rho_t(\cdot | Y,\xi)) + \mathsf{res}_2(t) \nonumber \\
&\leq -C^{''}(\eta) \bar{\sE}(\rho_t(\cdot | Y,\xi)) + \se(t)-A_2^{\se}(t)\bar{\sE}(\rho_t(\cdot | Y,\xi)) + \mathsf{res}_2(t)
\label{Eq:DescentLemmaPrelim5}\,,
\end{align}
where $C^{''}(\eta) := \frac{C_{\bar{\sE}}}{2}-2L_l^2C_{\KL}C_{\mathsf{LSI}} - A_2^{\se}(\eta)$. Note that by our assumptions on $\eta$ and the parameter $L_l$ in the statement of this lemma, for a small enough $c_0$, we must have $C^{''}(\eta) \geq \frac{C_{\bar{\sE}}}{4}$. 

Next, by multiplying both sides of \eqref{Eq:DescentLemmaPrelim5} by $e^{C^{''}(\eta) t}$, we get:
\[
    \frac{d}{dt} \left[e^{C^{''}(\eta)t} \bar{\sE}(\rho_t(\cdot | Y,\xi)) \right] \leq e^{C^{''}(\eta)t}\left[ \se(t)-A_2^{\se}(t)\bar{\sE}(\rho_t(\cdot | Y,\xi))+ \mathsf{res}_2(t) \right] \,.
\]
Thus, by integrating on both sides and taking expectation (along with Fubini's theorem), we obtain:
\begin{align}
\label{Eq:DescentLemmaPrelim6}
    \E\bar{\sE}(\rho_\eta(\cdot | Y,\xi)) &\leq e^{-C^{''}(\eta)\eta}\bar{\sE}(\rho_0) + e^{-C^{''}(\eta)\eta} \int_0^\eta \E\left[\se(t)-A_2^{\se}(t)\bar{\sE}(\rho_t(\cdot | Y,\xi)) + \mathsf{res}_2(t) \right] e^{C^{''}(\eta)t} dt \,.
\end{align}
To simplify notation in the subsequent steps of the proof, we define:
\begin{align*}
    T_1(t) &:= \sqrt{ \E \| \nabla_\W \bar{\sE}(X_t,\rho_t(\cdot | Y,\xi)) \|^2} \\
    A(t) &:= L_u\sqrt{2t^2 \left( 2L_u^2 \W_2^2(\rho_0, \pi) + (\sigma^\ast)^2 + (G_{\pi})^2 \right) + \sigma^2 td} \\
    B(t) &:= L_{\sF}\sqrt{2t^2 \left( 2L_u^2 \W_2^2(\rho_0, \pi) + (\sigma^\ast)^2 + (G_{\pi})^2 \right) + \sigma^2 td} \,.
\end{align*}
Next, by using the inequality $ab \leq \frac{a^2}{8} + 2b^2$ for all $a,b\in \mR$, the bounds for $\de_1(t) + \de_2(t)$ using Lemma \ref{lem:errorbounds}, we get:

\begin{align}
   e^{-C^{''}(\eta)\eta} \int_0^\eta \E\left[ \mathsf{res}_2(t) \right] e^{C^{''}(\eta)t} dt 
    &= e^{-C^{''}(\eta)\eta} \int_0^\eta e^{C^{''}(\eta) t}  \left[ \E [\de_1(t) + \de_2(t)] - \tfrac{1}{4}\E T_1(t)^2 \right] dt \nonumber \\
    &\leq e^{-C^{''}(\eta)\eta} \int_0^\eta e^{C^{''}(\eta) t}  \left[\E T_1(t) (A(t) + B(t)) -\frac{1}{4}\E T_1(t)^2  \right] dt \nonumber \\
    &\leq  2e^{-C^{''}(\eta)\eta} \int_0^\eta e^{C^{''}(\eta) t}\E(A(t)^2 + B(t)^2) dt \nonumber \\
    &\leq 2\int_0^\eta \left[\E A(t)^2 + \E B(t)^2\right] dt \,.
\end{align}
Next, we bound $\int_0^\eta \left[A(t)^2 + B(t)^2 \right] dt$. To further simplify notation, we define:
\[
    T_2 :=  2\left( 2L_u^2 \W_2^2(\rho_0, \pi) + (\sigma^*)^2 + (G_{\pi})^2 \right) \,,\, T_3 := \sigma^2 d \,.
\]
 Therefore, integration yields:
\begin{align}
   e^{-C^{''}(\eta)\eta} \int_0^\eta \E\left[ \mathsf{res}_2(t) \right] e^{C^{''}(\eta)t} dt &\leq 2\int_0^\eta  \left[A(t)^2 + B(t)^2 \right] dt \nonumber \\ &\leq 2(L_u^2 + L_\sF^2) \int_0^\eta (T_2 t^2 + T_3 t)dt \nonumber \\
    &= 2(L_u^2 + L_\sF^2)(\tfrac{T_2\eta^3}{3} + \tfrac{T_3\eta^2}{2}) \nonumber 
    \\
    &\lesssim (L_u^2 + L_{\sF}^2)\left[\eta^3\left[L_u^2\W_2^2(\rho_0,\pi) + (\sigma^\ast)^2 + (G_{\pi})^2\right] + \sigma^2 d\eta^2\right] \nonumber \,.
\end{align}
Now, we apply Assumptions~\ref{Assumption:functional}-\eqref{eq:kl_growth} and Assumption~\ref{Assumption:functional}-\eqref{eq:LSI} along with Lemma~\ref{lem:OttoVillaniTheorem} to upper bound $\W_2^2(\rho_0,\pi)$ in the equation above to conclude:
\begin{align}
   e^{-C^{''}(\eta)\eta} \int_0^\eta \E\left[ \mathsf{res}_2(t) \right] e^{C^{''}(\eta)t} dt 
    &\lesssim (L_u^2 + L_{\sF}^2)\left[\eta^3\left[L_u^2C_{\mathsf{LSI}}C_{\KL}\bar{\sE}(\rho_0) + (\sigma^\ast)^2 + (G_{\pi})^2\right] + \sigma^2 d\eta^2\right]    
    \label{Eq:DescentLemmaPrelim7} \,.
\end{align}
Now we consider:
\begin{align}
    &e^{-C^{''}(\eta)\eta} \int_0^\eta e^{C^{''}(\eta) t} \E[\se(t)-A_2^{\se}(t)\bar{\sE}(\rho_t(\cdot | Y,\xi)) ] dt \nonumber \\
    &\lesssim e^{-C^{''}(\eta)\eta} \int_0^\eta e^{C^{''}(\eta) t} A^{\se}_1(t,\beta)dt\tag{From Equation~\eqref{Eq:DescentLemmaPrelim4}}\nonumber \\
    &\leq \eta A_1^{\se}(\eta,\beta) \nonumber \,,
\label{Eq:DescentLemmaPrelim8}
\end{align}
where we recall 
\begin{align}
    A_1^{\se}(\eta,\beta)   &=  (\sigma L_u d \sqrt{\eta}+\eta L_uG_{\mathsf{mod}})\sqrt{(C^{\Var}\Var{\pi} + C_{\nu}^{\Var})}  + \eta(L_{\bar{u}} + L_{\bar{\sF}})(C^{\Var}\Var{\pi} + C_{\nu}^{\Var})\nonumber \\
    &\quad + \frac{(\sigma L_u d \sqrt{\eta}+\eta L_uG_{\mathsf{mod}})^2C^{\Var}C_{\mathsf{LSI}}C_{\KL}}{\beta}  + (\eta C^{\Var} (L_{\bar{u}}+ L_{\bar{\sF}}) C_{\mathsf{LSI}}C_{\KL}+\beta)\bar{\sE}(\rho_0) \,.
\end{align}
First, note that since for every $x \geq 0$, we have $1-x \leq e^{-x} \leq 1-x+\frac{x^2}{2}$ and as demonstrated above $C^{''}(\eta) \geq \frac{C_{\bar{\sE}}}{4}$. With $\beta = c_0C_{\bar{\sE}}$ for some small enough constant $c_0$ and letting $\eta$ small enough as noted in the statement of this lemma, we conclude:
\[
e^{-C^{''}(\eta)\eta} + (L_u^2 + L_{\sF}^2)\eta^3L_u^2C_{\mathsf{LSI}}C_{\KL} + (\eta^2 C^{\Var} (L_{\bar{u}}+ L_{\bar{\sF}}) C_{\mathsf{LSI}}C_{\KL}+\eta \beta)\leq e^{-\frac{\eta C_{\bar{\sE}}}{8}} \,.
\]
We now plug \eqref{Eq:DescentLemmaPrelim7} and~\eqref{Eq:DescentLemmaPrelim8} along with the above bound into ~\eqref{Eq:DescentLemmaPrelim6} to conclude:
\begin{align*}
    \E\bar{\sE}(\rho_\eta(\cdot | Y,\xi)) &\leq e^{-\tfrac{\eta C_{\bar{\sE}}}{8}}\bar{\sE}(\rho_0) + C\gamma_3 \eta^3 + C\gamma_2\eta^2 + C\gamma_1\eta^{\frac{3}{2}} \,.
\end{align*}

\end{proof}

\section{Proof for Pairwise Interaction Potential}

\subsection{Proof of Lemma~\ref{lem:assumption_pairwise}}
\label{subsec:assumption_pairwise_proof}

\begin{proof}
Note that $\hat{G}$ satisfies the Lipschitz continuity property of Assumption \ref{Assumption:Lipschitz} with $L_u = L_V + L_W$ because for every $x_1, x_2,y_1,y_2 \in \mR^d$:
\[
    \|\hat{G}(x_1,y_1) - \hat{G}(x_2,y_2) \| \leq L_V \| x_1 - x_2 \| + L_W (\| x_1 - x_2 \|+\|y_1-y_2\|) \,.
\]
Next, the functional $\sF$ satisfies the Lipschitz continuity property of Assumption \ref{Assumption:Lipschitz}, with $L_u = L_V+L_W$ and $L_{\sF} = L_W$, because, if $Z_1\sim \mu$ and $Z_2\sim \nu$ are optimally coupled in the 1-Wasserstein distance, then by definition of $\nabla_\W \sF$ in \eqref{Eq:IEWassGradient}, for every $x,y \in \mR^d$ and $\mu, \nu \in \sP_2(\mR^d)$:
\begin{align*}
    &\| \nabla_\W \sF(x,\mu) - \nabla_\W \sF(y,\nu) \| \\
    &\leq  \|\nabla V(x) - \nabla V(y)\| + \|\int \nabla W(x-z)\mu(dz) - \int \nabla W(y-z)\nu(dz)\| \\
    &\leq L_V \|x - y\| \\
    &+ \| \int \nabla W(x-z)\mu(dz) - \int \nabla W(y-z)\mu(dz) + \int \nabla W(y-z)\mu(dz) - \int \nabla W(y-z)\nu(dz) \| \\
    &= (L_V + L_W) \|x - y\| + \E_{(Z_1,Z_2)\sim \mu\times \nu}\|\nabla W(y-Z_1) - \nabla W(y-Z_2) \| \\
    &= (L_V + L_W) \|x - y\| + L_W \W_1(\mu, \nu) \\
    &\leq (L_V + L_W) \|x - y\| + L_W \W_2(\mu, \nu) \,.
\end{align*}

Similarly, we consider $\nabla_{\W}\bar{\sF}(x,\mu) = \nabla V(x) + \nabla W\ast \pi (x)$ and conclude $L_{\bar{u}} = L_V + L_{W}$, $L_{\bar{\sF}} =0$ and $L_l = L_W$. Clearly, Assumption~\ref{Assumption:functional}-\eqref{eq:LSI} is satisfied with constant $C_{\mathsf{LSI}}$ since it is the same as Assumption~\ref{assumption:pairwise_lsi}. Notice that 
$\bar{\sE}(\mu) := \frac{\sigma^2}{2}\KL(\mu || \pi)$, and that $\nabla_{\W}\KL(\mu||\pi) = \FD(\mu||\pi)$, we conclude that Assumption~\ref{assumption:pairwise_lsi} implies Assumption~\ref{Assumption:functional}-\eqref{eq:PL} with $C_{\bar{\sE}} = \frac{\sigma^2}{C_{\mathsf{LSI}}}$. The choice of $\bar{\sE}$ also implies Assumption~\ref{Assumption:functional}-\eqref{eq:kl_growth} with $C_{\KL} = \frac{2}{\sigma^2} $.

Now consider  Assumption~\ref{Assumption:BoundVariance}. Note that by Jensen's inequality:
\begin{align*}
    \E_{Y\sim \rho_0} \| u(x,Y) + \nabla_\W \sF(x;\rho_0) \|^2 &= \E_{Y\sim \rho_0} \| \nabla W(x - Y) - \nabla W \ast \rho_0(x) \|^2 \\
    &\leq \E_{Y\sim \rho_0}\int \| \nabla W(x-Y) - \nabla W(x-z)\|^2 \rho_0(dz) \\
    &=  L_W^2\E_{(Y,Z) \sim \rho_0\times \rho_0} \| Y - Z \|^2 \\
    &= 2L_W^2\Var(\rho_0) \,.
\end{align*}
Hence, Assumption \ref{Assumption:BoundVariance} is satisfied with $C^{\Var} = 2L_W^2$ and $C^{\Var}_{\nu} = 0$.
\end{proof}

\subsection{Proof of Theorem~\ref{thm:main_pairwise}}
\label{subsec:main_pairwise_proof}
\begin{proof}
Using the notation in Theorem~\ref{thm:main_theorem}, we conclude that $\bar{\sF}(x,\pi) = \sF(x,\pi)$ for every $x \in \R^d$ and hence $G_{\pi} = G_{\mathsf{mod}}$. Since $\pi \propto \exp(- \frac{2\delta\sF(x,\pi)}{\sigma^2})$ and $\nabla_\W\sF(x,\pi) = \nabla_x \delta \sF(x,\pi)$, we apply Lemma~\ref{lem:energy_gradient} to conclude that $(G_{\pi})^2 := \E_{x\sim \pi}\|\nabla_\W \sF(x,\pi)\|^2 \leq \frac{d\sigma^2 (L_V+L_W) }{2}$. Applying Lemma~\ref{lem:stationarity_bounds}, we conclude $\Var(\pi) \leq C_{\mathsf{LSI}}d$. Using Assumption~\ref{Assumption:BoundVariance} instantiated to our case, we conclude $(\sigma^*)^2 \leq 2L_W^2C_{\mathsf{LSI}}d$. Under the assumption on $\eta$, all the requisite assumptions for Theorem~\ref{thm:main_theorem} are satisfied (after simplifying with the fact that $C_{\mathsf{LSI}} \geq \frac{\sigma^2}{2(L_V+L_W)}$ from Lemma~\ref{lem:lsi_lb}). Thus, we note that $\gamma_1,\gamma_2,\gamma_3$ in Theorem~\ref{thm:main_theorem} can be instantiated as 
 $$\gamma_1 \lesssim \sigma d^{\tfrac{3}{2}}\sqrt{C_{\mathsf{LSI}}}L_W(L_V+L_W)\,;\, \gamma_2 \lesssim (L_V+L_W)^2\sigma^2 d^2\,;\, \gamma_3 \lesssim (L_V+L_W)^3d\sigma^2 \,.$$

Here, we have Assumption~\ref{assumption:pairwise_linearize} that $L_W \leq \frac{\sigma^2}{4C_{\mathsf{LSI}}} $ to simplify the expressions. Invoking Theorem~\ref{thm:main_theorem}, using the fact that $\eta (L_V+L_W) < c_0$ by assumption in the statement of this theorem, and $C_{\mathsf{LSI}} \geq \frac{\sigma^2}{2(L_V+L_W)}$ from Lemma~\ref{lem:lsi_lb}, we conclude the result.

\end{proof}

\section{Proof for Mean Field Neural Networks}
\subsection{Proof of Lemma~\ref{lem:assumption_mean_field}}
\label{subsec:assumption_mean_field_proof}
\begin{proof}
 Note that $u = \hat{G}$ satisfies the Lipschitz continuity property of $x \to u(x,y,\xi)$ and $y \to u(x,y,\xi)$ in Assumption \ref{Assumption:Lipschitz} with $L_u = (B + R) (LR + N) + \lambda + M^2R^2$ because for every $x_1,x_2 \in \mR^d$:
\begin{align*}
    \| u(x_1,y,i) - u(x_2,y,i) \| &\leq (h(z_i,Y) - w_i) \|\nabla_x h(z_i,x_1) - \nabla_x h(z_i,x_2)\| + \lambda \| x_1 - x_2 \| \\
    &\leq (h(z_i,y) - w_i) (L\|z_i \|) \| x_1 - x_2 \| + \lambda \| x_1 - x_2 \| \\
    &\leq \left((B + R)LR + \lambda \right) \| x_1 - x_2 \| \,,
\end{align*}
where the last two inequalities follow from Assumptions  \ref{Assumption:MFNBoundedness} and \ref{Assumption:MFNLipschitz}. Similarly considering $\|u(x,y_1,i) - u(x,y_2,i)\|$, we conclude the result.
Similarly, we can take $L_{\sF} = M^2 R^2$ because, for every $x,y \in \mR^d$ and $\mu, \nu \in \sP_2(\mR^d)$:
\begin{align*}
    &\| \nabla_\W \sF(x,\mu) - \nabla_\W \sF(x,\nu) \| \\
    &= \bigg\|\frac{1}{m} \sum_{i=1}^m \left[\left(\int h(z_i,w)d\mu(w) - \int h(z_i,w)d\mu(v)\right)\nabla_x h(z_i,x)\right]\bigg\| \\
    &\leq \frac{1}{m}\sum_{i=1}^{m}\|\nabla_x h(z_i,x)\|M\|z_i\|\W_1(\mu,\nu)\\
    &\leq M^2R^2\W_2(\mu,\nu) \,.
\end{align*}

Next, since the proximal Gibbs measure satisfies the LSI by Assumption \ref{Assumption:MFNLSI}, the LSI condition in Assumption \ref{Assumption:functional} is satisfied with $C_{\mathsf{LSI}} = C_{\mathsf{LSI}}$. By \cite[Proposition 1]{nitanda2022convex}, we conclude that $C_{\KL} = \frac{2}{\sigma^2}$. Again, by \cite[Proposition 1]{nitanda2022convex}, we have: $\nabla_{\W} \bar{\sE}(x,\mu) = \frac{\sigma^2}{2} \nabla_x \log(\frac{\mu(x)}{\pi_{\mu}(x)})$. Therefore, have:
\begin{align}
\int \|\nabla_{\W}\bar{\sE}(x,\mu)\|^2d\mu(x) &= \frac{\sigma^4}{4}\FD(\mu||\pi_{\mu}) \nonumber \\
&\geq \frac{\sigma^4}{2C_{\mathsf{LSI}}}\KL(\mu||\pi_{\mu}) \tag{By Assumption~\ref{Assumption:MFNLSI}}\nonumber \\
&\geq \frac{\sigma^2}{C_{\mathsf{LSI}}} \bar{\sE}(\mu) \tag{By \cite[Proposition 1]{nitanda2022convex}} \,.
\end{align}

Therefore, we conclude that Assumption~\ref{Assumption:functional}-\eqref{eq:PL} holds with $C_{\bar{\sE}} = \frac{\sigma^2}{C_{\mathsf{LSI}}}$. Let $Y,I \sim \rho_0\times \nu$ and let $Y',I'$ be an independent copy of $Y,I$. By Assumptions \ref{Assumption:MFNBoundedness}, \ref{Assumption:MFNLipschitz} we have $\|u(x,y,i) + \nabla_{\W}\sF(x,\rho_0)\| \leq 2(B+R)MR$ almost surely. Thus, the mean-field neural network case satisfies Assumption \ref{Assumption:BoundVariance} with $C^{\Var} = 0$ and $C^{\Var}_\nu = 4(B+R)^2M^2R^2$.

\end{proof}

\subsection{Proof of Theorem~\ref{thm:main_mean_field}}
\label{subsec:main_mean_field_proof}
\begin{proof}
      Under the parameter correspondence established in Lemma~\ref{lem:assumption_mean_field}, with our choice of $\eta$, we conclude that the conditions for Theorem~\ref{thm:main_theorem} are satisfied (once considered with the fact that $C_{\mathsf{LSI}} \geq \frac{\sigma^2}{2L_u}$ from Lemma~\ref{lem:lsi_lb}). Since $\pi \propto \exp(- \frac{2\delta\sF(x,\pi)}{\sigma^2})$ and $\nabla_\W\sF(x,\pi) = \nabla_x \delta \sF(x,\pi)$, we apply Lemma~\ref{lem:energy_gradient} to conclude that $(G_{\pi})^2 = G_{\mathsf{mod}}^2:= \E_{x\sim \pi}\|\nabla_\W \sF(x,\pi)\|^2 \leq \frac{d\sigma^2 L_u }{2}$. Using Assumption~\ref{Assumption:BoundVariance} instantiated to our case, we conclude $(\sigma^*)^2 \leq 4M^2R^2(B+R)^2$.
    
    Instantiating the quantities $\gamma_1,\gamma_2$ and $\gamma_3$ found in Theorem~\ref{thm:main_theorem} to the case of mean field neural networks, we have:
    \[
    \gamma_1 \lesssim \sigma d L_u M R (B+R) 
        \,,\,\gamma_2 \lesssim \sigma^2 L^2_u d + L_uM^2R^2(B+R)^2 
        \,,\,\gamma_3 \lesssim \sigma^2 L^3_u d + L_u^2M^2R^2(B+R)^2 \,.
    \]
    We then apply Theorem~\ref{thm:main_theorem} and simplify using the fact that $\eta L_u \leq c_0$, and $C_{\mathsf{LSI}} \geq \frac{\sigma^2}{2L_u}$ from Lemma~\ref{lem:lsi_lb} to conclude the result. 

\end{proof}

\end{document}